\definecolor{ForestGreen}{rgb}{0.13, 0.55, 0.13}
\newcommand{\cmark}{{\color{ForestGreen} \ding{51}}}%
\newcommand{\xmark}{{\color{red} \ding{55}}}%
\newcommand{\R}{\mathbb{R}}
\DeclareMathOperator*{\argmin}{arg\,min}
\DeclareMathOperator*{\argmax}{arg\,max}
\newtheorem{theorem}{Theorem}
\newtheorem{lem}[theorem]{Lemma}
\newtheorem{prop}[theorem]{Proposition}
\newenvironment{proof}{\paragraph{Proof:}}{\hfill$\square$}
\newcommand{\perm}{\mathbb{P}}
\newcommand{\orth}{\mathbb{O}}
\newcommand{\proj}{\operatorname{proj}}
\newcommand{\diag}{\operatorname{diag}}
\newcommand{\matI}{\mathbf{I}}
\newcommand{\onevec}{\mathbf{1}}
\newcommand{\shape}{\mathcal{X}}
\newcommand{\fm}{\mathcal{C}}
\newcommand{\dimLb}{b}
\renewcommand{\paragraph}{\textbf}
\def\rot{\rotatebox}
\definecolor{green(ryb)}{rgb}{0.4, 0.69, 0.2}
\begin{document}

\title{Isometric Multi-Shape Matching}

\author{ {Maolin Gao$^\dagger$ ~~~~~ Zorah L{\"a}hner$^\dagger$ ~~~~~ Johan Thunberg$^\ddagger$ ~~~~~ Daniel Cremers$^\dagger$ ~~~~~ Florian Bernard$^\dagger$}\\
$^\dagger$ Technical University of Munich\\
$^\ddagger$ Halmstad University
}

\makeatletter
\let\@oldmaketitle\@maketitle%
\renewcommand{\@maketitle}{\@oldmaketitle%
  \centerline{%
  \begin{minipage}{\linewidth}
  \centering
  \begin{tabular}{c|c|c}
        \adjustbox{valign=t}{\definecolor{bostonuniversityred}{rgb}{0.8, 0.0, 0.0}
\begin{tikzpicture}[scale=0.87, transform shape]%
\node[anchor=south west,inner sep=0] (image) at (0,0) {\includegraphics[width=0.4\textwidth]{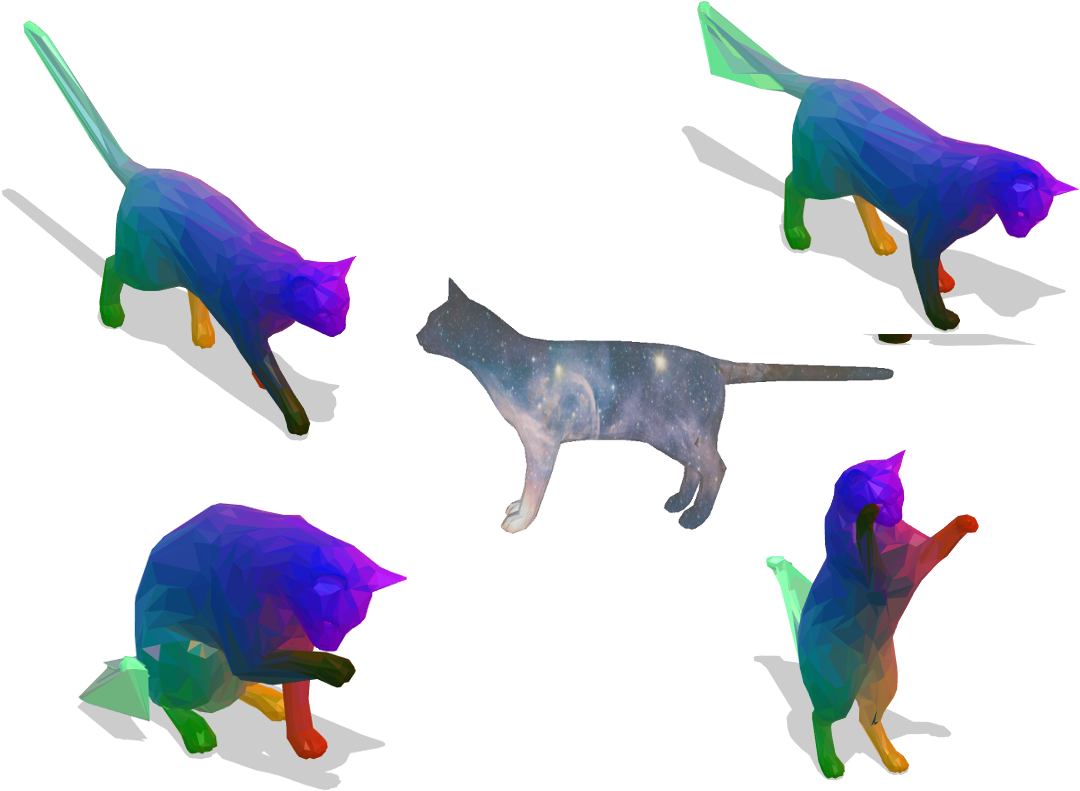}};
    \begin{scope}[x={(image.south east)},y={(image.north west)}]
    \end{scope}
    \node [anchor=west] (U1) at (2.8,3.3) {};
    \node [anchor=west] (U2) at (3.8,3) {};
    \node [anchor=west] (U3) at (2.8,2) {};
    \node [anchor=west] (U4) at (4.3,1.7) {};
    \node [anchor=west] (C1) at (0.4,4.7) {};
    \node [anchor=west] (C2) at (5,4) {};
    \node [anchor=west] (C3) at (2.2,0.8) {};
    \node [anchor=west] (C4) at (4.9,0.2) {};
    
    \node (C) at (0.5,4.9) {};
    \node (B) at (3.5,3.3) {};
    \node (A) at (5.1,4.4) {};
    \draw [very thick,->] (C1) edge[bend left=40] node [below left] {$P_1 $} (U1);
    \draw [very thick,->] (C2) edge[bend right=30] node [below right] {$P_2$} (U2);
    \draw [very thick,->] (C3) edge[bend right=30] node [right] {$P_3$} (U3);
    \draw [very thick,->] (C4) edge[bend left=30] node [left] { $P_4$} (U4);
    \draw [very thick,->,bostonuniversityred] (C) to[out=10, in=120] node [above right,text width=1.5cm,pos=0.6] {\large $P_1P_2^\top$ } (B.center) to[out=60, in=-170] (A);
\end{tikzpicture}%
}  &
        \adjustbox{valign=t}{%
        \begin{tabular}{ccc}%
          \includegraphics[width=.088\linewidth]{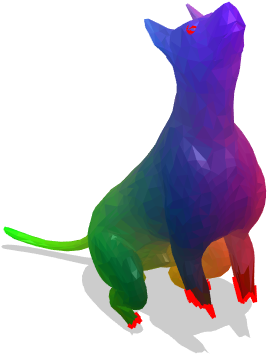} & \includegraphics[width=.088\linewidth]{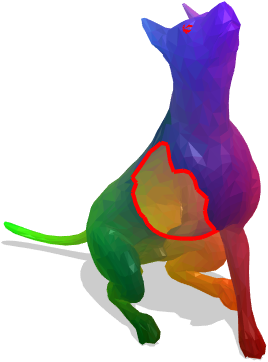}\\%
          \includegraphics[width=.088\linewidth]{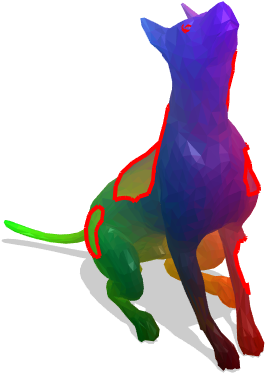} & \includegraphics[width=.088\linewidth]{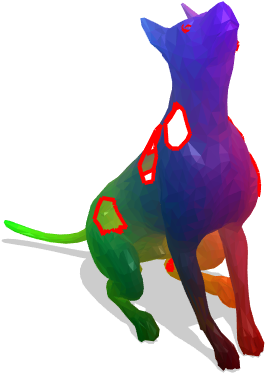}  
        \end{tabular}%
        } ~& %
        \adjustbox{valign=t}{
        \begin{tabular}{cc}%
          \ & \ \\
          \includegraphics[width=.19\linewidth]{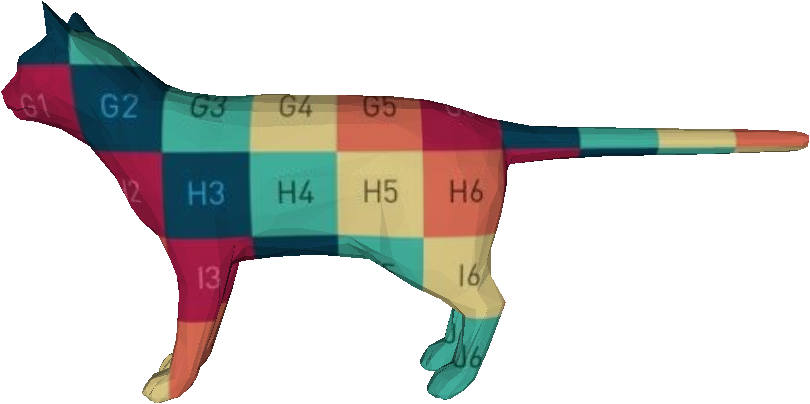} &
          \includegraphics[width=.13\linewidth]{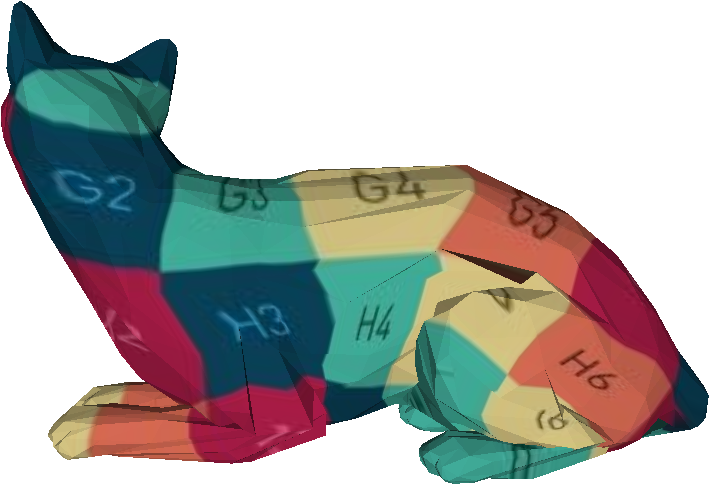}\\%
          \includegraphics[width=.19\linewidth]{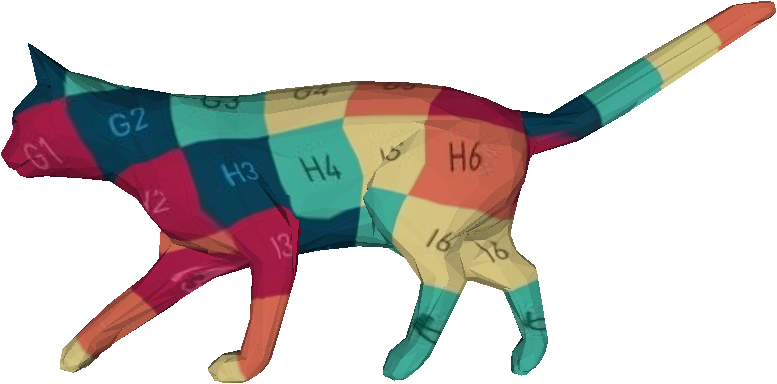} &
          \includegraphics[width=.13\linewidth]{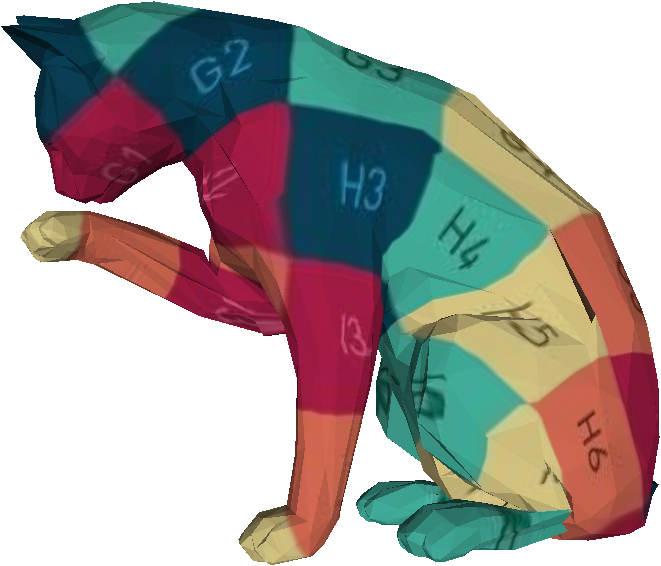}
        \end{tabular}%
        }
    \end{tabular}
  \end{minipage}
  }%
    \captionof{figure}{\textbf{Left:} We present a novel approach for isometric multi-shape matching based on matching each shape to a (virtual) universe shape (shown semi-transparent). Our formulation represents  point-to-point correspondences between shapes $i$ and $j$ as the composition of the shape-to-universe permutation matrix $P_i$ and the universe-to-shape permutation matrix $P_j^T$. By doing so, the pairwise matchings $P_{ij} = P_iP_j^\top$ are by construction cycle-consistent. \textbf{Middle:} Our formulation successfully solves isometric multi-matching of partial shapes.  \textbf{Right:} Due to the cycle-consistency we can use our correspondences to faithfully transfer textures across a shape collection.
    }
    \label{fig:teaser}
  {}\bigskip}%
\makeatother

\newcommand%
  \centerline{%
  \begin{minipage}{\linewidth}
  \centering
  \begin{tabular}{c|c|c}
        \adjustbox{valign=t}{}  &
        \adjustbox{valign=t}{%
        \begin{tabular}{ccc}%
          \includegraphics[width=.088\linewidth]{figures/partial/part10_target.png} & \includegraphics[width=.088\linewidth]{figures/partial/part2_target.png}\\%
          \includegraphics[width=.088\linewidth]{figures/partial/part3_target.png} & \includegraphics[width=.088\linewidth]{figures/partial/part7_target.png}  
        \end{tabular}%
        } ~& %
        \adjustbox{valign=t}{
        \begin{tabular}{cc}%
          \ & \ \\
          \includegraphics[width=.19\linewidth]{figures/qual/cat00.png} &
          \includegraphics[width=.13\linewidth]{figures/qual/cat03.png}\\%
          \includegraphics[width=.19\linewidth]{figures/qual/cat02.png} &
          \includegraphics[width=.13\linewidth]{figures/qual/cat01.png}
        \end{tabular}%
        }
    \end{tabular}
  \end{minipage}
  }%
    \captionof{figure}{\textbf{Left:} We present a novel approach for isometric multi-shape matching based on matching each shape to a (virtual) universe shape (shown semi-transparent). Our formulation represents  point-to-point correspondences between shapes $i$ and $j$ as the composition of the shape-to-universe permutation matrix $P_i$ and the universe-to-shape permutation matrix $P_j^T$. By doing so, the pairwise matchings $P_{ij} = P_iP_j^\top$ are by construction cycle-consistent. \textbf{Middle:} Our formulation successfully solves isometric multi-matching of partial shapes.  \textbf{Right:} Due to the cycle-consistency we can use our correspondences to faithfully transfer textures across a shape collection.
    }
    \label{fig:teaser}
  {%
  \centerline{%
  \begin{minipage}{\linewidth}
  \centering
  \begin{tabular}{c|c|c}
        \adjustbox{valign=t}{\definecolor{bostonuniversityred}{rgb}{0.8, 0.0, 0.0}
\begin{tikzpicture}[scale=0.87, transform shape]%
\node[anchor=south west,inner sep=0] (image) at (0,0) {\includegraphics[width=0.4\textwidth]{figures/universe00.png}};
    \begin{scope}[x={(image.south east)},y={(image.north west)}]
    \end{scope}
    \node [anchor=west] (U1) at (2.8,3.3) {};
    \node [anchor=west] (U2) at (3.8,3) {};
    \node [anchor=west] (U3) at (2.8,2) {};
    \node [anchor=west] (U4) at (4.3,1.7) {};
    \node [anchor=west] (C1) at (0.4,4.7) {};
    \node [anchor=west] (C2) at (5,4) {};
    \node [anchor=west] (C3) at (2.2,0.8) {};
    \node [anchor=west] (C4) at (4.9,0.2) {};
    
    \node (C) at (0.5,4.9) {};
    \node (B) at (3.5,3.3) {};
    \node (A) at (5.1,4.4) {};
    \draw [very thick,->] (C1) edge[bend left=40] node [below left] {$P_1 $} (U1);
    \draw [very thick,->] (C2) edge[bend right=30] node [below right] {$P_2$} (U2);
    \draw [very thick,->] (C3) edge[bend right=30] node [right] {$P_3$} (U3);
    \draw [very thick,->] (C4) edge[bend left=30] node [left] { $P_4$} (U4);
    \draw [very thick,->,bostonuniversityred] (C) to[out=10, in=120] node [above right,text width=1.5cm,pos=0.6] {\large $P_1P_2^\top$ } (B.center) to[out=60, in=-170] (A);
\end{tikzpicture}%
}  &
        \adjustbox{valign=t}{%
        \begin{tabular}{ccc}%
          \includegraphics[width=.088\linewidth]{figures/partial/part10_target.png} & \includegraphics[width=.088\linewidth]{figures/partial/part2_target.png}\\%
          \includegraphics[width=.088\linewidth]{figures/partial/part3_target.png} & \includegraphics[width=.088\linewidth]{figures/partial/part7_target.png}  
        \end{tabular}%
        } ~& %
        \adjustbox{valign=t}{
        \begin{tabular}{cc}%
          \ & \ \\
          \includegraphics[width=.19\linewidth]{figures/qual/cat00.png} &
          \includegraphics[width=.13\linewidth]{figures/qual/cat03.png}\\%
          \includegraphics[width=.19\linewidth]{figures/qual/cat02.png} &
          \includegraphics[width=.13\linewidth]{figures/qual/cat01.png}
        \end{tabular}%
        }
    \end{tabular}
  \end{minipage}
  }%
    \captionof{figure}{\textbf{Left:} We present a novel approach for isometric multi-shape matching based on matching each shape to a (virtual) universe shape (shown semi-transparent). Our formulation represents  point-to-point correspondences between shapes $i$ and $j$ as the composition of the shape-to-universe permutation matrix $P_i$ and the universe-to-shape permutation matrix $P_j^T$. By doing so, the pairwise matchings $P_{ij} = P_iP_j^\top$ are by construction cycle-consistent. \textbf{Middle:} Our formulation successfully solves isometric multi-matching of partial shapes.  \textbf{Right:} Due to the cycle-consistency we can use our correspondences to faithfully transfer textures across a shape collection.
    }
    \label{fig:teaser}
  } 
  
\maketitle

\begin{abstract}
Finding correspondences between shapes is a fundamental problem in computer vision and graphics, which is relevant for many applications, including 3D reconstruction, object tracking, and style transfer. The vast majority of correspondence methods aim to find a solution between pairs of shapes, even if multiple instances of the same class are available. While isometries are often studied in shape correspondence problems, they have not been considered explicitly in the multi-matching setting. This paper closes this gap by proposing a novel optimisation formulation for isometric multi-shape matching. We present a suitable optimisation algorithm for solving our formulation and provide a convergence and complexity analysis. Our algorithm obtains multi-matchings that are by construction provably cycle-consistent. We demonstrate the superior performance of our method on various datasets and set the new state-of-the-art in isometric multi-shape matching.
\end{abstract}

\section{Introduction}
The identification of correspondences between 3D shapes, also known as the shape matching problem, is a longstanding challenge in visual computing. Correspondence problems have a high relevance due to their plethora of applications, including 3D reconstruction, deformable object tracking, style transfer, shape analysis, or general data canonicalisation,~e.g.~to facilitate  learning by establishing a common vector space representation. 

There are certain problem formulations that cover  generic correspondence problems involving different types of data and varying application scenarios. One example is the widely-studied quadratic assignment problem (QAP)~\cite{Lawler:1963wn}.
 Due to its NP-hardness~\cite{Pardalos:1993uo}, reasonably large QAPs cannot be solved satisfactorily in most practical settings. However, contrary to bringing generic objects (e.g.~graphs) into correspondence, when considering 3D shapes it is often possible to exploit particular structural properties in order to effectively solve  the shape matching problem.

For example, it has been demonstrated that explicitly modelling the low-dimensional structure of shape matching problems often allows to find global optima for a wide range of shape matching formulations~\cite{bernard2020mina}. It was also shown that learning suitable feature representations from shapes  improves the matching performance drastically compared to using hand-crafted features~\cite{litany2017deep}.

Moreover, when assuming \emph{(near)-isometries} between shapes,  efficient and powerful spectral approaches can be leveraged for shape matching~\cite{ovsjanikov2012functional}.
Isometries describe classes of deformable shapes of the same type but in different poses, \eg humans or animals who are able to adopt a variety of poses. Potential applications for isometric shape matching include AR/VR or template matching.
While (near)-isometric shape matching has been studied extensively for the case of matching a pair of shapes, the isometric multi-shape matching problem, where an entire collection of (near-isometric) shapes is to be matched, is less explored. Important applications of isometric multi-shape matching include learning low-dimensional shape space representations~\cite{zuffi20173d}, motion tracking and reconstruction.

In principle, any pairwise shape matching method can be used for matching a shape collection. To do so, one can select one of the shapes as reference, and then solve a sequence of pairwise shape matching problems between each of the remaining shapes and the reference. However, a major disadvantage is that such an approach has a strong bias due to the choice of the reference. 
Alternatively, one could solve pairwise shape matching problems between all pairs of shapes in the shape collection. Although this way there is no bias, in general the resulting correspondences are not cycle-consistent. As such, matching shape A via shape B to shape C, may lead to a different correspondence than matching shape A directly to C.

In order to achieve cycle consistency, so-called \emph{permutation synchronisation} methods can be used as post-processing~\cite{Pachauri:2013wx}.
A disadvantage of synchronisation-based multi-shape matching is that it is a two-stage procedure, where  pairwise matchings are obtained in the first proceedure, and synchronization is assured in the second. With that, the matching results are often suboptimal -- even if one reverts to an alternating procedure using a soft coupling \cite{STCB-07a}. 
For \emph{isometric} multi-shape matching this sequential procedure is particularly disadvantageous, since during the second stage the very strong prior about the isometric nature of the shapes is completely ignored. 

Although  multi-matchings obtained by synchronisation procedures are cycle-consistent, the matchings are often spatially non-smooth and noisy, as we illustrate in Sec.~\ref{sec:exp}. 
From a theoretical point of view, the most appropriate approach for addressing multi-shape matching is based on a unified formulation, where  cycle consistency is assured already when the multi-matchings are computed. Although some approaches  fit into this category~\cite{cosmo2017consistent,bernard2019hippi}, none of the existing methods are tailored explicitly towards isometric multi-shape matching in order to take full advantage in this setting.

In this work we fill this gap by introducing a generalisation of state-of-the-art isometric two-shape matching approaches towards isometric multi-shape matching. We demonstrate that explicitly exploiting the isometry property leads to a natural and elegant formulation that achieves improved results compared to previous methods.
Our main contributions can be summarised as:
\begin{itemize}
    \item A \textbf{novel optimisation formulation} for isometric multi-shape matching.
    \item An efficient and easy-to-implement algorithm with \textbf{provable convergence}.
    \item \textbf{Guaranteed cycle-consistency} without enforcing explicit constraints.
    \item \textbf{Improvements over the state-of-the-art} on various shape matching benchmarks.
\end{itemize}
\section{Related Work}\label{sec:related}
\paragraph{Assignment problems.}
Shape matching can be formulated as bringing points defined on one shape into correspondence with points on another shape. A simple mathematical formulation for doing so is the linear assignment problem (LAP)~\cite{Munkres:1957ju}, where a linear cost function is optimised over the set of permutation matrices. The objective function defines the cost for matching points on the first shape to points on the second shape. In shape matching, the costs are typically computed based on feature descriptors, such as the heat kernel signature~\cite{bronstein2010scale}, wave kernel signature~\cite{aubry2011wave}, or SHOT~\cite{salti2014shot}.
Despite the exponential size of the search space, there exist efficient polynomial-time algorithms to solve the LAP~\cite{Bertsekas:1998vt}. A downside of the LAP is that the geometric relation between points is not explicitly taken into account, so that the found matchings lack spatial smoothness. To compensate for this, a correspondence problem formulation based on the quadratic assignment problem (QAP)~\cite{Koopmans:1957gf,Lawler:1963wn,Pardalos:1993uo,Burkard:2009hp,Loiola:2ua4FrR7} can be used. In that case, in addition to linear point-to-point matching costs, quadratic costs for matching \emph{pairs of points} on the first shape to pairs of points on the second shape are taken into account. Since pairs of points can be understood as edges in a graph, this corresponds to graph matching. Due to the NP-hardness of the QAP~\cite{Pardalos:1993uo}, there are no  algorithms that can reliably find global optima efficiently for large (non-trivial) problem instances. In addition to exhaustive search algorithms that have exponential worst-case time complexity~\cite{Bazaraa:1979fh}, there are various more efficient but non-optimal solution strategies. They include spectral methods~\cite{Leordeanu:2005ur,Cour:2006un}, convex relaxations~\cite{Zhao:1998wc,Fogel:2013wt,Torresani:2013gj,kezurer2015,swoboda2017study}, some of them relying on path-following~\cite{Zaslavskiy:2009fq,Zhou:2016ty,Dym:2017ue,bernard:2018}, as well as various non-convex formulations~\cite{le2017alternating,solomon2016entropic,Vestner:2017tj,holzschuh20simanneal}.
For suitably defined matching costs the QAP is an appropriate formalism for \emph{modelling} isometric shape matching. However, due to its NP-hardness the QAP is computationally very difficult to solve. Moreover, due to the generality of the formalism, it does not fully exploit the structural properties present in isometric shape matching problems, and is therefore a suboptimal choice from a computational perspective.

\paragraph{Isometric shape matching.} 
The near-isometric shape correspondence problem has been studied extensively in the literature, see \cite{sahillioglu19survey} for a recent survey. 
Apart from methods tackling a QAP formulation (see previous paragraph), there exist directions utilising other structural properties of isometries. 
The Laplace-Beltrami operator (LBO)~\cite{pinkall93cotan}, a generalisation of the Laplace operator on manifolds, as well as its eigenfunctions are invariant under isometries.
Methods like \cite{mateus08laplacian, melzi19zoomout} directly incorporate this knowledge into the pipeline, or use descriptors based on these \cite{aubry2011wave, vestner2017efficient, bronstein2010scale}.
Functional maps \cite{ovsjanikov2012functional} reformulate the point-wise correspondence problem as a correspondence between functions. 
The functional mapping is represented as a low-dimensional matrix for suitably chosen basis functions. The classic choice are the eigenfunctions of the LBO, which are invariant under isometries and predestined for this setting. Moreover, for general non-rigid settings learning these basis functions has also been proposed  \cite{marin20embedding}. 
A wide variety of extensions to make functional maps more robust or more flexible have been developed. This includes orientation-preservation \cite{ren2018orientation}, image co-segmentation \cite{wang2013image}, denoising \cite{ezuz2017deblur,ren19struct}, partiality \cite{rodola2016partial}, and non-isometries \cite{eisenberger20smoothshells}.
However, extracting a point-wise correspondence from a functional map matrix is not trivial \cite{corman15continuous,rodola2017regularized}. This is mainly because of the low-dimensionality of the functional map, and the fact that not every functional map matrix is a representation of a point-wise correspondence \cite{ovsjanikov2012functional}. 
In~\cite{maron2016point}, the authors simultaneously solve for point-wise correspondences and functional maps for non-rigid shape matching.

Due to their low-dimensionality and continuous representation, functional maps also serve as the backbone of many deep learning architectures for 3D correspondence.
One of the first examples is FMNet \cite{litany2017deep}, which has also been extended for unsupervised learning settings recently \cite{halimi2019unsupervised, aygun2020heatkernel, roufosse19unsupervised}.
Other learning methods rely on a given template for each class \cite{groueix183dcoded} or local neighbourhood encoding to learn a compact representation \cite{lim18intrinsic}.
The recently conducted SHREC correspondence contest on isometric and non-isometric 3D shapes \cite{dyke19shrec} revealed that there is still room for improvement in both fields.

\paragraph{Generic multi-matching.}
The multi-matching problem is relatively well-studied for generic settings,~e.g.~for matching multiple graphs~\cite{yan2015consistency,yan2015multi,shi2016tensor,bernard:2018,swoboda2019convex,wang2020graduated}, or matching keypoints in image collections~\cite{Wang:2017ub,Tron:kUBrCZhd,ma2020image}. A desirable property of multi-matchings is cycle consistency (which we will formally define in Sec.~\ref{sec:matchingRep}).
Establishing cycle consistency in a given set of pairwise
matchings, known as permutation synchronisation,
has been addressed extensively in the literature~\cite{Nguyen:2011eb,Pachauri:2013wx,Huang:2013uk, Chen:2014uo,zhou2015multi,Shen:2016wx,Tron:kUBrCZhd,Maset:YO8y6VRb,Schiavinato:2017fr,bernard2019synchronisation}.

\begin{figure}[h!]
    \centering
    \includegraphics[width=.26\linewidth]{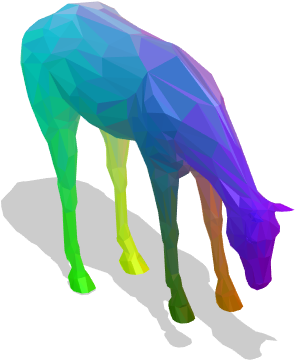} \ \ 
    \includegraphics[width=.26\linewidth]{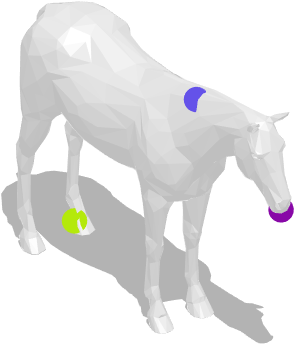} \ \ 
    \includegraphics[width=.26\linewidth]{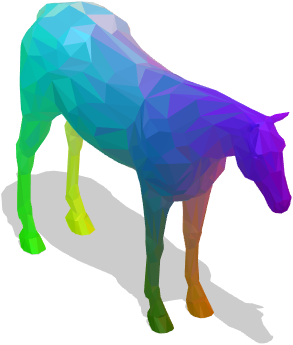}\\
    \footnotesize{Colour legend ~~~~~~~~~~~~~~~~ Cosmo et al.~~~~~~~~~~~~~~~~~~~~Ours}
    \caption{The method by Cosmo et al.~\cite{cosmo2017consistent} leads to extremely sparse multi-matchings (middle), whereas our method obtains dense matchings (right).}
    \label{fig:oursCosmo} 
\end{figure}

\paragraph{Multi-shape matching.} There are various works that particularly target the matching of multiple shapes.  In~\cite{Huang:2013uk,kezurer15tight}, semidefinite programming relaxations  are proposed for the multi-shape matching problem. However,  due to the employed lifting strategy, which drastically increases the number of variables, these methods are not scalable to large problems and only sparse correspondences are obtained. In~\cite{cosmo2017consistent}, a game-theoretic formulation for establishing  multi-matchings is introduced. Due to the use of a sparse modelling approach, the method also has the disadvantage that only few points per shape are matched, see Fig.~\ref{fig:oursCosmo}.
In~\cite{huang2019tensor}, tensor maps are introduced for synchronising heterogeneous shape collections using a low-rank tensor decomposition formulation.  The work~\cite{groueix19cycleconsistent} presents a self-supervised learning approach for finding surface deformations.  A higher-order projected power iteration approach was presented in~\cite{bernard2019hippi}, which was applied to various multi-matching settings, such as multi-image matching or multi-shape matching. 
A shortcoming when applying the mentioned multi-shape matching approaches to  \emph{isometric} settings is that they do not exploit structural properties of isometric shapes. Hence, they lead to suboptimal multi-matchings, which we experimentally confirm in Sec.~\ref{sec:exp}. One exception is the recent work on spectral map synchronisation~\cite{huang2020consistent}, which builds upon functional maps and is, in principal, well-suited for isometric multi-shape matching. However, although the authors take into account cycle consistency, respective penalties are only imposed on pairwise functional maps, rather than on the point-wise correspondences. In Sec.~\ref{sec:exp} we demonstrate that it leads to multi-matchings that have large cycle errors.

\section{Background}
In this section we introduce our representation for multi-matchings of 3D shapes, formalise the notion of cycle consistency, and provide a recap of functional maps.

\subsection{Multi-Matching Representation}\label{sec:matchingRep}
We are given a collection $\shape_1, \ldots, \shape_k$ of $k$ 3D shapes, where each shape is a triangular surface mesh that discretises a two-dimansional Riemannian manifold. The $i$-th shape $\shape_i$ is represented by a total of $m_i$  vertices in 3D space.
For any two non-negative integers $s$ and $t$, 
\begin{align}\label{eq:perm}
    \perm_{st} = \{ P \in \{0,1\}^{s \times t} : P \onevec_t \leq \onevec_s, \onevec_s^\top P \leq \onevec_t^\top  \}\,,
\end{align}
is the set of partial permutation matrices, 
where $\onevec_s$ is the $s$-dimensional column vector with each element \mbox{equals to~$1$.}
As such, correspondences between vertices of pairs of shapes $\shape_i$ and $\shape_j$ can be represented by using the partial permutation matrix $P_{ij} \in \perm_{m_i m_j}$. To be more specific, if the element  at position $(u,v)$ in $P_{ij}$ has the value $1$, the \mbox{$u$-th} vertex of $\shape_i$ is said to be in correspondence with the $v$-th vertex of $\shape_j$. We assume $P_{ii} = \matI_{m_i}$, where $\matI_{m_i}$ denotes the  identity matrix of size $m_i$, and that all pairwise matchings are symmetric in the sense that $P_{ij} = P_{ji}^\top$. 

\paragraph{Cycle consistency (pairwise).} For bijective matchings, in which case the $P_{ij}$ are full permutation matrices (the inequalities in~\eqref{eq:perm} become equalities), cycle consistency means that for all $i,j,\ell \in \{1,\ldots,k\}$, it holds that
\begin{align}\label{eq:cyccons}
    P_{ij}P_{j\ell} = P_{i\ell} \,.
\end{align}
Cycle consistency is a natural property and constitutes a necessary condition for the pairwise matchings to correspond to the ground truth. As such, cycle consistency can serve as additional constraint in order to better restrict the space of solutions in multi-matching problems. 

\paragraph{Cycle consistency (universe).} Instead of using the explicit cycle consistency constraints in~\eqref{eq:cyccons}, 
one can represent multi-matchings by using \emph{shape-to-universe} matchings~\cite{Pachauri:2013wx,Tron:kUBrCZhd,bernard2019synchronisation}. 
In this case, cycle consistency holds implicitly without having to enforce the constraints \eqref{eq:cyccons} in the problem formulation, and without having to develop a customised solution strategy.
The union of all \emph{distinct} points across all $k$ shapes are called \emph{universe points}, and we use $d$ to denote the total number of universe points. The shape-to-universe formulation of cycle consistency also applies to the case of partial multi-matchings, which is the setting we are interested in.
The main idea of the shape-to-universe representation is that each point in each of the $k$ shapes is brought into correspondence with exactly one of the universe points. Then, all points across the $k$ shapes that are in correspondence with the same universe point are said to be in correspondence with each other. Mathematically, let $P_i \in \perm_{m_id}$ be the partial permutation matrix that represents the matching of the $i$-th shape to the universe. Since each of the $m_i$ points is assigned to exactly one universe point, we have $P_i \onevec_d = \onevec_{m_i}$. Pairwise matchings can be obtained from the shape-to-universe matchings via
\begin{align}\label{eq:cycconsuniverse}
    P_{ij} = P_i P_j^\top \,.
\end{align}
The intuition is that the matching from $i$ to $j$ can be represented as matching $i$ to the universe, followed by matching the universe to $j$, which is illustrated in Fig.~\ref{fig:teaser}.

For our later elaborations it will be convenient to stack all $P_i$'s into a tall block-matrix, which we define as 
\begin{align}
    U = \begin{bmatrix} P_1^\top, P_2^\top, \ldots P_k^\top \end{bmatrix}^\top\,.
    \label{eq:stackedU}
\end{align}
\mbox{The matrix $U$ is $(m {\times} d)$-dimensional, where $m = \sum_{i=1}^k m_i$.} Moreover, we introduce the blockwise partial permutation constraint notation $U \in \perm$ (without subscript in $\perm$) to indicate that for each block $P_i$ in $U$ it holds that $P_i \in \perm_{m_id}$ and $P_i \onevec_d = \onevec_{m_i}$. We emphasise that by representing multi-matchings in terms of the matrix $U$, the resulting pairwise matchings are, by definition, cycle-consistent.

\subsection{Functional Maps} \label{sec:fms}

Functional Maps \cite{ovsjanikov2012functional} formulate the correspondence problem as a linear mapping $\fm_{ij}: L^2(\shape_i) \to L^2(\shape_j)$ between function spaces on the surfaces of $\shape_i, \shape_j$, rather than as a point-to-point correspondence between vertices.
Let $\Phi_i \in \mathbb{R}^{m_i \times \dimLb}, \Phi_j \in \mathbb{R}^{m_j \times \dimLb}$ be the first $\dimLb$ eigenfunctions of the Laplace-Beltrami operator (LBO) \cite{pinkall93cotan}. Then $\fm_{ij}$ transfers the function $F$ represented in the basis $\Phi_i$ to the function $G$ represented in the basis $\Phi_j$,~i.e.
\begin{align}
    \fm_{ij}(\Phi_i^\dagger F) = \Phi_j^\dagger G\,.
\end{align}
Here, $\Phi_{\bullet}^\dagger$ denotes the Moore-Penrose pseudoinverse of $\Phi_{\bullet}$.
In particular, the optimal $\fm_{ij}$ will map compatible
functions $F \in L^2(\shape_i)$ and $G \in L^2(\shape_j)$, \eg descriptor functions or indicator functions on corresponding points, onto each other.
We will use $\fm$ without subscripts to describe common properties of all $\fm_{ij}$.
Due to the linearity of $\fm$, it can be written as a matrix.
Orthogonality of $\fm$ is related to area preservation in the correspondence \cite{ovsjanikov2012functional} which is also a property of isometries.
Thus, we use orthogonality as a prior by projecting all $\fm$'s onto the set of orthogonal matrices
\begin{align}
    \orth_b = \{ \fm \in \mathbb{R}^{\dimLb \times \dimLb}: \fm \fm^\top = \textbf{I}_b \}\,.
    \label{eq:fmspace}
\end{align}
Similar to the previous section, we want to impose cycle consistency on the pairwise functional maps $\fm_{ij}$. 
We do so by defining a shape-to-universe functional map $\fm_i$ from $\shape_i$ to a (virtual) universe shape. We achieve cycle consistency by composing each pairwise functional map using shape-to-universe functional maps,~i.e.
\begin{align}
    \fm_{ij} = \fm_i \fm_j^\top \,.
    \label{eq:cycconsfm}
\end{align}
Analogously to \eqref{eq:stackedU}, we stack all $\fm_i$ into a tall $(k\dimLb{\times}\dimLb)$-dimensional block-matrix that we call
\begin{align}
    Q = \begin{bmatrix} \fm_1^\top, \fm_2^\top, \ldots \fm_k^\top \end{bmatrix}^\top\,.
    \label{eq:stackedQ}
\end{align}

In accordance with the definition of the permutation constraint, we define the stacked block-orthogonal constraint $Q \in \orth$ (without subscript in $\orth$) that indicates that every block $\fm_i \in \orth_b$.  

\section{Isometric Multi-Shape Matching}
\begin{table}[t]
\footnotesize
\begin{tabular}{@{}lp{5.4cm}@{}}
\toprule
\textbf{Symbol} & \textbf{Meaning} \\ \midrule
$k$ & total number of shapes to be matched \\
$m_i$ & total number of points in shape $i$ \\
$m = \sum_{i=1}^k m_i$ & total number of points across all $k$ shapes \\
$d$ & universe size (total number of unique points across all shapes) \\
$P_i \in \perm_{m_id} \subset \R^{m_i {\times} d}$ & shape-to-universe matching for shape $i$ \\
$U \in \perm \subset \R^{m \times d}$ & stack of all shape-to-universe matchings \\
$b$ & number of LBO basis functions \\
$\Phi_i \in \R^{m_i \times b}$ & eigenfunction of the LBO of shape $i$ \\
 $\mathbf{\Phi} \in \R^{m \times kb}$ & block-diagonal matrix containing the eigenfunctions of all shapes \\
$\fm_i\in \R^{b \times b}$ & shape-to-universe functional map  for shape $i$ \\
$Q \in \orth \subset \R^{kb \times b}$ & stack of all shape-to-universe functional maps \\
 \bottomrule
\end{tabular}
\caption{Overview of our notation.}
\label{table:notation}
\end{table}

In this section, we introduce our matching formulation, the optimisation algorithm thereof, and provide a theoretical analysis. Our notation is summarised in Tab.~\ref{table:notation}.

\subsection{Problem Formulation}
The objective function of our isometric multi-matching formulation (that we will later maximise) reads
\begin{align}
f(U,Q) &= \sum_{i,j=1}^k \langle P_i^\top \Phi_i \fm_i, P_j^\top \Phi_j \fm_j \rangle  \label{eq:objSum} \\
        &= \langle U^\top \mathbf{\Phi} Q , U^\top \mathbf{\Phi} Q \rangle \,, \label{eq:objMatrix}
\end{align}
where $ \mathbf{\Phi} = \diag(\Phi_1, \ldots, \Phi_k) \in \R^{m \times k\dimLb}$.
The equality between the explicit summation formulation in \eqref{eq:objSum} and the matrix formulation in \eqref{eq:objMatrix} can be verified by expanding the matrix multiplications.
When maximising the objective function, the inner product between the aligned basis functions $\Phi_i$ and $\Phi_j$ is maximised for all pairs $i,j$. 
For that purpose, $P_i$ and $P_j$ permute the vertices in terms of universe points, while $\fm_i$ and $\fm_j$ align the basis functions on the same universe points via an orthogonal transform. 
Rewriting each summand of \eqref{eq:objSum} as $\textbf{tr}((P_i^\top \Phi_i) \fm_i \fm_j^\top (P_j^\top \Phi_j)^\top)$, we see each operation explicitly: 
$P_\bullet^\top \Phi_\bullet$ shuffles the vertices into consistent universe ordering, $\fm_i\fm_j$ composes the (cycle-consistent) functional maps between $i$ and $j$ according to \eqref{eq:cycconsfm}.

The overall optimisation is performed with respect to $U$ and $Q$, with the constraints $U \in \perm$ and $Q  \in \orth$. 
As such, our isometric multi-shape matching formulation reads
\begin{align}
    \max_{U,Q } & &  \langle U^\top \mathbf{\Phi} Q , U^\top \mathbf{\Phi} Q \rangle \label{eq:optprob}\\
    \text{s.t.}&& U \in \perm, Q  \in \orth \,.\nonumber   
\end{align}

\subsection{Algorithm}
In order to solve Problem~\eqref{eq:optprob}, we propose a novel projection-based algorithm that we call \textsc{IsoMuSh} (\textbf{Iso}metric \textbf{Mu}lti-\textbf{Sh}ape Matching). 
The optimisation alternates between updating $U$ and $Q$. 
Each update step involves simple matrix multiplications, as well as the Euclidean projection onto the sets $\perm$ and $\orth$. For permutations, as well as different objective functions, a similar strategy has been proven effective in~\cite{vestner2017efficient,bernard2019hippi}.
We denote the Euclidean projections as $\proj_{\perm}(\cdot)$ and $\proj_{\orth}(\cdot)$.
Each Euclidean projection returns the closest element in the constraint set according to the squared Frobenius norm. For the set $\orth$, it is defined as
\begin{align}
\proj_\orth(Q) &= \argmin_{Y\in \orth} \Vert Q - Y \Vert_F^2 \\
&= \argmax_{Y \in \orth} \,2\langle Q, Y \rangle {-} \langle Y, Y \rangle = \argmax_{Y \in \orth} \,\langle Q, Y \rangle \,. \nonumber
\end{align} 
The last equality arises from the orthonormality of all $\fm_i$ in $Q$. The projection onto the set $\perm$ is defined analogously, in which case the term $\langle Y,Y \rangle$ has the constant value $m$ for $Y \in \perm$ (since the term simply counts the total number of ones in $Y$, which has the fixed value $m$ because $U \in \perm$ implies $U\onevec_d = \onevec_m$).  By $U_t$ and $Q_t$ we denote the values of $U$ and $Q$ at iteration $t$, respectively.

\begin{algorithm}[t]\label{alg:isomush}
\SetKwInput{Input}{Input}
\SetKwInput{Output}{Output}
\SetKwInput{Initialise}{Initialise}
\SetKwRepeat{Do}{do}{while}
\DontPrintSemicolon
 \Input{$\mathbf{\Phi}$, $\epsilon$ (relative objective improvement)}
 \Output{$U,Q$}
\Initialise{$t \gets 0, U_0 \in \perm, Q_0 \in \orth$}
  \Repeat{ $ \frac{f(U_t,Q_t)}{f(U_{t+1},Q_{t+1})} \geq 1{-}\epsilon$}{ 
    $U_{t{+}1} \gets \proj_{\perm}(\mathbf{\Phi}Q_tQ_t^\top \mathbf{\Phi}^\top U_t  )$\\
    $Q_{t{+}1} \gets \proj_{\orth}(\mathbf{\Phi}^\top U_{t{+}1} U_{t{+}1}^\top \mathbf{\Phi}Q_t)$\\
    $t \gets t{+}1$
   }
 \caption{\textsc{IsoMuSh} algorithm.}
\end{algorithm}

\paragraph{$U$-update. } 
For $Z= \mathbf{\Phi} Q_t Q_t^\top \mathbf{\Phi}^\top$, the $U$-update step projects 
$ZU_t$ onto $\perm$. Hence, the $U$-update reads
\begin{align}
    U_{t+1} = \proj_{\perm}(Z U_t) &= \argmax_{U \in \perm}~\langle Z U_t, U  \rangle \label{eq:projP} \\
    &= 
    \begin{bmatrix}
        \argmax\limits_{ P_1 \in \perm_{m_1 d}  }~ \langle [ZU_t]_1, P_1 \rangle  \\
        \vdots \\
        \argmax\limits_{ P_k \in \perm_{m_k d}  } ~\langle [Z U_t]_k, P_k \rangle 
    \end{bmatrix}\,,
    \label{eq:sepLAPs}
\end{align}
where $[ZU_t]_i$ denotes the $i$-th block (of size $m_i {\times} d$) of $ZU_t$.
Each block of $U$ in \eqref{eq:projP} is independent, and consequently can be optimised for separately, as written in~\eqref{eq:sepLAPs}. 
This reduces the projection into solving $k$ independent (partial) linear assignment problems. To this end, we use an efficient implementation~\cite{bernard2016fast} of the Auction algorithm~\cite{Bertsekas:1998vt}.

\paragraph{$Q$-update. }
For $\overline{Z} = \mathbf{\Phi}^\top U_{t{+}1} U_{t{+}1}^\top \mathbf{\Phi}$, the $Q$-update step projects $\overline{Z}Q_t$ onto $\orth$. It is given by
\begin{align}
    Q_{t+1} = \proj_{\orth}(\overline{Z} Q_t) &= \argmax_{Q \in \orth} \langle \overline{Z} Q_t, Q  \rangle \label{eq:projO} \\
    &=  \begin{bmatrix}
       \argmax\limits_{\fm_1 \in \orth} \langle [\overline{Z}Q_t]_1, \fm_1  \rangle  \\
        \vdots \\
        \argmax\limits_{\fm_k \in \orth} \langle [\overline{Z}Q_t]_k, \fm_k  \rangle
    \end{bmatrix}\,, \label{eq:projOblock}
\end{align}
where $[\overline{Z}Q_t]_i$ denotes the $i$-th block (of size $b{\times}b$) of $\overline{Z}Q_t$.
Similar as in the $U$-update, the result for each block of $Q$ in \eqref{eq:projO} is independent, and can thus be optimised separately, as shown in~\eqref{eq:projOblock}. 
Therefore, we can solve $k$ independent singular value decompositions (SVDs), each for a small matrix of size $b{\times}b$.

\subsection{Theoretical Analysis}
In this section, the properties of the \textsc{IsoMuSh} algorithm is analysed. To this end, we prove that the algorithm convergences, and present a complexity analysis.

\subsubsection{Convergence} 
The convergence of  our algorithm follows from the monotonicity of the individual updates. Here, we present the respective results, and refer readers to the supplementary material for the proofs.

\begin{lem}\label{lem:U}
$\langle U_{t}^\top \mathbf{\Phi} Q_t , U_{t+1}^\top \mathbf{\Phi} Q_t \rangle \geq \langle U_t^\top \mathbf{\Phi} Q_t , U_t^\top \mathbf{\Phi} Q_t \rangle$ holds for any $t$.
\end{lem}

\begin{prop}[Monotonicity of $U$-update]\label{prop:U}\ \\
The objective value cannot decrease through the $U$-update step~\eqref{eq:projP}, and $\langle U_{t+1}^\top \mathbf{\Phi} Q_t , U_{t+1}^\top \mathbf{\Phi} Q_t \rangle \geq \langle U_t^\top \mathbf{\Phi} Q_t , U_t^\top \mathbf{\Phi} Q_t \rangle$ holds.
\end{prop}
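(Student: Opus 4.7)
The plan is to use Lemma \ref{lem:U} together with the Cauchy--Schwarz inequality. To begin, I would first recast the proposition as a statement about the Frobenius norm: setting $A := U_{t+1}^\top \mathbf{\Phi} Q_t$ and $B := U_t^\top \mathbf{\Phi} Q_t$, the goal $\langle A, A\rangle \geq \langle B, B\rangle$ is exactly $\|A\|_F \geq \|B\|_F$, while Lemma \ref{lem:U} supplies $\langle A, B\rangle \geq \langle B, B\rangle = \|B\|_F^2$.

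Next, I would apply the Cauchy--Schwarz inequality in the Frobenius inner product: $\langle A, B\rangle \leq \|A\|_F \|B\|_F$. Combining with the inequality from Lemma \ref{lem:U} yields $\|A\|_F\|B\|_F \geq \|B\|_F^2$. If $\|B\|_F > 0$, dividing by $\|B\|_F$ gives $\|A\|_F \geq \|B\|_F$, and squaring delivers the desired monotonicity; if $\|B\|_F = 0$, the conclusion $\|A\|_F^2 \geq 0 = \|B\|_F^2$ is immediate. This handles the only potential edge case.

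The only substantive thing to verify is that Lemma \ref{lem:U} really does say $\langle A, B\rangle \geq \|B\|_F^2$, which follows from the identity $\langle U^\top \mathbf{\Phi} Q_t, U_t^\top \mathbf{\Phi} Q_t\rangle = \langle \mathbf{\Phi} Q_t Q_t^\top \mathbf{\Phi}^\top U_t,\, U\rangle$ obtained by the cyclic property of the trace; this is exactly the linear assignment objective maximised by the $U$-update \eqref{eq:projP}, so the lemma is indeed the statement that this inner product, evaluated at $U_{t+1}$, beats its value at the feasible point $U_t$. I do not anticipate a significant obstacle: the argument is essentially a one-line application of Cauchy--Schwarz on top of the previously established monotonicity in the cross term.
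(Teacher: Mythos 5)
Your proof is correct. Lemma~\ref{lem:U} does indeed state (after using symmetry of the real Frobenius inner product) that $\langle A,B\rangle \geq \Vert B\Vert_F^2$ for $A = U_{t+1}^\top\mathbf{\Phi}Q_t$, $B = U_t^\top\mathbf{\Phi}Q_t$, and your trace-cyclicity check that this is the linear-assignment objective of the update~\eqref{eq:projP} is accurate; combining with Cauchy--Schwarz and treating $\Vert B\Vert_F = 0$ separately closes the argument. The paper's route is different in its key inequality: instead of Cauchy--Schwarz it expands $0 \leq \Vert A - B\Vert_F^2 = \langle A,A\rangle - 2\langle A,B\rangle + \langle B,B\rangle$, substitutes Lemma~\ref{lem:U} (written via the symmetric matrix $Z = \mathbf{\Phi}Q_tQ_t^\top\mathbf{\Phi}^\top$) to replace $\langle B,B\rangle$ by $\langle A,B\rangle$, obtaining $\langle A,B\rangle \leq \langle A,A\rangle$, and then chains this with the lemma once more by transitivity. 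The two arguments are close relatives—Cauchy--Schwarz is itself a consequence of the same non-negativity of a squared norm—but yours replaces the expand-and-substitute bookkeeping with a single standard inequality, at the cost of a case split on $\Vert B\Vert_F$ (and a division) that the paper's purely additive manipulation never needs. Neither version is more general than the other; both yield exactly $\langle U_{t+1}^\top \mathbf{\Phi} Q_t , U_{t+1}^\top \mathbf{\Phi} Q_t \rangle \geq \langle U_t^\top \mathbf{\Phi} Q_t , U_t^\top \mathbf{\Phi} Q_t \rangle$, and your variant transfers verbatim to the $Q$-update in Prop.~\ref{prop:Q}, just as the paper's does.
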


\begin{lem}\label{lem:Q} In each iteration $t$,
$\langle U_{t+1}^\top \mathbf{\Phi} Q_t , U_{t+1}^\top \mathbf{\Phi} Q_{t+1} \rangle \geq \langle U_{t+1}^\top \mathbf{\Phi} Q_t , U_{t+1}^\top \mathbf{\Phi} Q_t \rangle$ holds.
\end{lem}

\begin{prop}[Monotonicity of $Q$-update]\label{prop:Q} \ \\
The objective value cannot decrease through the $Q$-update~\eqref{eq:projO}, and $\langle U_{t+1}^\top \mathbf{\Phi} Q_{t+1} , U_{t+1}^\top \mathbf{\Phi} Q_{t+1} \rangle \geq \langle U_{t+1}^\top \mathbf{\Phi} Q_t , U_{t+1}^\top \mathbf{\Phi} Q_t \rangle$ holds.
\end{prop}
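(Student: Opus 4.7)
The plan is to deduce this from Lemma~\ref{lem:Q} via the same ``squared-difference'' trick that underlies the analogous Proposition~\ref{prop:U}. Let me abbreviate $A := U_{t+1}^\top \mathbf{\Phi}$, so the claim becomes $\|AQ_{t+1}\|_F^2 \geq \|AQ_t\|_F^2$, while Lemma~\ref{lem:Q} reads
\begin{equation*}
\langle AQ_t,\, AQ_{t+1} \rangle \;\geq\; \langle AQ_t,\, AQ_t \rangle \;=\; \|AQ_t\|_F^2 .
\end{equation*}

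First I would invoke the trivial nonnegativity $\|AQ_{t+1} - AQ_t\|_F^2 \geq 0$, expand it, and rearrange to obtain
\begin{equation*}
\|AQ_{t+1}\|_F^2 \;\geq\; 2\langle AQ_{t+1},\, AQ_t \rangle - \|AQ_t\|_F^2 .
\end{equation*}
Then substituting the cross-term bound from Lemma~\ref{lem:Q} into the right-hand side yields
\begin{equation*}
\|AQ_{t+1}\|_F^2 \;\geq\; 2\|AQ_t\|_F^2 - \|AQ_t\|_F^2 \;=\; \|AQ_t\|_F^2,
\end{equation*}
which, unfolded, is exactly the claim.

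I do not anticipate a real obstacle. The only bookkeeping item is to verify that Lemma~\ref{lem:Q} is directly usable in the form above: this follows from the cyclic property of the trace, since $\langle U_{t+1}^\top \mathbf{\Phi} Q_t,\, U_{t+1}^\top \mathbf{\Phi} Q \rangle = \operatorname{tr}(Q_t^\top \mathbf{\Phi}^\top U_{t+1} U_{t+1}^\top \mathbf{\Phi}\, Q) = \langle \overline{Z} Q_t, Q \rangle$, and the lemma itself is immediate from the \textbf{argmax} characterisation of the $Q$-update in~\eqref{eq:projO}. One small caveat worth flagging is that, in contrast to the $U$-case, I cannot shortcut the argument by discarding a constant quadratic term: while $\langle Q, Q \rangle = kb$ is indeed constant on $\orth$, the quantity $\|AQ\|_F^2$ that appears in the \emph{objective} is not constant in $Q$, so the squared-difference step is genuinely needed.
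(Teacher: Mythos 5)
Your proposal is correct and takes essentially the same route as the paper's own proof: expand $0 \leq \|U_{t+1}^\top \mathbf{\Phi} Q_{t+1} - U_{t+1}^\top \mathbf{\Phi} Q_t\|_F^2$ and combine the resulting cross term with the bound from Lemma~\ref{lem:Q} to conclude by transitivity. (Your parenthetical remark that the $U$-case admits a shortcut is slightly off—the paper's Prop.~\ref{prop:U} needs the same squared-difference step—but this side comment does not affect the validity of your argument.)
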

By combining these properties, and exploiting that $U$ and $Q$ are in compact sets, we obtain the following result:
\begin{theorem}[Convergence]\ \\
The sequence $(f(U_t,Q_t))_{t=1,2,\ldots}$ is monotonically increasing and convergent. Algorithm~\ref{alg:isomush} terminates in finite time.
\end{theorem}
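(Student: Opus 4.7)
The plan is to chain the two monotonicity results already stated above, deduce convergence from compactness, and then verify that the stopping criterion is triggered in finitely many steps.

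First, combining Propositions~\ref{prop:U} and~\ref{prop:Q} gives, for every $t$,
\[
f(U_t, Q_t) \;\leq\; f(U_{t+1}, Q_t) \;\leq\; f(U_{t+1}, Q_{t+1}),
\]
so $(f(U_t,Q_t))_{t}$ is non-decreasing, which is exactly the monotonicity claim in the theorem.

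Next, I would argue the sequence is bounded above by showing that $f$ attains a finite maximum on the feasible set. The set $\perm$ of stacked (partial) permutation matrices is finite, being a subset of $\{0,1\}^{m \times d}$ satisfying the row-sum constraints, while $\orth$ is a finite product of compact orthogonal groups and is therefore compact. The objective $f(U,Q) = \langle U^\top \mathbf{\Phi} Q, U^\top \mathbf{\Phi} Q \rangle$ is polynomial in the entries of $(U,Q)$, hence continuous, and thus attains a finite maximum on the compact set $\perm \times \orth$. The monotone convergence theorem then yields convergence of $(f(U_t,Q_t))_{t}$ to some finite limit $L$.

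For finite termination, I would assume the non-degenerate initialisation $f(U_0,Q_0) > 0$ (otherwise the algorithm can trivially be stopped at iteration $0$). Monotonicity then gives $L \geq f(U_0,Q_0) > 0$ and $f(U_{t+1},Q_{t+1}) - f(U_t,Q_t) \to 0$, so
\[
\frac{f(U_t, Q_t)}{f(U_{t+1}, Q_{t+1})} \longrightarrow 1.
\]
Hence for any fixed $\epsilon > 0$ the stopping condition $f(U_t,Q_t)/f(U_{t+1},Q_{t+1}) \geq 1-\epsilon$ is satisfied for all sufficiently large $t$, so the algorithm exits its repeat loop after finitely many iterations. A sharper optional remark is that since $\perm$ is finite, only finitely many distinct values of $f(U_t,\cdot)$ can ever appear, which forces any strictly-increasing subsequence to terminate; but this is not needed for the statement as given.

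The main obstacle here is essentially bookkeeping rather than mathematics: one must check that the two propositions really concatenate into a single-step increase (they do, verbatim), that the termination ratio is well-defined (handled by the $f>0$ remark above), and that the theorem only asserts convergence of the objective values, not of the iterates $(U_t,Q_t)$ themselves, which need not converge because the linear assignment sub-problems in the $U$-update may have multiple optimal solutions.
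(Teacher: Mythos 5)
Your proposal is correct and follows essentially the same route as the paper's proof: chain Propositions~\ref{prop:U} and~\ref{prop:Q} for monotonicity and use compactness of $\perm$ and $\orth$ to bound $f$ from above, giving convergence of the objective values. You additionally spell out why the stopping criterion is eventually triggered (ratio tending to $1$, with the degenerate $f=0$ case handled), a detail the paper's proof leaves implicit, and you correctly note that only the objective values, not the iterates, are claimed to converge.
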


\subsubsection{Complexity Analysis}
The steps in the \textsc{IsoMuSh} algorithm comprises matrix multiplications and projections onto the sets $\perm$ and $\orth$. In the following, we break down the complexity of each step:

\paragraph{Multiplications in $U$-update:} The term $\mathbf{\Phi}QQ^\top \mathbf{\Phi}^\top U$ can be computed as $AB$ for $A=\mathbf{\Phi}Q$ and $B = A^\top U$. 
Computing    $A \in \R^{m \times b}$ has complexity $\mathcal{O}(b^2m)$ ($\mathbf{\Phi}$ is a block-diagonal matrix). Computing $B = A^\top U \in \R^{b \times d}$ has complexity $\mathcal{O}(bdk)$ ($U$ is a sparse matrix with at most $k$ nonzero elements per column). Finally, computing $AB \in \R^{m \times d}$ has complexity  $\mathcal{O}(bdm)$. This results in an complexity of $\mathcal{O}(bm \cdot \max(d, b))$ for the U-step matrix multiplication.

\paragraph{Multiplications in $Q$-update:} The term $\mathbf{\Phi}^\top UU^\top \mathbf{\Phi}Q$ can be computed as $C^\top D$ for $C=U^\top\mathbf{\Phi}$ and $D = CQ$.
Computing  $C \in \R^{d \times kb}$ has complexity $\mathcal{O}(bdk)$ ($U$ is a sparse matrix with at most $k$ nonzero elements per column, and $\mathbf{\Phi}$ is a block-diagonal matrix). Computing $D = CQ \in \R^{d \times b}$ has complexity $\mathcal{O}(b^2dk)$. Computing $C^\top D \in \R^{kb \times b}$ has complexity  $\mathcal{O}(b^2dk)$. This results in an complexity of $\mathcal{O}(b^2dk)$ for the Q-step matrix multiplication.

\paragraph{Projection onto $\perm$:} the projection onto $\perm$ is computed by solving $k$ linear assignment problems, each of size $m_i \times d$. The auction algorithm has an average time complexity of (roughly) $\mathcal{O}(d^2 \log(d))$, so that the overall projection leads to $\mathcal{O}(kd^2 \log(d))$.
    
\paragraph{Projection onto $\orth$:} the projection onto $\orth$ is computed by solving $k$ independent projections onto $\orth_b$. Using SVD, this amounts to a complexity of $\mathcal{O}(b^3)$.

\begin{figure*}[t]
    \centering
    {\footnotesize
    \definecolor{mycolor1}{rgb}{0.92900,0.69400,0.12500}%
\definecolor{mycolor2}{rgb}{0.85000,0.32500,0.09800}%
\definecolor{mycolor3}{rgb}{0.49400,0.18400,0.55600}%
\definecolor{mycolor4}{rgb}{0.00000,0.44700,0.74100}%
\begin{tikzpicture}

\begin{axis}[%
width=.25\linewidth,
height=.18\linewidth,
at={(0.797in,0.617in)},
scale only axis,
xmin=0,
xmax=0.15,
ymin=65,
ymax=100,
xmajorgrids,
ymajorgrids,
axis background/.style={fill=white},
xlabel style={font=\color{white!15!black}},
xlabel={Geodesic error},
ylabel style={font=\color{white!15!black}},
ylabel={\% Correspondences},
y label style={at={(axis description cs:0.1,.5)},rotate=0,anchor=south},
title style={font=\bfseries},
title={TOSCA},
legend style={at={(0.97,0.03)}, anchor=south east, legend cell align=left, align=left, draw=white!15!black, font=\footnotesize}
]

\addplot [color=mycolor1, dotted, line width=2.0pt]
  table[row sep=crcr]{%
0	64.5507246376812\\
0.005	66.6086956521739\\
0.01	71.8840579710145\\
0.015	75.8695652173913\\
0.02	78.9855072463768\\
0.025	81.2898550724638\\
0.03	83.1449275362319\\
0.035	84.6811594202899\\
0.04	85.8695652173913\\
0.045	87.1014492753623\\
0.05	88.2463768115942\\
0.055	89.1014492753623\\
0.06	89.9130434782609\\
0.065	90.4057971014493\\
0.07	91.0289855072464\\
0.075	91.5797101449275\\
0.08	91.9710144927536\\
0.085	92.304347826087\\
0.09	92.7536231884058\\
0.095	93.0724637681159\\
0.1	93.3188405797101\\
0.105	93.5652173913043\\
0.11	93.7826086956522\\
0.115	93.9275362318841\\
0.12	94.1014492753623\\
0.125	94.2463768115942\\
0.13	94.2753623188406\\
0.135	94.5217391304348\\
0.14	94.6521739130435\\
0.145	94.768115942029\\
0.15	94.8260869565217\\
0.155	94.9420289855072\\
0.16	95.0434782608696\\
0.165	95.1014492753623\\
0.17	95.1594202898551\\
0.175	95.2173913043478\\
0.18	95.2173913043478\\
0.185	95.304347826087\\
0.19	95.3768115942029\\
0.195	95.3913043478261\\
0.2	95.4492753623188\\
0.205	95.5072463768116\\
0.21	95.5217391304348\\
0.215	95.5797101449275\\
0.22	95.6376811594203\\
0.225	95.6521739130435\\
0.23	95.6811594202899\\
0.235	95.7246376811594\\
0.24	95.768115942029\\
0.245	95.768115942029\\
0.25	95.8260869565217\\
};
\addlegendentry{ZoomOut}

\addplot [color=mycolor2, dashed, line width=2.0pt]
  table[row sep=crcr]{%
0	80.7826086956522\\
0.005	81.5507246376812\\
0.01	83.3478260869565\\
0.015	85.0579710144928\\
0.02	87.0579710144928\\
0.025	88.9420289855072\\
0.03	90.4492753623188\\
0.035	91.5942028985507\\
0.04	92.5942028985507\\
0.045	93.6811594202899\\
0.05	94.768115942029\\
0.055	95.3478260869565\\
0.06	95.8840579710145\\
0.065	96.4202898550725\\
0.07	96.6811594202899\\
0.075	96.7826086956522\\
0.08	97\\
0.085	97.1594202898551\\
0.09	97.2753623188406\\
0.095	97.3478260869565\\
0.1	97.4782608695652\\
0.105	97.6231884057971\\
0.11	97.7246376811594\\
0.115	97.8550724637681\\
0.12	97.9275362318841\\
0.125	98.0144927536232\\
0.13	98.0869565217391\\
0.135	98.1449275362319\\
0.14	98.231884057971\\
0.145	98.3333333333333\\
0.15	98.4057971014493\\
0.155	98.4202898550725\\
0.16	98.4782608695652\\
0.165	98.5217391304348\\
0.17	98.5942028985507\\
0.175	98.6521739130435\\
0.18	98.7101449275362\\
0.185	98.768115942029\\
0.19	98.8550724637681\\
0.195	98.8985507246377\\
0.2	98.9275362318841\\
0.205	98.9565217391304\\
0.21	98.9710144927536\\
0.215	99.0144927536232\\
0.22	99.0434782608696\\
0.225	99.0434782608696\\
0.23	99.0869565217391\\
0.235	99.1304347826087\\
0.24	99.1304347826087\\
0.245	99.1449275362319\\
0.25	99.1739130434783\\
};
\addlegendentry{ZoomOut+Sync}

\addplot [color=mycolor3, line width=2.0pt]
  table[row sep=crcr]{%
0	74.3623188405797\\
0.005	76.8260869565217\\
0.01	82.0724637681159\\
0.015	85.8695652173913\\
0.02	88.8550724637681\\
0.025	90.8840579710145\\
0.03	92.3478260869565\\
0.035	93.8115942028985\\
0.04	95.0724637681159\\
0.045	96\\
0.05	96.7101449275362\\
0.055	97.463768115942\\
0.06	97.9855072463768\\
0.065	98.3333333333333\\
0.07	98.6666666666667\\
0.075	98.9130434782609\\
0.08	99.0434782608696\\
0.085	99.1594202898551\\
0.09	99.304347826087\\
0.095	99.3623188405797\\
0.1	99.4347826086957\\
0.105	99.5072463768116\\
0.11	99.536231884058\\
0.115	99.5942028985507\\
0.12	99.6086956521739\\
0.125	99.6666666666667\\
0.13	99.6811594202899\\
0.135	99.7536231884058\\
0.14	99.7826086956522\\
0.145	99.8115942028985\\
0.15	99.8115942028985\\
0.155	99.8405797101449\\
0.16	99.8550724637681\\
0.165	99.8550724637681\\
0.17	99.8695652173913\\
0.175	99.8695652173913\\
0.18	99.8840579710145\\
0.185	99.9565217391304\\
0.19	99.9855072463768\\
0.195	99.9855072463768\\
0.2	99.9855072463768\\
0.205	100\\
0.21	100\\
0.215	100\\
0.22	100\\
0.225	100\\
0.23	100\\
0.235	100\\
0.24	100\\
0.245	100\\
0.25	100\\
};
\addlegendentry{ConsistentZoomOut}

\addplot [color=mycolor4, line width=2.0pt]
  table[row sep=crcr]{%
0	83.4057971014493\\
0.005	83.7826086956522\\
0.01	85.4347826086957\\
0.015	87.1159420289855\\
0.02	88.8260869565217\\
0.025	90.463768115942\\
0.03	91.8840579710145\\
0.035	92.768115942029\\
0.04	93.6811594202899\\
0.045	94.5797101449275\\
0.05	95.536231884058\\
0.055	95.9565217391304\\
0.06	96.3913043478261\\
0.065	96.8695652173913\\
0.07	97.1304347826087\\
0.075	97.2463768115942\\
0.08	97.463768115942\\
0.085	97.6231884057971\\
0.09	97.7391304347826\\
0.095	97.768115942029\\
0.1	97.8985507246377\\
0.105	98.0579710144928\\
0.11	98.1304347826087\\
0.115	98.231884057971\\
0.12	98.2898550724638\\
0.125	98.3768115942029\\
0.13	98.463768115942\\
0.135	98.5217391304348\\
0.14	98.6086956521739\\
0.145	98.695652173913\\
0.15	98.7246376811594\\
0.155	98.7391304347826\\
0.16	98.768115942029\\
0.165	98.8115942028985\\
0.17	98.8985507246377\\
0.175	98.9130434782609\\
0.18	98.9565217391304\\
0.185	99.0144927536232\\
0.19	99.0869565217391\\
0.195	99.1304347826087\\
0.2	99.1739130434783\\
0.205	99.2173913043478\\
0.21	99.2463768115942\\
0.215	99.2898550724638\\
0.22	99.3188405797101\\
0.225	99.3188405797101\\
0.23	99.3623188405797\\
0.235	99.4202898550725\\
0.24	99.4202898550725\\
0.245	99.4347826086957\\
0.25	99.463768115942\\
};
\addlegendentry{HiPPI}

\addplot [color=red, line width=2.0pt]
  table[row sep=crcr]{%
0	87.768115942029\\
0.005	88.3188405797101\\
0.01	89.5797101449275\\
0.015	90.8695652173913\\
0.02	92.0289855072464\\
0.025	93.1884057971015\\
0.03	94.2463768115942\\
0.035	94.8405797101449\\
0.04	95.4782608695652\\
0.045	96.3478260869565\\
0.05	96.9855072463768\\
0.055	97.4347826086957\\
0.06	97.7536231884058\\
0.065	98.0724637681159\\
0.07	98.3333333333333\\
0.075	98.4347826086957\\
0.08	98.6086956521739\\
0.085	98.768115942029\\
0.09	98.8985507246377\\
0.095	98.9420289855072\\
0.1	98.9565217391304\\
0.105	99.0724637681159\\
0.11	99.1449275362319\\
0.115	99.2028985507246\\
0.12	99.3333333333333\\
0.125	99.4347826086957\\
0.13	99.4782608695652\\
0.135	99.5217391304348\\
0.14	99.5507246376812\\
0.145	99.6376811594203\\
0.15	99.6666666666667\\
0.155	99.695652173913\\
0.16	99.7391304347826\\
0.165	99.7536231884058\\
0.17	99.7826086956522\\
0.175	99.7826086956522\\
0.18	99.8260869565217\\
0.185	99.8695652173913\\
0.19	99.8840579710145\\
0.195	99.9130434782609\\
0.2	99.9275362318841\\
0.205	99.9565217391304\\
0.21	99.9565217391304\\
0.215	99.9565217391304\\
0.22	99.9565217391304\\
0.225	99.9565217391304\\
0.23	99.9565217391304\\
0.235	99.9565217391304\\
0.24	99.9565217391304\\
0.245	99.9710144927536\\
0.25	99.9710144927536\\
};
\addlegendentry{Ours}

\end{axis}
\end{tikzpicture}%
    \definecolor{mycolor1}{rgb}{0.92900,0.69400,0.12500}%
\definecolor{mycolor2}{rgb}{0.85000,0.32500,0.09800}%
\definecolor{mycolor3}{rgb}{0.49400,0.18400,0.55600}%
\definecolor{mycolor4}{rgb}{0.00000,0.44700,0.74100}%
\begin{tikzpicture}

\begin{axis}[%
width=.25\linewidth,
height=.18\linewidth,
at={(2.6in,1.228in)},
scale only axis,
xmin=0,
xmax=0.15,
xlabel style={font=\color{white!15!black}},
xlabel={Geodesic error},
ymin=65,
ymax=100,
xmajorgrids,
ymajorgrids,
ylabel style={font=\color{white!15!black}},
axis background/.style={fill=white},
title style={font=\bfseries},
title={FAUST},
]

\addplot [color=mycolor1, dashed, line width=2.0pt]
  table[row sep=crcr]{%
0	66.9857142857143\\
0.005	70.2285714285714\\
0.01	75.6571428571429\\
0.015	78.4\\
0.02	80.3428571428571\\
0.025	81.8142857142857\\
0.03	83.5571428571429\\
0.035	84.6\\
0.04	85.5\\
0.045	86.5857142857143\\
0.05	87.5\\
0.055	88.4571428571429\\
0.06	89.2285714285714\\
0.065	90.1285714285714\\
0.07	90.8285714285714\\
0.075	91.4285714285714\\
0.08	92.0142857142857\\
0.085	92.5571428571429\\
0.09	93.0428571428571\\
0.095	93.6571428571429\\
0.1	94.0857142857143\\
0.105	94.4142857142857\\
0.11	94.8\\
0.115	95.2\\
0.12	95.5714285714286\\
0.125	95.8571428571429\\
0.13	96.1142857142857\\
0.135	96.4571428571429\\
0.14	96.7571428571429\\
0.145	96.9428571428571\\
0.15	97.1571428571429\\
0.155	97.4428571428571\\
0.16	97.6285714285714\\
0.165	97.7571428571429\\
0.17	97.9142857142857\\
0.175	98.0428571428571\\
0.18	98.1714285714286\\
0.185	98.2285714285714\\
0.19	98.3714285714286\\
0.195	98.4285714285714\\
0.2	98.5857142857143\\
0.205	98.6714285714286\\
0.21	98.7142857142857\\
0.215	98.8285714285714\\
0.22	98.8428571428571\\
0.225	98.8571428571429\\
0.23	98.9714285714286\\
0.235	98.9714285714286\\
0.24	98.9857142857143\\
0.245	99.0142857142857\\
0.25	99.0428571428571\\
};

\addplot [color=mycolor2, dashed, line width=2.0pt]
  table[row sep=crcr]{%
0	76.8857142857143\\
0.005	77.4714285714286\\
0.01	78.7571428571429\\
0.015	80.0857142857143\\
0.02	81.5571428571429\\
0.025	82.9714285714286\\
0.03	84.7857142857143\\
0.035	86.1285714285714\\
0.04	87.1\\
0.045	88.2428571428571\\
0.05	89.3142857142857\\
0.055	90.1142857142857\\
0.06	90.9571428571429\\
0.065	91.6714285714286\\
0.07	92.3142857142857\\
0.075	92.7571428571429\\
0.08	93.3571428571429\\
0.085	93.8714285714286\\
0.09	94.3571428571429\\
0.095	94.8714285714286\\
0.1	95.3142857142857\\
0.105	95.6714285714286\\
0.11	96.0714285714286\\
0.115	96.4571428571429\\
0.12	96.7857142857143\\
0.125	97.0428571428571\\
0.13	97.2142857142857\\
0.135	97.5714285714286\\
0.14	97.8857142857143\\
0.145	98.0714285714286\\
0.15	98.3\\
0.155	98.5428571428571\\
0.16	98.6428571428571\\
0.165	98.7571428571429\\
0.17	98.9285714285714\\
0.175	99.0428571428571\\
0.18	99.2\\
0.185	99.2142857142857\\
0.19	99.2857142857143\\
0.195	99.3571428571429\\
0.2	99.4571428571429\\
0.205	99.4857142857143\\
0.21	99.5428571428571\\
0.215	99.6285714285714\\
0.22	99.6714285714286\\
0.225	99.7428571428571\\
0.23	99.7857142857143\\
0.235	99.8142857142857\\
0.24	99.8571428571429\\
0.245	99.8714285714286\\
0.25	99.8857142857143\\
};

\addplot [color=mycolor3, line width=2.0pt]
  table[row sep=crcr]{%
0	68.9857142857143\\
0.005	72.0857142857143\\
0.01	77.3428571428571\\
0.015	80.4142857142857\\
0.02	82.3428571428571\\
0.025	83.8142857142857\\
0.03	85.3\\
0.035	87.1714285714286\\
0.04	88.0285714285714\\
0.045	89.0285714285714\\
0.05	89.8285714285714\\
0.055	90.5\\
0.06	91.1285714285714\\
0.065	91.7857142857143\\
0.07	92.4142857142857\\
0.075	93\\
0.08	93.6142857142857\\
0.085	94\\
0.09	94.5285714285714\\
0.095	95.0571428571429\\
0.1	95.3857142857143\\
0.105	95.6857142857143\\
0.11	96.0285714285714\\
0.115	96.5571428571429\\
0.12	96.9428571428571\\
0.125	97.1714285714286\\
0.13	97.3571428571429\\
0.135	97.6285714285714\\
0.14	97.9857142857143\\
0.145	98.1428571428571\\
0.15	98.3714285714286\\
0.155	98.6\\
0.16	98.8\\
0.165	98.8857142857143\\
0.17	99.0428571428571\\
0.175	99.1571428571429\\
0.18	99.2857142857143\\
0.185	99.3\\
0.19	99.4714285714286\\
0.195	99.5285714285714\\
0.2	99.6285714285714\\
0.205	99.6714285714286\\
0.21	99.7\\
0.215	99.8\\
0.22	99.8285714285714\\
0.225	99.8571428571429\\
0.23	99.9285714285714\\
0.235	99.9285714285714\\
0.24	99.9428571428571\\
0.245	99.9714285714286\\
0.25	99.9857142857143\\
};

\addplot [color=mycolor4, line width=2.0pt]
  table[row sep=crcr]{%
0	78.2714285714286\\
0.005	78.4142857142857\\
0.01	79.3714285714286\\
0.015	80.6285714285714\\
0.02	82.1\\
0.025	83.4714285714286\\
0.03	85.3\\
0.035	86.4571428571429\\
0.04	87.4285714285714\\
0.045	88.5\\
0.05	89.5571428571429\\
0.055	90.3142857142857\\
0.06	91.0714285714286\\
0.065	91.8714285714286\\
0.07	92.4285714285714\\
0.075	92.9142857142857\\
0.08	93.5142857142857\\
0.085	94\\
0.09	94.5571428571429\\
0.095	95.0571428571429\\
0.1	95.4571428571429\\
0.105	95.8142857142857\\
0.11	96.2\\
0.115	96.6\\
0.12	96.8571428571429\\
0.125	97.0857142857143\\
0.13	97.2571428571429\\
0.135	97.5857142857143\\
0.14	97.9285714285714\\
0.145	98.1142857142857\\
0.15	98.3714285714286\\
0.155	98.5857142857143\\
0.16	98.6857142857143\\
0.165	98.8142857142857\\
0.17	98.9714285714286\\
0.175	99.0857142857143\\
0.18	99.2428571428571\\
0.185	99.2571428571429\\
0.19	99.3285714285714\\
0.195	99.4\\
0.2	99.5\\
0.205	99.5285714285714\\
0.21	99.5857142857143\\
0.215	99.6714285714286\\
0.22	99.7142857142857\\
0.225	99.7714285714286\\
0.23	99.8142857142857\\
0.235	99.8428571428571\\
0.24	99.8857142857143\\
0.245	99.8857142857143\\
0.25	99.9\\
};

\addplot [color=red, line width=2.0pt]
  table[row sep=crcr]{%
0	78.6285714285714\\
0.005	79.1428571428571\\
0.01	80.2857142857143\\
0.015	81.5285714285714\\
0.02	82.7714285714286\\
0.025	84.0714285714286\\
0.03	85.6142857142857\\
0.035	86.7285714285714\\
0.04	87.7142857142857\\
0.045	88.6857142857143\\
0.05	89.6571428571429\\
0.055	90.4142857142857\\
0.06	91.3\\
0.065	91.9571428571429\\
0.07	92.5857142857143\\
0.075	93.1\\
0.08	93.7571428571429\\
0.085	94.3\\
0.09	94.7714285714286\\
0.095	95.2571428571429\\
0.1	95.6857142857143\\
0.105	95.9714285714286\\
0.11	96.2857142857143\\
0.115	96.6714285714286\\
0.12	96.9571428571429\\
0.125	97.1571428571429\\
0.13	97.3285714285714\\
0.135	97.6714285714286\\
0.14	97.9142857142857\\
0.145	98.1\\
0.15	98.3714285714286\\
0.155	98.5857142857143\\
0.16	98.6857142857143\\
0.165	98.7857142857143\\
0.17	98.9714285714286\\
0.175	99.1\\
0.18	99.2428571428571\\
0.185	99.2428571428571\\
0.19	99.3142857142857\\
0.195	99.4\\
0.2	99.5142857142857\\
0.205	99.5428571428571\\
0.21	99.6\\
0.215	99.7\\
0.22	99.7428571428571\\
0.225	99.8\\
0.23	99.8571428571429\\
0.235	99.8857142857143\\
0.24	99.9285714285714\\
0.245	99.9285714285714\\
0.25	99.9428571428571\\
};

\end{axis}
\end{tikzpicture}%
    \definecolor{mycolor1}{rgb}{0.92900,0.69400,0.12500}%
\definecolor{mycolor2}{rgb}{0.85000,0.32500,0.09800}%
\definecolor{mycolor3}{rgb}{0.49400,0.18400,0.55600}%
\definecolor{mycolor4}{rgb}{0.00000,0.44700,0.74100}%
\begin{tikzpicture}

\begin{axis}[%
width=.25\linewidth,
height=.18\linewidth,
at={(2.6in,1.228in)},
scale only axis,
xmin=0,
xmax=0.15,
xlabel style={font=\color{white!15!black}},
xlabel={Geodesic error},
ymin=65,
ymax=100,
xmajorgrids,
ymajorgrids,
ylabel style={font=\color{white!15!black}},
axis background/.style={fill=white},
title style={font=\bfseries},
title={SCAPE},
]

\addplot [color=mycolor1, dashed, line width=2.0pt]
  table[row sep=crcr]{%
0	63.8666666666667\\
0.005	64.2333333333333\\
0.01	67.2333333333333\\
0.015	71.5666666666667\\
0.02	76.4666666666667\\
0.025	79.0333333333333\\
0.03	80.5666666666667\\
0.035	83.1666666666667\\
0.04	85.1333333333333\\
0.045	86.5333333333333\\
0.05	88.2\\
0.055	89.1666666666667\\
0.06	90.4666666666667\\
0.065	91.3\\
0.07	91.8333333333333\\
0.075	92.1333333333333\\
0.08	92.5666666666667\\
0.085	93.0666666666667\\
0.09	93.4666666666667\\
0.095	93.9666666666667\\
0.1	94.5333333333333\\
0.105	94.6666666666667\\
0.11	95.2666666666667\\
0.115	95.7333333333333\\
0.12	96.0666666666667\\
0.125	96.5\\
0.13	96.7333333333333\\
0.135	97.1333333333333\\
0.14	97.5333333333333\\
0.145	97.7\\
0.15	98\\
0.155	98.1333333333333\\
0.16	98.3333333333333\\
0.165	98.5666666666667\\
0.17	98.6333333333333\\
0.175	98.7666666666667\\
0.18	98.9666666666667\\
0.185	99.1\\
0.19	99.2333333333333\\
0.195	99.4\\
0.2	99.4666666666667\\
0.205	99.6333333333333\\
0.21	99.7666666666667\\
0.215	99.9333333333333\\
0.22	99.9333333333333\\
0.225	99.9333333333333\\
0.23	99.9333333333333\\
0.235	99.9666666666667\\
0.24	100\\
0.245	100\\
0.25	100\\
};

\addplot [color=mycolor2, dashed, line width=2.0pt]
  table[row sep=crcr]{%
0	74.1333333333333\\
0.005	74.4666666666667\\
0.01	75.9\\
0.015	78.0666666666667\\
0.02	81.8\\
0.025	84.1\\
0.03	86.7333333333333\\
0.035	88.9666666666667\\
0.04	90.7333333333333\\
0.045	92\\
0.05	93.1666666666667\\
0.055	93.9666666666667\\
0.06	94.9\\
0.065	95.5\\
0.07	95.9333333333333\\
0.075	96.2666666666667\\
0.08	96.3666666666667\\
0.085	96.5333333333333\\
0.09	96.9333333333333\\
0.095	97.2666666666667\\
0.1	97.3666666666667\\
0.105	97.4333333333333\\
0.11	97.5666666666667\\
0.115	97.6666666666667\\
0.12	97.8666666666667\\
0.125	98.1333333333333\\
0.13	98.2333333333333\\
0.135	98.3666666666667\\
0.14	98.5\\
0.145	98.7333333333333\\
0.15	98.9\\
0.155	99.0333333333333\\
0.16	99.1\\
0.165	99.2333333333333\\
0.17	99.3\\
0.175	99.3666666666667\\
0.18	99.3666666666667\\
0.185	99.3666666666667\\
0.19	99.4666666666667\\
0.195	99.5\\
0.2	99.5\\
0.205	99.6\\
0.21	99.6\\
0.215	99.6333333333333\\
0.22	99.6333333333333\\
0.225	99.6333333333333\\
0.23	99.6333333333333\\
0.235	99.6333333333333\\
0.24	99.6666666666667\\
0.245	99.7\\
0.25	99.7\\
};

\addplot [color=mycolor3, line width=2.0pt]
  table[row sep=crcr]{%
0	68.3666666666667\\
0.005	68.8333333333333\\
0.01	72.1333333333333\\
0.015	76.5666666666667\\
0.02	81\\
0.025	83.8\\
0.03	86.0333333333333\\
0.035	88.2666666666667\\
0.04	90.2333333333333\\
0.045	91.7333333333333\\
0.05	93.1666666666667\\
0.055	94.5\\
0.06	95\\
0.065	95.8\\
0.07	96.2666666666667\\
0.075	96.5333333333333\\
0.08	96.8666666666667\\
0.085	97\\
0.09	97.2\\
0.095	97.4666666666667\\
0.1	97.6333333333333\\
0.105	97.7333333333333\\
0.11	97.9666666666667\\
0.115	98.0666666666667\\
0.12	98.2\\
0.125	98.2666666666667\\
0.13	98.5666666666667\\
0.135	98.6666666666667\\
0.14	98.8\\
0.145	98.9666666666667\\
0.15	99.1666666666667\\
0.155	99.2666666666667\\
0.16	99.3\\
0.165	99.3666666666667\\
0.17	99.4333333333333\\
0.175	99.5\\
0.18	99.5\\
0.185	99.5333333333333\\
0.19	99.5666666666667\\
0.195	99.6666666666667\\
0.2	99.7333333333333\\
0.205	99.7666666666667\\
0.21	99.8\\
0.215	99.9\\
0.22	99.9333333333333\\
0.225	99.9333333333333\\
0.23	99.9333333333333\\
0.235	99.9333333333333\\
0.24	99.9666666666667\\
0.245	99.9666666666667\\
0.25	100\\
};

\addplot [color=mycolor4, line width=2.0pt]
  table[row sep=crcr]{%
0	78.6666666666667\\
0.005	78.8333333333333\\
0.01	80.1333333333333\\
0.015	82.2\\
0.02	85.0333333333333\\
0.025	87.0666666666667\\
0.03	88.8333333333333\\
0.035	90.6333333333333\\
0.04	91.7666666666667\\
0.045	92.9\\
0.05	94.0333333333333\\
0.055	94.7333333333333\\
0.06	95.6666666666667\\
0.065	96.0666666666667\\
0.07	96.4333333333333\\
0.075	96.7333333333333\\
0.08	96.8666666666667\\
0.085	97.0333333333333\\
0.09	97.3666666666667\\
0.095	97.7\\
0.1	97.8333333333333\\
0.105	97.9333333333333\\
0.11	98.1\\
0.115	98.2333333333333\\
0.12	98.4666666666667\\
0.125	98.6666666666667\\
0.13	98.7333333333333\\
0.135	98.8666666666667\\
0.14	99\\
0.145	99.2\\
0.15	99.3\\
0.155	99.4333333333333\\
0.16	99.5333333333333\\
0.165	99.6\\
0.17	99.6666666666667\\
0.175	99.7\\
0.18	99.7\\
0.185	99.7\\
0.19	99.8\\
0.195	99.8333333333333\\
0.2	99.8333333333333\\
0.205	99.9333333333333\\
0.21	99.9333333333333\\
0.215	99.9666666666667\\
0.22	99.9666666666667\\
0.225	99.9666666666667\\
0.23	99.9666666666667\\
0.235	99.9666666666667\\
0.24	100\\
0.245	100\\
0.25	100\\
};

\addplot [color=red, line width=2.0pt]
  table[row sep=crcr]{%
0	80.2\\
0.005	80.5666666666667\\
0.01	81.7333333333333\\
0.015	83\\
0.02	85.2666666666667\\
0.025	87.0333333333333\\
0.03	89.7666666666667\\
0.035	91.1\\
0.04	92.2\\
0.045	93.7\\
0.05	94.6666666666667\\
0.055	95.3333333333333\\
0.06	96.1333333333333\\
0.065	96.4\\
0.07	96.7666666666667\\
0.075	96.9666666666667\\
0.08	97.0333333333333\\
0.085	97.1\\
0.09	97.4666666666667\\
0.095	97.7666666666667\\
0.1	97.8666666666667\\
0.105	97.9\\
0.11	98.0666666666667\\
0.115	98.3\\
0.12	98.5333333333333\\
0.125	98.8\\
0.13	98.8666666666667\\
0.135	99\\
0.14	99.1\\
0.145	99.3333333333333\\
0.15	99.4333333333333\\
0.155	99.5\\
0.16	99.5666666666667\\
0.165	99.6666666666667\\
0.17	99.7666666666667\\
0.175	99.8\\
0.18	99.8\\
0.185	99.8\\
0.19	99.8333333333333\\
0.195	99.8666666666667\\
0.2	99.8666666666667\\
0.205	99.9333333333333\\
0.21	99.9333333333333\\
0.215	99.9333333333333\\
0.22	99.9333333333333\\
0.225	99.9333333333333\\
0.23	99.9333333333333\\
0.235	99.9333333333333\\
0.24	99.9666666666667\\
0.245	99.9666666666667\\
0.25	99.9666666666667\\
};

\end{axis}
\end{tikzpicture}%
    }
    \caption{Percentage of correct keypoints (PCK) curves for five methods on three datasets, TOSCA, FAUST and SCAPE. Our method leads to better PCK curves (also see the AUC in Tab.~\ref{table:resultSummary}) than its competitors across all datasets. Dashed lines indicate methods that do not jointly optimise for multi-matchings.}
    \label{fig:quantitative}
\end{figure*}
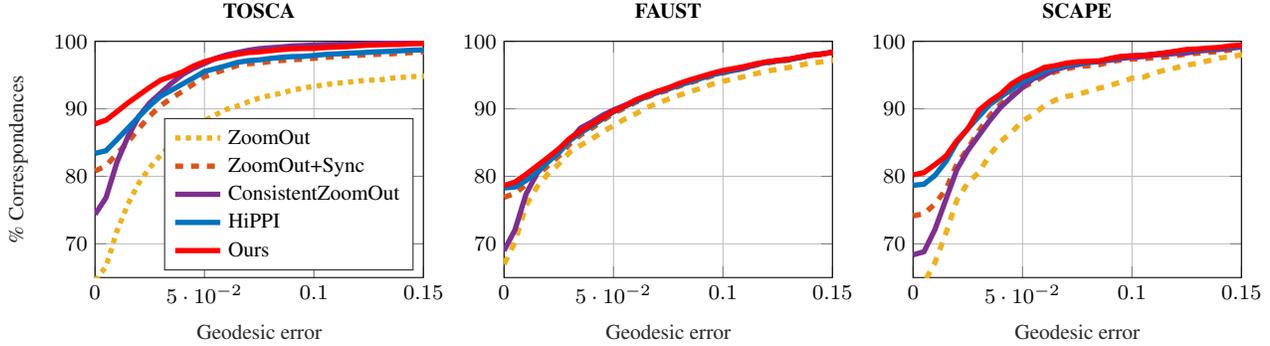

\section{Experiments}\label{sec:exp}

We show the effectiveness of our method on several datasets
and compare against state-of-the-art approaches.

\paragraph{Error measure. }
We evaluate the accuracy of correspondences using the Princeton benchmark protocol \cite{kim2011blended}. 
Given the ground-truth correspondences $(x_i, x_j^*)$ for each $x_i \in \mathcal{X}_i$, the error of the calculated match $(x_i, x_j)$ is given by the normalised geodesic distance between $x_j$ and $x_j^*$
\begin{align}
    \mathbf{e}(x_i) = \frac{\text{dist}_{geo}(x_j, x_j^*)}{\text{diam}(\mathcal{X}_j)}\,,
\end{align}
where $\text{diam}(\cdot)$ denotes the shape diameter.
We plot the accumulated errors smaller than a certain relative error, which is known as \emph{percentage of correct keypoints} (PCK) curve. 
The perfect solution results a constant curve at $100\%$, which amounts to an area under the curve (AUC) of $1$.

\paragraph{Cycle consistency. }
We quantify the cycle consistency of the methods in terms of the \emph{cycle error}, which is the proportion of the number of cycle-consistency violations, divided by the total number of cycles.

\paragraph{Methods.}
We compare our method against several recent state-of-the-art methods, including the pairwise matching approach \textsc{ZoomOut}~\cite{melzi19zoomout}, the two-stage approach \textsc{ZoomOut+Sync} that performs synchronisation to achieve cycle consistency in the results produced by \textsc{ZoomOut}, as well as the multi-matching methods \textsc{HiPPI}~\cite{bernard2019hippi} and \textsc{ConsistentZoomOut}~\cite{huang2020consistent}.

\paragraph{Setup.}
We use results produced by \textsc{ZoomOut} to initialise all other methods.
\textsc{ZoomOut} itself is initialised by the functional map solution~\cite{ovsjanikov2012functional} $\min_{\fm\in\R^{b\times b}} \Vert F\fm - G \Vert_F^2$ (without regularisers), where $F$ and $G$ are the concatenation of normalised Heat Kernel Signature \cite{bronstein2010scale} and \mbox{SHOT \cite{salti2014shot}.}
The output of \textsc{ZoomOut} are pairwise correspondences $\{P_{ij}\}$ and pairwise functional maps $\{\fm_{ij}\}$ between all pairs of shapes.
\textsc{ConsistentZoomOut} directly operates on the $\{\fm_{ij}\}$, so they are used for its initialisation. 
In contrast, \textsc{HiPPI} and our method require shape-to-universe representations. To obtain these, we use  synchronisation to extract the shape-to-universe representation from the pairwise transformations. By doing so, we obtain the initial $U$ and $Q$. We refer to this method of synchronising the \textsc{ZoomOut} results as \textsc{ZoomOut+Sync}, which directly serves as initialisation for \textsc{HiPPI} and our method. Throughout this section we also report results of the initialisation methods \textsc{ZoomOut} and \textsc{ZoomOut+Sync}. Further details can be found in the supplementary material.

\begin{table}[]
\footnotesize
\begin{tabular}{@{}llccccc@{}}
 &  & \multicolumn{1}{l}{\rot{60}{\parbox{1cm}{Ours}}} & \multicolumn{1}{l}{\rot{60}{\parbox{1cm}{\textsc{HiPPI}}}} & \multicolumn{1}{l}{\rot{60}{\parbox{1cm}{\textsc{ZoomOut\\+Sync}}}} & \multicolumn{1}{l}{\rot{60}{\textsc{ZoomOut}}} & \multicolumn{1}{l}{\rot{60}{\parbox{1cm}{\textsc{Consistent\\~~ZoomOut}}}} \\ \toprule
\multirow{3}{*}{\rot{90}{\textbf{TOSCA}}} & AUC $\uparrow$ & \textbf{0.968} & 0.951 & 0.943 & 0.882 & 0.956 \\
 & time [s] $\downarrow$ & \textbf{28.3} & 95.2 & 305.9 & 164.6 & 79.9 \\
 & cycle error $\downarrow$ & \textbf{0} & \textbf{0} & \textbf{0} & 0.68 & 0.17 \\
 \midrule
\multirow{3}{*}{\rot{90}{\textbf{FAUST}}} & AUC $\uparrow$ & \textbf{0.914} & 0.911 & 0.909 & 0.891 & 0.908 \\
 & time [s] $\downarrow$ &\textbf{23.2} & 82.8 & 170.6 & 122.8 & 52.9 \\
 & cycle error $\downarrow$ & \textbf{0} & \textbf{0} & \textbf{0} & 0.41 & 0.16 \\
 \midrule
\multirow{3}{*}{\rot{90}{\textbf{SCAPE}}} & AUC $\uparrow$ & \textbf{0.940} & 0.938 & 0.925 & 0.884 & 0.922 \\
 & time [s] $\downarrow$ & 126.5 & 218.8 & 552.3 & 275.2 & \textbf{82.0} \\
 & cycle error $\downarrow$ & \textbf{0} & \textbf{0} & \textbf{0} & 0.58 & 0.25 \\ \bottomrule
\end{tabular}
\caption{Quantitative evaluation in terms of the area under the PCK curve (AUC), the  runtime (excluding initialisation, which are listed in separate columns), and the cycle error. All values are averaged over all instances for each dataset.}
\label{table:resultSummary}
\end{table}

\subsection{Comparisons to State-of-the-Art Methods}
\paragraph{TOSCA dataset.}
The TOSCA dataset \cite{bronstein2008numerical} contains $76$ shapes from $8$ classes depicting different humans and creatures.
We downsample all shapes to $2{,}000$ faces.
Our method shows state-of-the-art results and surpasses all competitors on this dataset, see Fig.~\ref{fig:quantitative} and Tab.~\ref{table:resultSummary}.
Exemplary matchings of all competing methods can be found in Fig.~\ref{fig:toscaQual}.

\begin{figure*}[h]
\centering
\footnotesize
    \begin{tabular}{c|c|c}
    Colour legend & Ours (bijective \cmark, cycle-consistent \cmark) & \textsc{HiPPI} (bijective \cmark, cycle-consistent \cmark) \\
    \includegraphics[width=.09\linewidth]{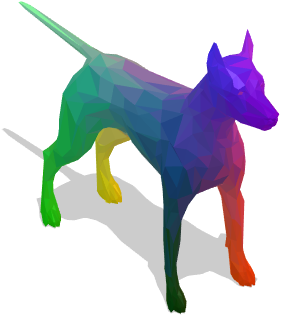} &
    \includegraphics[width=.08\linewidth]{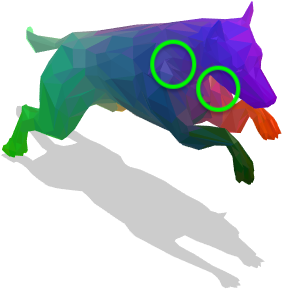} 
    \includegraphics[width=.09\linewidth]{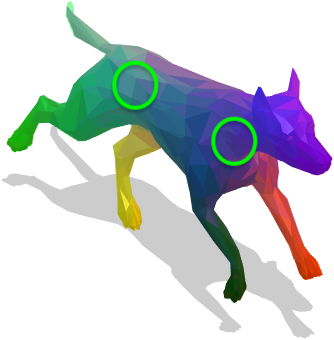} 
    \includegraphics[width=.05\linewidth]{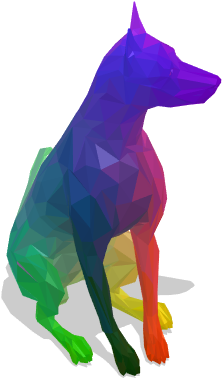} 
    \includegraphics[width=.07\linewidth]{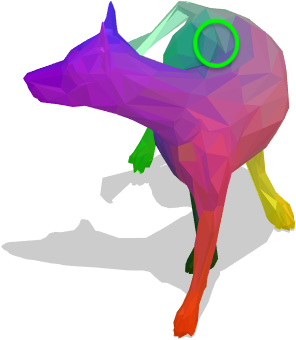} &
    \includegraphics[width=.08\linewidth]{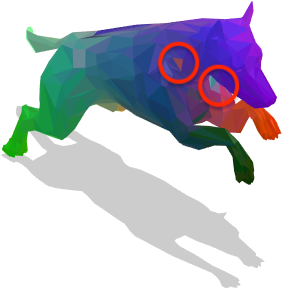} 
    \includegraphics[width=.09\linewidth]{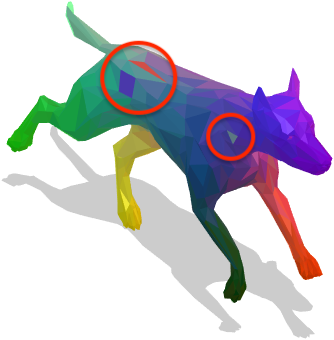} 
    \includegraphics[width=.05\linewidth]{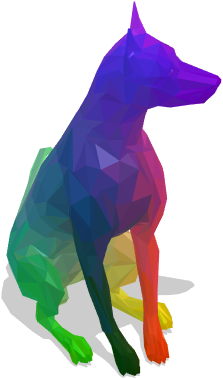} 
    \includegraphics[width=.07\linewidth]{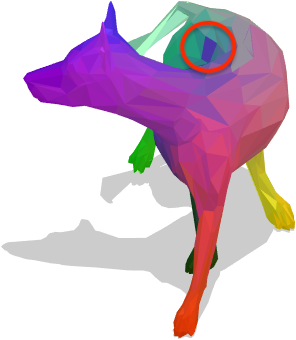} \\
    \hline \vspace{-0.35cm}& & \\ 
    \textsc{ZoomOut+Sync} (bijective \xmark, cycle-consistent \xmark) & \textsc{ZoomOut} (bijective \xmark, cycle-consistent \xmark) & \textsc{ConsistentZoomOut} (bij. \xmark, cycle-cons. \xmark$^\ddagger$)\\
    \includegraphics[width=.08\linewidth]{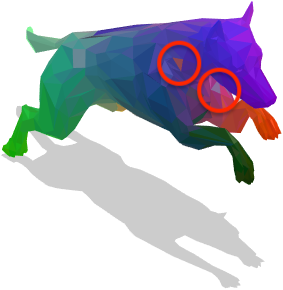} 
    \includegraphics[width=.09\linewidth]{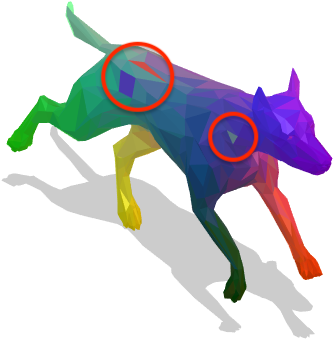} 
    \includegraphics[width=.05\linewidth]{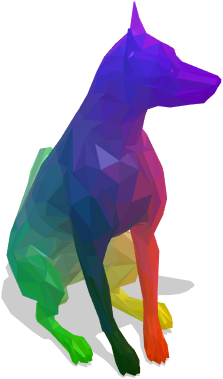} 
    \includegraphics[width=.07\linewidth]{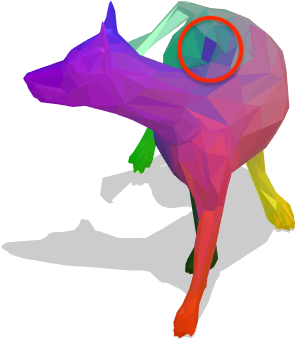} & 
    \includegraphics[width=.08\linewidth]{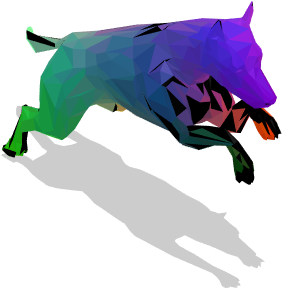} 
    \includegraphics[width=.09\linewidth]{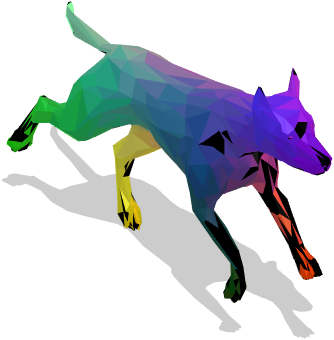} 
    \includegraphics[width=.05\linewidth]{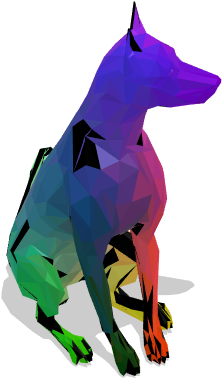} 
    \includegraphics[width=.07\linewidth]{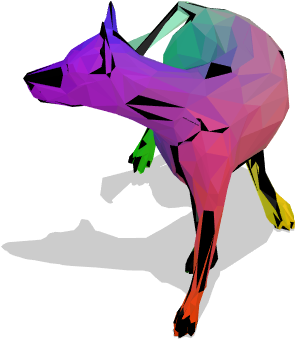} &
    \includegraphics[width=.08\linewidth]{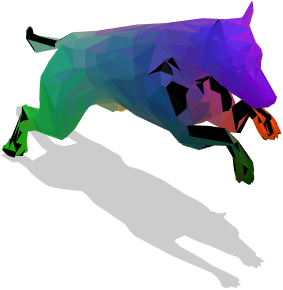} 
    \includegraphics[width=.09\linewidth]{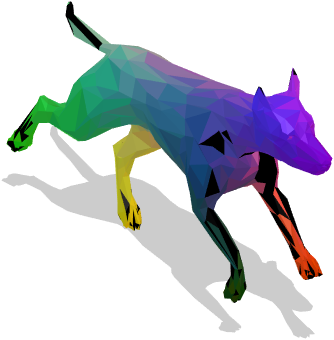} 
    \includegraphics[width=.05\linewidth]{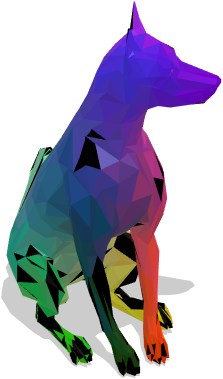} 
    \includegraphics[width=.07\linewidth]{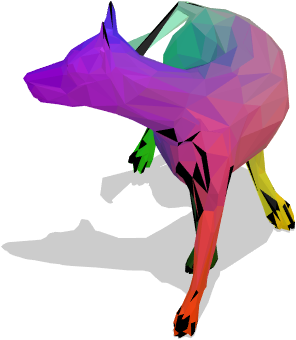}
    \end{tabular}
    \caption{Qualitative examples of correspondences on the TOSCA dog class. Black indicates no matching due to non-bijectivity.
    Our method is cycle-consistent and improves upon the non-smooth and noisy correspondences of the two-stage initialisation obtained via \textsc{ZoomOut+Sync}, whereas \textsc{HiPPI} does not (red circles). \textsc{ZoomOut} and \textsc{ConsistentZoomOut} have many unmatched points (black areas). $^\ddagger$\textsc{ConsistentZoomOut} obtains cycle-consistent $\fm_{ij}$, but not $P_{ij}$. (Best viewed magnified on screen)
    }
    \label{fig:toscaQual} 
\end{figure*}

\paragraph{FAUST dataset.}
The FAUST dataset \cite{bogo2014faust} contains real scans of $10$ different humans in different poses. 
We use the registration subset with $10$ poses for each class and downsample each shape to $2{,}000$ faces.
Our method shows state-of-the-art results on this dataset, see Fig.~\ref{fig:quantitative} and Tab.~\ref{table:resultSummary}.
While the PCK curves between ours, \textsc{ZoomOut+Sync} and \textsc{HiPPI} in Fig.~\ref{fig:quantitative} are  close, the AUC in Tab.~\ref{table:resultSummary} shows that our performance is still superior by a small margin. Qualitative results can be found in the supplementary material.

\begin{figure*}[tbh]
\footnotesize
\centering
\begin{tabular}{c|c|c}
    Colour legend & Ours (bijective \cmark, cycle-consistent \cmark) & \textsc{HiPPI} (bijective \cmark, cycle-consistent \cmark) \\
    \includegraphics[width=.06\linewidth]{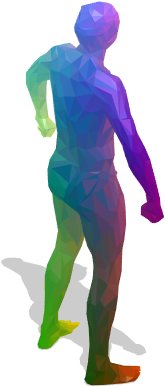} &
    \includegraphics[width=.072\linewidth]{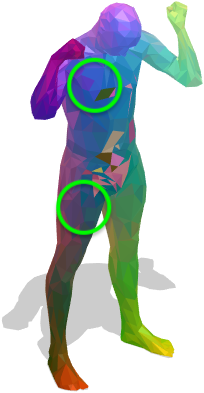} 
    \includegraphics[width=.063\linewidth]{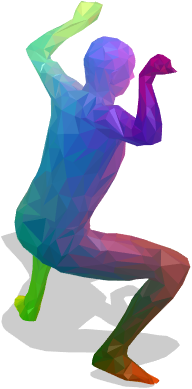} 
    \includegraphics[width=.072\linewidth]{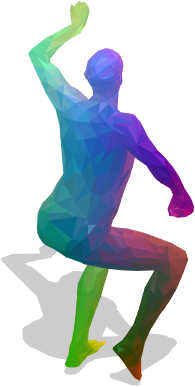} 
    \includegraphics[width=.079\linewidth]{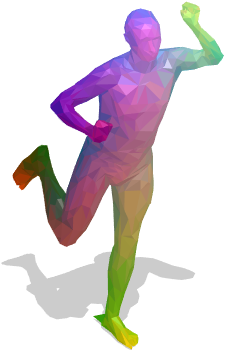} &
    \includegraphics[width=.072\linewidth]{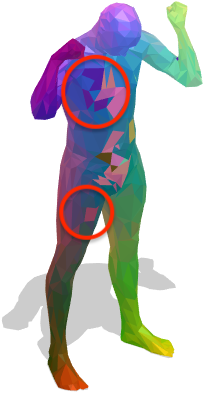} 
    \includegraphics[width=.063\linewidth]{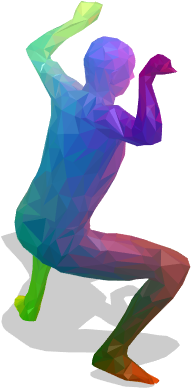} 
    \includegraphics[width=.072\linewidth]{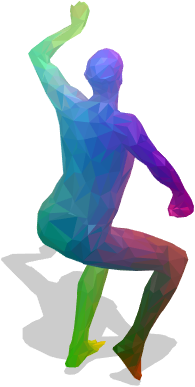} 
    \includegraphics[width=.079\linewidth]{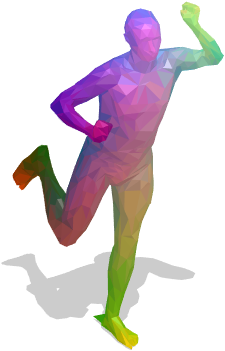} \\
    \hline \vspace{-0.35cm} & & \\ 
    \textsc{ZoomOut+Sync} (bijective \cmark, cycle-consistent \cmark) & \textsc{ZoomOut} (bijective \xmark, cycle-consistent \xmark) & \textsc{ConsistentZoomOut} (bij. \xmark, cycle-cons. \xmark$^\ddagger$) \\
    \includegraphics[width=.072\linewidth]{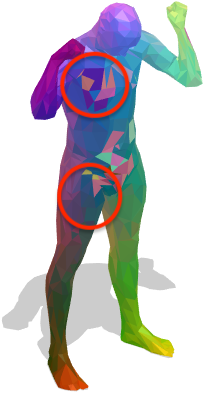} 
    \includegraphics[width=.063\linewidth]{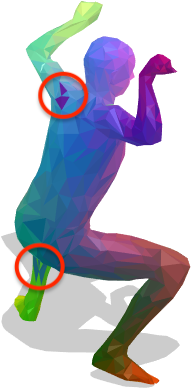} 
    \includegraphics[width=.072\linewidth]{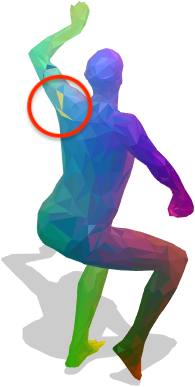} 
    \includegraphics[width=.079\linewidth]{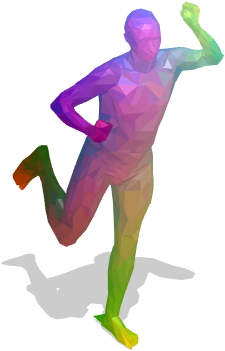} & 
    \includegraphics[width=.072\linewidth]{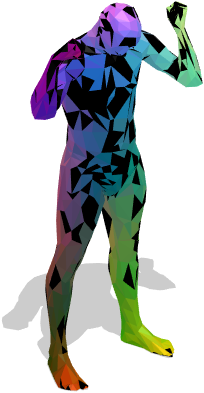} 
    \includegraphics[width=.063\linewidth]{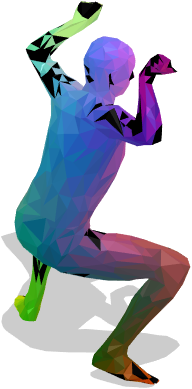} 
    \includegraphics[width=.072\linewidth]{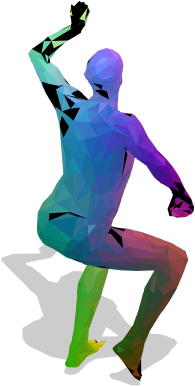} 
    \includegraphics[width=.079\linewidth]{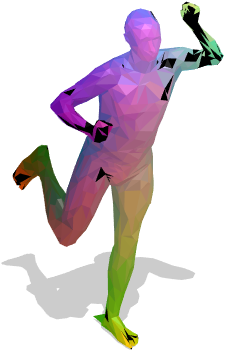} & 
    \includegraphics[width=.072\linewidth]{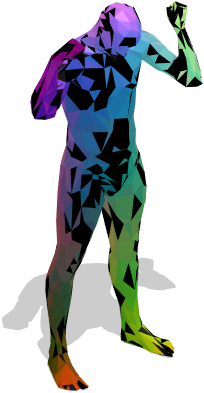} 
    \includegraphics[width=.063\linewidth]{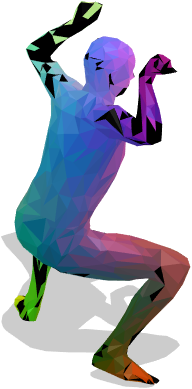} 
    \includegraphics[width=.072\linewidth]{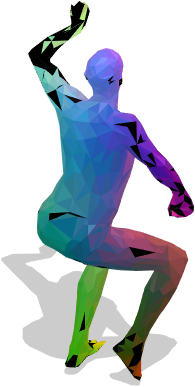} 
    \includegraphics[width=.079\linewidth]{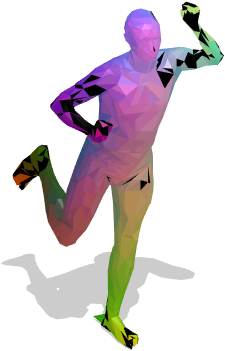}
\end{tabular}
    \caption{Qualitative examples of correspondences on SCAPE. Black indicates no matching due to non-bijectivity. As in Fig.~\ref{fig:toscaQual}, our results contain the least noise and are cycle-consistent, although there is one outlier shape where neither \textsc{HiPPI} nor our method could recover from a bad initialisation. $^\ddagger$\textsc{ConsistentZoomOut} obtains cycle-consistent $\fm_{ij}$, but not $P_{ij}$. (Best viewed magnified on screen)}
    \label{fig:scapeQual} 
\end{figure*}

\paragraph{SCAPE dataset.}
The SCAPE dataset \cite{anguelov2005scape} contains $72$ poses of the same person, of which we chose $15$ randomly and downsample them to $2{,}000$ faces. 
Our method shows state-of-the-art results on this dataset, see Fig.~\ref{fig:quantitative} and Tab.~\ref{table:resultSummary}.
 Exemplary matchings of all methods can be found in Fig.~\ref{fig:scapeQual}.

\subsection{Multi-Matching of Partial Shapes} \label{sec:partial}
We demonstrate that our method applies to the difficult setting of matching partial shapes. As a proof-of-concept, we created a partial dataset by removing several parts of a shape from the TOSCA dataset. 
Most pipelines for partial matching include the full reference shape to resolve some of the complexity. Although our optimisation does not need any information about the complete geometry, we use a partiality-adjusted version of \textsc{ZoomOut} to obtain the shape-to-universe initialisation for \textsc{IsoMuSh}. 
In this case, the optimal universe has the dimension of the full shape. 
Fig.~\ref{fig:teaser} shows that our method finds the correct correspondence among the partial shape collection, while being cycle-consistent. 
Partial functional maps are rectangular and low-rank \cite{rodola2016partial}, and this experiments shows that our method can also handle this more general case. More details can be found in the supplementary material.

\section{Discussion \& Future Work}
\paragraph{Deep learning.} It was shown that deep learning is an extremely powerful approach for extracting 
shape correspondences~\cite{litany2017deep,halimi2019unsupervised,roufosse19unsupervised,groueix19cycleconsistent}. However, the focus of this work is on establishing a fundamental optimisation problem formulation for cycle-consistent isometric multi-shape matching. As such, this work does not focus on learning methods per-se, but we believe that it has a strong potential to spark further work in this direction. In particular, our isometric multi-matching formulation can be integrated into an end-to-end learning framework via differentiable programming techniques~\cite{mena2018learning}. Moreover in machine learning, an entire shape collection is typically used for training, 
so that our multi-matching setting is conceptually better-suited compared to the traditionally used pairwise matching methods. 

\paragraph{Convergence.} We have proven that the \textsc{IsoMuSh} algorithm is convergent in the objective $f(\cdot,\cdot)$. However, we did not establish convergence of the variables $U$ and $Q$. In this context, we note that there are equivalence classes of $U$ and $Q$ that lead to the same objective value. To be more specific, for any (full) $d \times d$ permutation matrix $P$, and any $\fm \in \orth_b$ we have $(UP) \in \perm$, $(Q\fm) \in \orth$, and $f(U,Q) = f(UP,Q\fm)$. The latter can be verified by plugging $UP$ and $Q\fm$ into $f$ while making use of the orthogonality of $P$ and $\fm$. Although the \textsc{IsoMuSh} algorithm is convergent,  and we have empirically verified that it improves upon the state-of-the-art for the isometric multi-shape matching problem, the investigation of stronger convergence results is an interesting direction for future work.

\section{Conclusion}
We presented a novel formulation for the isometric multi-shape matching problem. Our main idea is to simultaneously solve for shape-to-universe matchings and shape-to-universe functional maps. By doing so, we generalise the popular functional map framework to multi-matching, while guaranteeing cycle consistency, both for the shape-to-universe matchings, as well as for the shape-to-universe functional maps. This contrasts the recent \textsc{ConsistentZoomOut}~\cite{huang2020consistent} method, which does not obtain cycle-consistent multi-matchings. Our algorithm is efficient, straightforward to implement, and montonically increases the objective function. Experimentally we have demonstrated that our method outperforms recent state-of-the-art techniques in terms of matching quality, while producing cycle-consistent results and being efficient.

\vspace{0.3cm}
\paragraph{\textbf{Acknowledgements.}}
The authors gracefully acknowledge the support from the ERC Advanced Grant SIMULACRON, the Munich Center for Machine Learning, the CRC "Discretization in Geometry and Dynamics" and the Swedish Research Council (2019-04769).

{\small
\bibliographystyle{ieee_fullname}
\bibliography{paper}
}

\appendix
\clearpage
\begin{center}
\textbf{\large Supplementary Material}
\end{center}

\section{Theoretical Analysis (with Proofs)}
\begin{lem}\label{lem:U}
$\langle U_{t}^\top \mathbf{\Phi} Q_t , U_{t+1}^\top \mathbf{\Phi} Q_t \rangle \geq \langle U_t^\top \mathbf{\Phi} Q_t , U_t^\top \mathbf{\Phi} Q_t \rangle$ holds for any $t$.
\end{lem}
\begin{proof}
According to \eqref{eq:projP},  the function $\langle U_t^\top \mathbf{\Phi} Q_t, U \mathbf{\Phi} Q_t \rangle$ is maximised w.r.t. $U$ over $\perm$ for the choice $U = U_{t+1}$. 
Our claim follows immediately from this.
\end{proof}

\begin{prop}[Monotonicity of $U$-update]\label{prop:U}\ \\
The objective values cannot decrease through the $U$-update step~\eqref{eq:projP}, and $\langle U_{t+1}^\top \mathbf{\Phi} Q_t , U_{t+1}^\top \mathbf{\Phi} Q_t \rangle \geq \langle U_t^\top \mathbf{\Phi} Q_t , U_t^\top \mathbf{\Phi} Q_t \rangle$ holds.
\end{prop}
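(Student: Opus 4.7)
The plan is to derive the proposition directly from Lemma~\ref{lem:U} combined with the Cauchy–Schwarz inequality in the Frobenius inner product. Set $A := U_t^\top \mathbf{\Phi} Q_t$ and $B := U_{t+1}^\top \mathbf{\Phi} Q_t$, so that the inequalities to relate read $\langle A, B\rangle \geq \langle A,A\rangle$ (Lemma~\ref{lem:U}) and, as a goal, $\langle B,B\rangle \geq \langle A,A\rangle$.

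First I would note that Cauchy–Schwarz on the Frobenius inner product yields $\langle A, B\rangle \leq \|A\|_F \, \|B\|_F$. Chaining this with Lemma~\ref{lem:U} gives $\|A\|_F^2 \leq \langle A,B\rangle \leq \|A\|_F\,\|B\|_F$. If $\|A\|_F = 0$ then $\langle A,A\rangle = 0$, and the claim is trivial because $\langle B,B\rangle \geq 0$. Otherwise I may divide by $\|A\|_F > 0$ to obtain $\|A\|_F \leq \|B\|_F$, and squaring yields $\langle A,A\rangle \leq \langle B,B\rangle$, which is precisely the statement of the proposition.

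There is no substantive obstacle here: the whole argument is a two-line consequence of Lemma~\ref{lem:U} and Cauchy–Schwarz. The only subtlety worth flagging is the degenerate case $A = 0$, which must be handled separately since dividing by $\|A\|_F$ is otherwise illegitimate; this case is immediate. The overall structure mirrors the standard trick for showing that a projection-based alternating scheme is monotone when the objective is a squared Frobenius norm and the projection maximises a linear functional against the previous iterate.
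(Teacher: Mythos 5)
Your argument is correct: Lemma~\ref{lem:U} gives $\langle A,A\rangle \leq \langle A,B\rangle$ with $A = U_t^\top \mathbf{\Phi} Q_t$ and $B = U_{t+1}^\top \mathbf{\Phi} Q_t$, Cauchy--Schwarz bounds $\langle A,B\rangle$ by $\|A\|_F\|B\|_F$, and after the (necessary, and correctly flagged) case split on $\|A\|_F=0$ the division and squaring yield $\langle A,A\rangle \leq \langle B,B\rangle$. The paper reaches the same conclusion from the same key lemma but by a slightly different elementary step: instead of Cauchy--Schwarz it expands $0 \leq \|B - A\|_F^2 = \langle B,B\rangle - 2\langle A,B\rangle + \langle A,A\rangle$, replaces the last term via Lemma~\ref{lem:U} to get $0 \leq \langle B,B\rangle - \langle A,B\rangle$, and then chains $\langle A,A\rangle \leq \langle A,B\rangle \leq \langle B,B\rangle$ by transitivity. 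The paper's route needs no case distinction and no division, staying entirely within linear inequalities between inner products, and it produces the intermediate bound $\langle A,B\rangle \leq \langle B,B\rangle$ as a byproduct; your route is equally short and additionally gives the norm inequality $\|A\|_F \leq \|B\|_F$ explicitly, at the cost of handling the degenerate case $A=0$ separately. Both are valid proofs of the proposition, and the same two variants apply verbatim to the $Q$-update in Prop.~\ref{prop:Q}.
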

\begin{proof}
We prove the proposition by using Lemma~\ref{lem:U}. Recalling that $Z= \mathbf{\Phi} Q_t Q_t^\top \mathbf{\Phi}^\top$, we can see that
     \begin{align}
             0 &\leq \|U_{t+1}^\top \mathbf{\Phi} Q_t - U_t^\top \mathbf{\Phi} Q_t \|_F^2 \\
            & = \langle U_{t+1}^\top \mathbf{\Phi} Q_t, U_{t+1}^\top \mathbf{\Phi} Q_t \rangle - 2 \langle U_{t+1}^\top \mathbf{\Phi} Q_t,U_t^\top \mathbf{\Phi} Q_t \rangle \nonumber \\
            & \quad + \langle U_t^\top \mathbf{\Phi} Q_t, U_t^\top \mathbf{\Phi} Q_t \rangle  \\
            &= \langle U_{t+1}^\top, U_{t+1}^\top Z \rangle - 2 \langle U_{t+1}^\top,U_t^\top Z\rangle + \langle U_t^\top, U_t Z \rangle  \,.
     \end{align}
     From Lemma~\ref{lem:U} and using the symmetry of Z, we know that $\langle U_{t+1}^\top, U_t^\top Z \rangle \geq \langle U_t^\top, U_{t}^\top Z \rangle$. 
     By transitivity this leads to
     \begin{align*}
         &0 \leq \langle U_{t+1}^\top, U_{t+1}^\top Z \rangle - 2 \langle U_{t+1}^\top,U_t^\top Z\rangle + \langle U_{t+1}^\top, U_t^\top Z \rangle\,,
         \intertext{so that}
          &\  \langle U_{t+1}^\top, U_t^\top Z \rangle \leq \langle U_{t+1}^\top, U_{t+1}^\top Z \rangle\,.
     \end{align*}
\end{proof}

\begin{lem}\label{lem:Q} In each iteration $t$,
$\langle U_{t+1}^\top \mathbf{\Phi} Q_t , U_{t+1}^\top \mathbf{\Phi} Q_{t+1} \rangle \geq \langle U_{t+1}^\top \mathbf{\Phi} Q_t , U_{t+1}^\top \mathbf{\Phi} Q_t \rangle$ holds.
\end{lem}
\begin{proof}
Analogously to the proof of Lemma~\ref{lem:U}, and according to \eqref{eq:projO}, the choice $Q = Q_{t+1}$ is the element maximising the expression $\langle U_{t+1}^\top \mathbf{\Phi} Q_t , U_{t+1}^\top \mathbf{\Phi} Q \rangle$ w.r.t. $Q$ over $\orth$.
\end{proof}

\begin{prop}[Monotonicity of $Q$-update]\label{prop:Q} \ \\
The objective values cannot decrease through the $Q$-update~\eqref{eq:projO}, and $\langle U_{t+1}^\top \mathbf{\Phi} Q_{t+1} , U_{t+1}^\top \mathbf{\Phi} Q_{t+1} \rangle \geq \langle U_{t+1}^\top \mathbf{\Phi} Q_t , U_{t+1}^\top \mathbf{\Phi} Q_t \rangle$ holds.
\end{prop}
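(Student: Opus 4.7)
The plan is to mirror the proof of Proposition~\ref{prop:U} almost verbatim, with $Q$ in the role of $U$ and $\overline{Z} = \mathbf{\Phi}^\top U_{t+1} U_{t+1}^\top \mathbf{\Phi}$ in the role of $Z$. The key observation is that $\overline{Z}$ is symmetric and positive semidefinite, which together with Lemma~\ref{lem:Q} is all we need to chain the inequalities.

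First I would expand the nonnegative quantity $\|U_{t+1}^\top \mathbf{\Phi}(Q_{t+1} - Q_t)\|_F^2 \geq 0$ via the polarisation identity, obtaining
\begin{align*}
 0 &\leq \langle U_{t+1}^\top \mathbf{\Phi} Q_{t+1}, U_{t+1}^\top \mathbf{\Phi} Q_{t+1}\rangle \\
 &\quad - 2\langle U_{t+1}^\top \mathbf{\Phi} Q_{t+1}, U_{t+1}^\top \mathbf{\Phi} Q_{t}\rangle \\
 &\quad + \langle U_{t+1}^\top \mathbf{\Phi} Q_t, U_{t+1}^\top \mathbf{\Phi} Q_t\rangle.
\end{align*}
Rewriting each term through the cyclic/trace identity $\langle U_{t+1}^\top \mathbf{\Phi} A, U_{t+1}^\top \mathbf{\Phi} B\rangle = \langle A, \overline{Z} B\rangle$ expresses the right-hand side entirely in terms of $Q_t$, $Q_{t+1}$ and $\overline{Z}$.

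Next I would invoke Lemma~\ref{lem:Q}, which gives $\langle Q_t, \overline{Z} Q_{t+1}\rangle \geq \langle Q_t, \overline{Z} Q_t\rangle$, and use the symmetry of $\overline{Z}$ to rewrite the left-hand side as $\langle Q_{t+1}, \overline{Z} Q_t\rangle$. Substituting this lower bound into the displayed inequality (in the same transitivity move used in the proof of Proposition~\ref{prop:U}) collapses the middle term and yields $\langle Q_{t+1}, \overline{Z} Q_t\rangle \leq \langle Q_{t+1}, \overline{Z} Q_{t+1}\rangle$. Combining with Lemma~\ref{lem:Q} once more produces the desired chain $\langle Q_t, \overline{Z} Q_t\rangle \leq \langle Q_{t+1}, \overline{Z} Q_{t+1}\rangle$, which upon undoing the $\overline{Z}$-substitution is precisely the claim.

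There is no real obstacle here: the argument is a mechanical transposition of the proof of Proposition~\ref{prop:U}, and the only things I need to be careful about are (i) verifying symmetry of $\overline{Z}$ so that the inner-product rearrangement is valid, and (ii) ensuring the indices $t$ and $t+1$ are tracked correctly, since both sides of the inequality now live at the same $U$-iterate $U_{t+1}$.
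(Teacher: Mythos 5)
Your proposal is correct and follows essentially the same route as the paper's proof: expand the nonnegative quantity $\|U_{t+1}^\top \mathbf{\Phi} Q_t - U_{t+1}^\top \mathbf{\Phi} Q_{t+1}\|_F^2$ in terms of $\overline{Z}$, apply Lemma~\ref{lem:Q} together with the symmetry of $\overline{Z}$, and conclude by transitivity. The only cosmetic difference is that you cite positive semidefiniteness of $\overline{Z}$, which is not actually needed since nonnegativity already comes from the squared Frobenius norm.
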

\begin{proof}
The proof is analogous to Prop.~\ref{prop:U}. For $\overline{Z} = \mathbf{\Phi}^\top U_{t+1} U_{t+1}^\top \mathbf{\Phi}$ we observe that
\begin{align}
    0 &\leq \|  U_{t+1}^\top \mathbf{\Phi} Q_t - U_{t+1}^\top \mathbf{\Phi} Q_{t+1} \|^2_F \\
    & = \langle Q_t, \overline{Z}Q_t \rangle  - 2\langle Q_t, \overline{Z}Q_{t+1} \rangle  + \langle Q_{t+1}, \overline{Z}Q_{t+1} \rangle \,.
\end{align}
From Lemma~\ref{lem:Q} we have $\langle Q_{t}^\top, \overline{Z}Q_{t+1} \rangle \geq \langle Q_{t}^\top, \overline{Z}Q_{t} \rangle$, so that our claim follows by transitivity.
\end{proof}

By combining these properties we obtain the following immediate result regarding Algorithm~\ref{alg:isomush}:
\begin{theorem}[Convergence]\ \\
The sequence $(f(U_t,Q_t))_{t=1,2,\ldots}$ is convergent and  Algorithm~\ref{alg:isomush} terminates in finite time.
\end{theorem}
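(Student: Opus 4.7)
The plan is to chain the two monotonicity propositions and then invoke a bounded-monotone argument on the compact feasible set. First I would combine Proposition~\ref{prop:U} and Proposition~\ref{prop:Q} via transitivity: in iteration $t$, the $U$-update gives $f(U_{t+1},Q_t) \geq f(U_t,Q_t)$, and the subsequent $Q$-update gives $f(U_{t+1},Q_{t+1}) \geq f(U_{t+1},Q_t)$. Stringing these together yields $f(U_{t+1},Q_{t+1}) \geq f(U_t,Q_t)$ for every $t$, which is precisely the monotonicity claim.

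Next I would argue that the feasible region $\perm \times \orth$ is compact. The set $\perm$ is a finite subset of $\{0,1\}^{m \times d}$ (hence trivially compact), and $\orth$ is a finite product of copies of the compact orthogonal group $\orth_b$, which is closed and bounded in $\R^{b \times b}$. Since $f$ is a polynomial in the entries of $U$ and $Q$, it is continuous and therefore attains a maximum on this compact set. This supplies a uniform upper bound $M$ with $f(U_t,Q_t) \leq M$ for all $t$. The monotone convergence theorem then yields that $(f(U_t,Q_t))_t$ converges to some limit $f^\star \leq M$.

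For finite termination I would use convergence to deduce that $f(U_{t+1},Q_{t+1}) - f(U_t,Q_t) \to 0$, and note that $f(U_t,Q_t)$ is bounded below by the positive initial value $f(U_0,Q_0)$ thanks to monotonicity (the nondegenerate initialisation gives $f(U_0,Q_0) > 0$, as $\mathbf{\Phi}$ has full column rank and the initial $U_0, Q_0$ are nontrivial). Therefore $f(U_t,Q_t)/f(U_{t+1},Q_{t+1}) \to 1$, so for any $\epsilon > 0$ the stopping criterion is eventually satisfied, giving termination in finitely many steps.

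The genuine mathematical content has already been isolated in Lemmas~\ref{lem:U},~\ref{lem:Q} and Propositions~\ref{prop:U},~\ref{prop:Q}; the present theorem is essentially a corollary and has no real obstacle beyond assembling those pieces. The only subtle point worth spelling out is the strict positivity needed to make the ratio-based stopping criterion meaningful — but this follows from the initialisation and monotonicity, so I would mention it briefly rather than prove it formally.
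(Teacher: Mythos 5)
Your proof takes essentially the same route as the paper's: chain the two monotonicity propositions (Prop.~\ref{prop:U} and Prop.~\ref{prop:Q}) to get a nondecreasing objective sequence, bound $f$ from above using compactness of $\perm$ and $\orth$, and conclude by monotone convergence. Your extra ratio argument for why the stopping criterion is eventually met (including the positivity caveat for $f(U_0,Q_0)$) is sound and actually fills in the finite-termination claim that the paper's own proof leaves implicit.
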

\begin{proof}
For any $t$ we have  $U_t \in \perm$ and $Q_t \in \orth$. Hence, the value of  $f(U_t,Q_t)$ is bounded from above (both $\perm$ and $\orth$ are compact sets). Combined with the monotonicity of the $U$-update (Prop.~\ref{prop:U}) and $Q$-update (Prop.~\ref{prop:Q}), this shows that the sequence $(f(U_t,Q_t))_{t=1,2\ldots}$ converges.
\end{proof}

\begin{figure*}[tbh]
\centering
    \begin{tabular}{cc}
        \includegraphics[width=.14\linewidth]{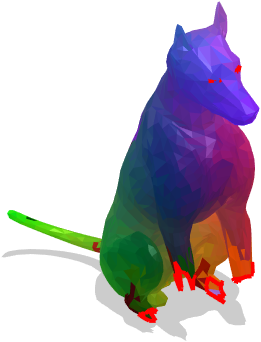} & \includegraphics[width=.14\linewidth]{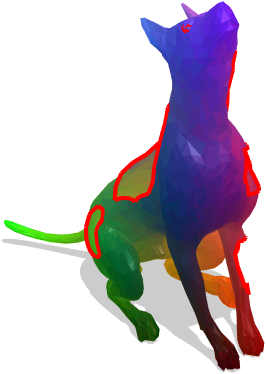} \\
        \includegraphics[width=.14\linewidth]{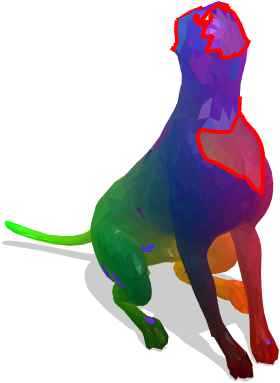} & \includegraphics[width=.11\linewidth]{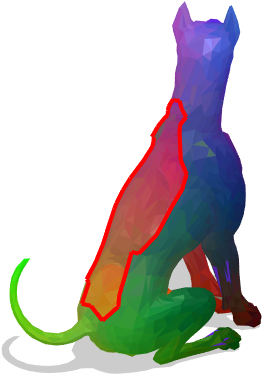}
    \end{tabular}
    \begin{tabular}{cc}
        \includegraphics[width=.17\linewidth]{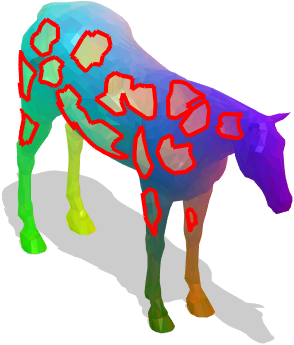} & \includegraphics[width=.17\linewidth]{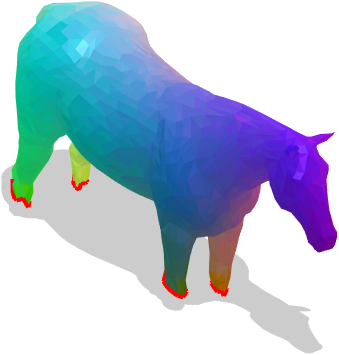} \\
        \includegraphics[width=.17\linewidth]{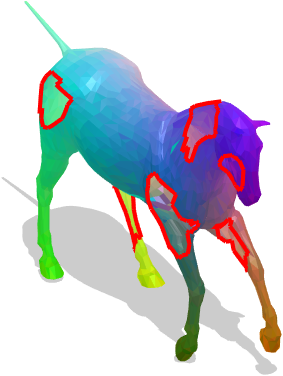} & \includegraphics[width=.17\linewidth]{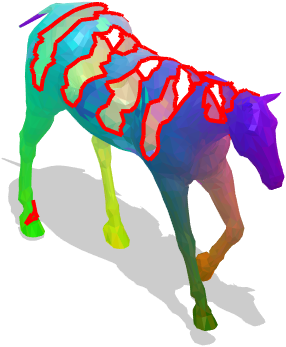}
    \end{tabular}
    \caption{Our results on deformed partial shapes of the TOSCA dog and horse classes. Although partial-to-partial matching is a very challenging setting, our method produces high quality results. These results serve as a proof-of-concept that our method is applicable to such partial settings.}
    \label{fig:partialQual}
\end{figure*}

\section{Details on Experimental Setup}
In this section we clarify further details regarding the experimental setup.

\paragraph{Synchronisation of \textsc{ZoomOut} results.}
Running \textsc{ZoomOut} produces the pairwise correspondences $\{P_{ij}\}$,  and the pairwise functional maps $\{\fm_{ij}\}$  between all pairs of shapes. In general, the pairwise correspondences and the pairwise functional maps are not cycle-consistent. In order to obtain cycle-consistent shape-to-universe representations we apply synchronisation, as we explain next.

For isometric shapes, the spectra of the Laplace-Beltrami operator are the same for all shapes. Moreover, the pairwise functional maps have a band-diagonal structure, where the band-width depends on the largest multiplicity of the spectra, see~\cite{maron2016point} for details. Hence, we first set all elements of $\fm_{ij}$ to $0$ that are outside the diagonal band of radius $r=6$,~i.e.~$\fm_{ij}(s,t) = 0$ whenever $|s-t| > r$. Subsequently, we project the ``band-filtered'' $\fm_{ij}$ onto $\orth_b$ using singular value decomposition (since isometric shapes must lead to orthogonal functional maps). Eventually, we use orthogonal transformation synchronisation~\cite{Singer:2011ba, Bernard_2015_CVPR} in order to obtain the shape-to-universe functional maps $\{\fm_i\}$, which we stack into the block-matrix $Q$.

In order to obtain $U$, we first represent all LBO eigenfunctions $\Phi_i$ in terms of the universe,~i.e.~$\Phi_i\fm_i$, and then stack them all into the matrix
\begin{align}
    \mathbf{\Psi} = 
    \begin{bmatrix}
    \Phi_1\fm_1 \\ \vdots \\ \Phi_k\fm_k
    \end{bmatrix}\,.
\end{align}
Eventually, we obtain the shape-to-universe matching matrix $U \in \perm$ by performing a constrained clustering, where the features used for clustering are the inner products between the eigenfunctions in the universe representation. This means that the rows of the matrix $\mathbf{\Psi}\mathbf{\Psi}^\top$ are used as features for clustering. This is motivated by the (constrained) clustering interpretation of partial permutation synchronisation, see~e.g.~\cite{Tron:kUBrCZhd,bernard2019synchronisation}. For performing the clustering, we first apply the Successive Block Rotation Algorithm (SBRA)~\cite{bernard2019synchronisation}, followed by projecting the result onto the set $\perm$. Further details can be found in~\cite{bernard2019synchronisation}.

\paragraph{Symmetries.}
Bringing symmetric shapes into correspondence is well-known to be a challenging problem~\cite{sun2018joint}. To avoid symmetric flips it is common practice to incorporate an additional symmetry descriptor into shape matching formulations, as for example done in~\cite{cosmo2017consistent}. We follow this path, and make use of a symmetry descriptor for finding the \textsc{ZoomOut} initialisation. We emphasise that the symmetry descriptor is not used after the multi-shape matching methods have been initialised.

\paragraph{Parameters.}
For the experiments that consider full shapes (on the TOSCA, FAUST and SCAPE datasets), there exists a bijection between all shapes within a category, hence $m_i=m_j$ for all $i,j$. Thus, we set the universe size $d$ to the number of vertices present in each shape,~i.e~$d=m_i$. In all experiments, we fix the relative objective improvement to machine precision,~i.e.~$\epsilon \approx 2.2{\cdot}10^{-16}$. %

\section{Multi-Matching of Partial Shapes}

This section will provide more details on the experiments of Section~\ref{sec:partial} in the main paper. 
Strictly speaking, partial shapes do not fulfil the isometry assumption due to missing parts that affect geodesic distances. However, in the case of finding a matching between a full shape and shape with holes, both of the same class, there is a close relationship
(see Fig.~\ref{fig:teaser}).
\cite{rodola2016partial} discusses how spectral properties change in this case, and the necessary adjustment of our pipeline is based on this theory.
Finding correspondences for partial-to-partial cases is a much more challenging and open problem, and due to a lack of robust initialisations, as a proof-of-concept we show results on small datasets with only minor deformations. 
See Fig.~\ref{fig:partialQual} for qualitative results.%

\paragraph{Problem formulation.}
Partiality can be handled naturally in our approach due to the universe formulation. 
Since each $P_i$ maps the points of $\mathcal{X}_i$ to a \textit{subset} of the $d$ universe points, this case boils down to choosing the correct universe points. 
Assuming that all given partial shapes represent parts of the same full shape, the optimal universe would model exactly the full geometry. 

The functional maps $\fm_i$ need to be adjusted slightly for this setting. 
As explained in Section~\ref{sec:fms} in the main paper, square orthogonal $\fm$s model area-preservation. 
This is meaningful for isometries, but, since partial shapes literally miss some areas, it does not hold in this case. 
Instead, we use the theory about partial functional maps provided in \cite{rodola2016partial}.
According to \cite{rodola2016partial}, functional maps for the partial case have \textit{slanted} diagonals and the area preservation only holds in one direction. Additionally, some LBO eigenfunctions of the full shape do not appear on the partial shapes, such that each $\fm_i$ needs to map to a higher dimensional space, and only choose the corresponding eigenfunctions there. Therefore, instead of being square, the matrices are rectangular, and we adjust the definition of the orthogonality constraint as
\begin{align}\label{eq:rectorth}
    \orth_b^P = \left\{ \fm \in \mathbb{R}^{b\times b'}: \fm \fm^\top = \mathbf{I}_b \right\} \,,
\end{align} 
where $b' > b$, and we chose $b' = 1.2b$ in all our partial experiments. Note that it does not require any modification in our optimisation pipeline and our problem formulation is capable of handling this more challenging case.

\paragraph{Initialisation.} 
For the full multi-shape matching pipeline, we used functional maps \cite{ovsjanikov2012functional} and \textsc{ZoomOut} \cite{melzi19zoomout} to get an estimation for each $\fm_{ij}$. 
However, they are not well-suited for directly performing partial-to-partial matching.
Instead, we directly compute $\{P_i\}$ between each partial and the full shape using a combination of SHOT \cite{salti2014shot}, Heat Kernel Signature \cite{bronstein2010scale}, Wave Kernel Signature \cite{aubry2011wave} and symmetry descriptors, which are subsequently refined using a partiality-adjusted version of \textsc{ZoomOut} to obtain the shape-to-universe initialisation for \textsc{IsoMuSh}.

\section{Additional Qualitative Results}
We show qualitative results on FAUST in Fig.~\ref{fig:faustQual}, the complete results on SCAPE in Fig.~\ref{fig:scapeAlls}, as well as additional qualitative results of different TOSCA classes in Fig.~\ref{fig:toscaHorse} and Fig.~\ref{fig:toscaMichael}.
Fig.~\ref{fig:faustQual} shows the main source of errors for our method of FAUST, which are front-back flips. 
This is due to the intrinsic front-back near-symmetry of humans and descriptors that do not discriminate well between these. 
All \textsc{ZoomOut} variants (including the initialisation for our method) suffer from this problem.
Note that even though the correspondence is flipped, our results are still cycle-consistent.

\begin{figure*}[tbh]
\footnotesize
\centering
\begin{tabular}{c|c}
    Colour legend & Ours (bijective \cmark, cycle-consistent \cmark) \\
    \includegraphics[width=.115\linewidth]{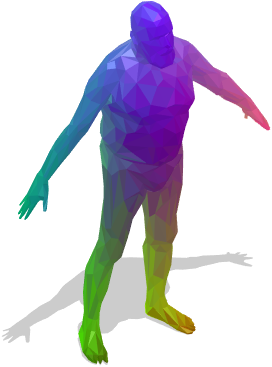} &
    \includegraphics[width=.085\linewidth]{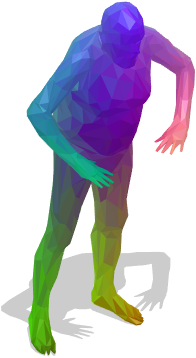} 
    \includegraphics[width=.075\linewidth]{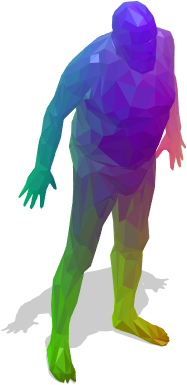} 
    \includegraphics[width=.065\linewidth]{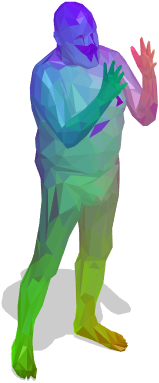} 
    \includegraphics[width=.09\linewidth]{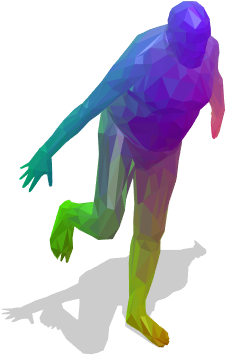} 
    \includegraphics[width=.08\linewidth]{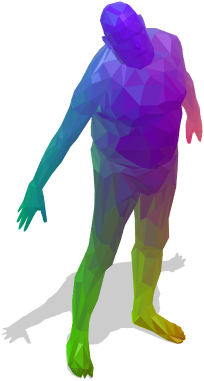} 
    \includegraphics[width=.075\linewidth]{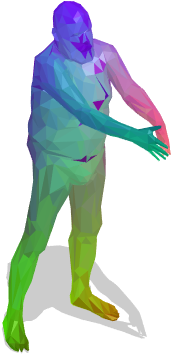} 
    \includegraphics[width=.075\linewidth]{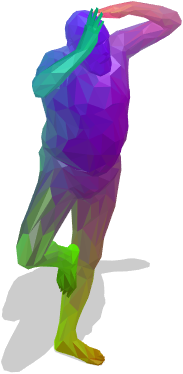} 
    \includegraphics[width=.095\linewidth]{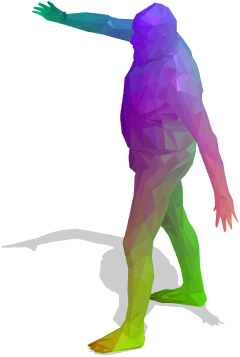} 
    \includegraphics[width=.085\linewidth]{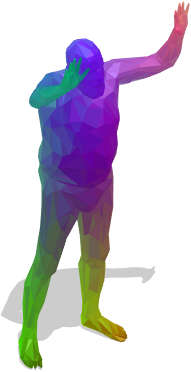}  \\
    \hline \vspace{-0.35cm} &  \\ 
     & \textsc{HiPPI} (bijective \cmark, cycle-consistent \cmark) \\
    & \includegraphics[width=.085\linewidth]{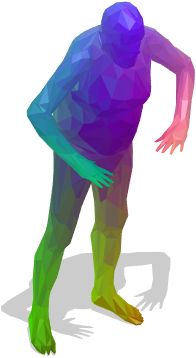} 
    \includegraphics[width=.075\linewidth]{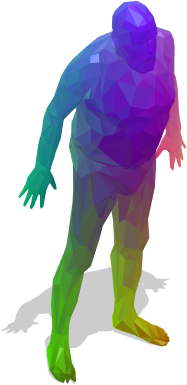} 
    \includegraphics[width=.065\linewidth]{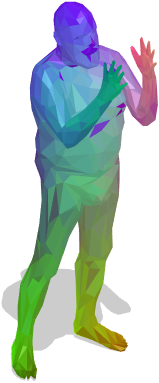} 
    \includegraphics[width=.09\linewidth]{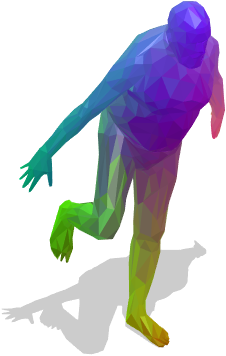} 
    \includegraphics[width=.08\linewidth]{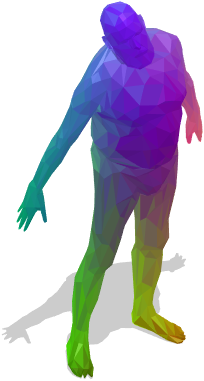} 
    \includegraphics[width=.075\linewidth]{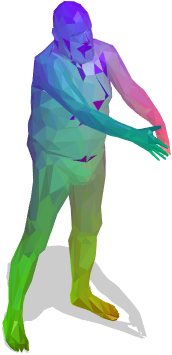} 
    \includegraphics[width=.075\linewidth]{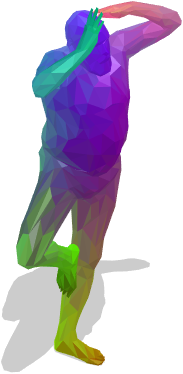} 
    \includegraphics[width=.095\linewidth]{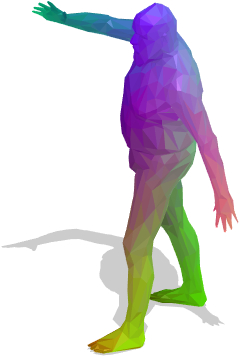} 
    \includegraphics[width=.085\linewidth]{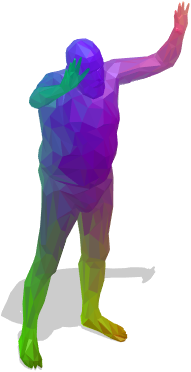}  \\
    \hline \vspace{-0.35cm} &  \\ 
    & \textsc{ZoomOut+Sync} (bijective \cmark, cycle-consistent \cmark) \\
    & \includegraphics[width=.085\linewidth]{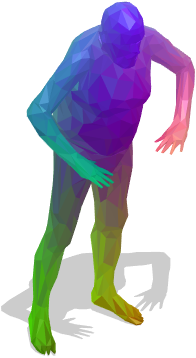} 
    \includegraphics[width=.075\linewidth]{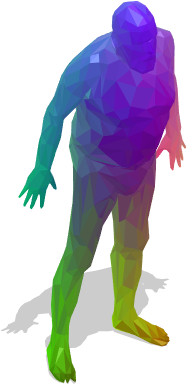} 
    \includegraphics[width=.065\linewidth]{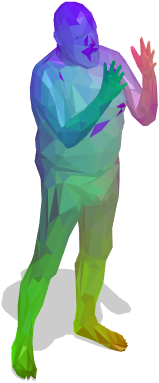} 
    \includegraphics[width=.09\linewidth]{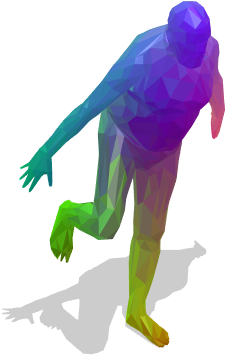} 
    \includegraphics[width=.08\linewidth]{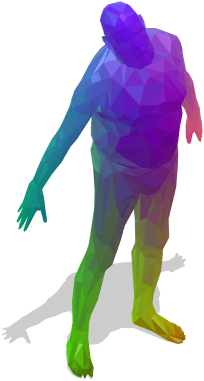} 
    \includegraphics[width=.075\linewidth]{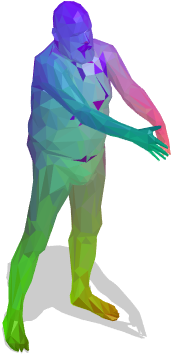} 
    \includegraphics[width=.075\linewidth]{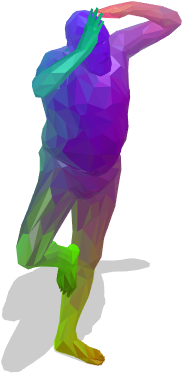} 
    \includegraphics[width=.095\linewidth]{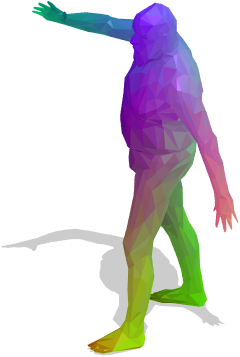} 
    \includegraphics[width=.085\linewidth]{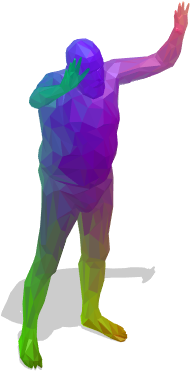}  \\
    \hline \vspace{-0.35cm} &  \\ 
    & \textsc{ZoomOut} (bijective \xmark, cycle-consistent \xmark) \\
    & \includegraphics[width=.085\linewidth]{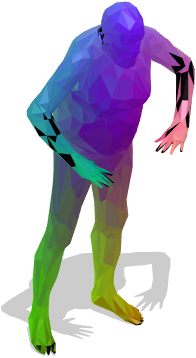} 
    \includegraphics[width=.075\linewidth]{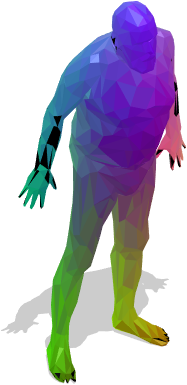} 
    \includegraphics[width=.065\linewidth]{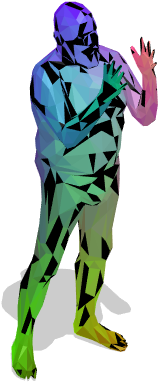} 
    \includegraphics[width=.09\linewidth]{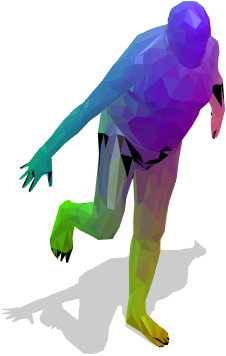} 
    \includegraphics[width=.08\linewidth]{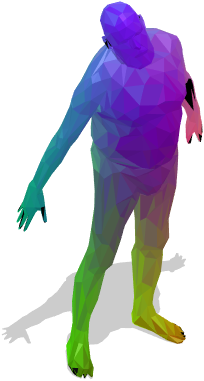} 
    \includegraphics[width=.075\linewidth]{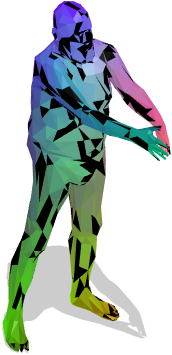} 
    \includegraphics[width=.075\linewidth]{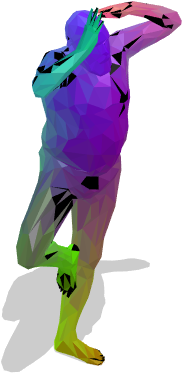} 
    \includegraphics[width=.095\linewidth]{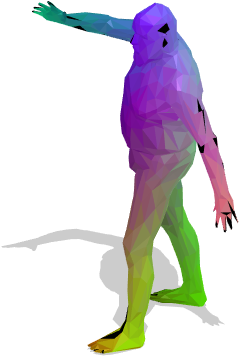} 
    \includegraphics[width=.085\linewidth]{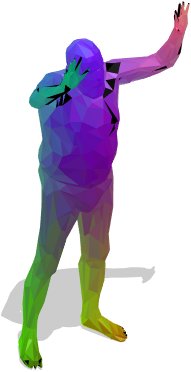} \\
    \hline \vspace{-0.35cm} &  \\ 
    & \textsc{ConsistentZoomOut} (bij. \xmark, cycle-cons. \xmark$^\ddagger$) \\
    & \includegraphics[width=.085\linewidth]{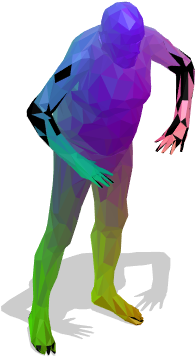} 
    \includegraphics[width=.075\linewidth]{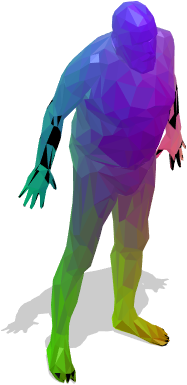} 
    \includegraphics[width=.065\linewidth]{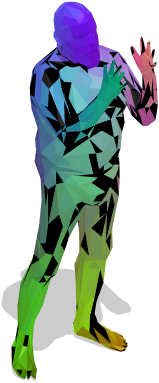} 
    \includegraphics[width=.09\linewidth]{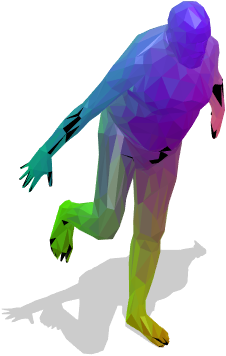} 
    \includegraphics[width=.08\linewidth]{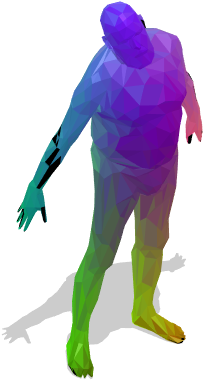} 
    \includegraphics[width=.075\linewidth]{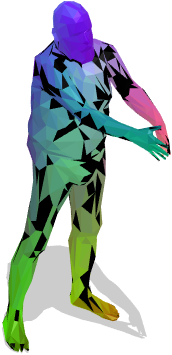} 
    \includegraphics[width=.075\linewidth]{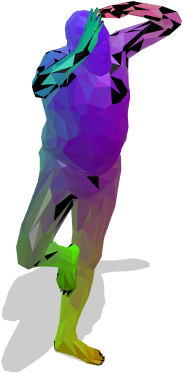} 
    \includegraphics[width=.095\linewidth]{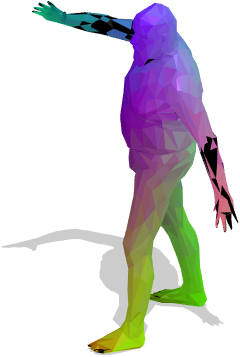} 
    \includegraphics[width=.085\linewidth]{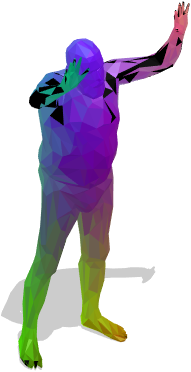}
\end{tabular}
    \caption{Qualitative examples of correspondences on FAUST registrations. Black indicates no matching due to non-bijectivity. Our results contain the least noise and are cycle-consistent. $^\ddagger$\textsc{ConsistentZoomOut} obtains cycle-consistent $\fm_{ij}$, but not $P_{ij}$. (Best viewed magnified on screen)}
    \label{fig:faustQual} 
\end{figure*}

\begin{figure*}[tbh]
\footnotesize
\centering
\begin{tabular}{c|c}
    Colour legend & Ours (bijective \cmark, cycle-consistent \cmark) \\
    \includegraphics[width=.05\linewidth]{figures/qual/zoomout-syncPfromPhi-sbra-ini-scape15-det=0_mesh000_mesh004_source2.png} &
    \includegraphics[width=.06\linewidth]{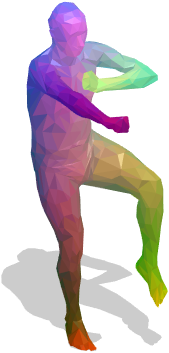}
    \includegraphics[width=.055\linewidth]{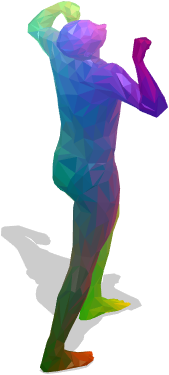}
    \includegraphics[width=.06\linewidth]{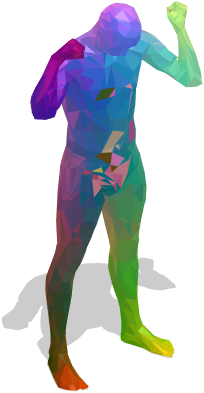}
    \includegraphics[width=.065\linewidth]{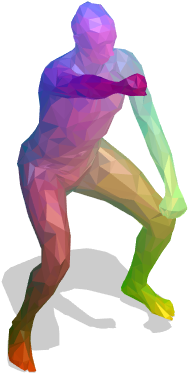}
    \includegraphics[width=.06\linewidth]{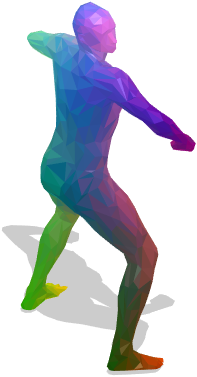}
    \includegraphics[width=.06\linewidth]{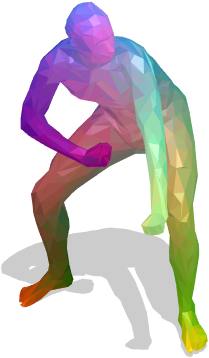}
    \includegraphics[width=.07\linewidth]{figures/qual/zoomout-syncPfromPhi-sbra-scape15-det=0_mesh000_mesh042_target2.png}
    \includegraphics[width=.05\linewidth]{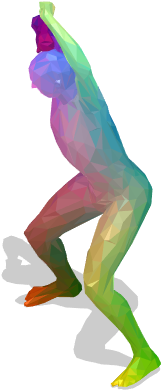}
    \includegraphics[width=.05\linewidth]{figures/qual/zoomout-syncPfromPhi-sbra-scape15-det=0_mesh000_mesh052_target2.png}
    \includegraphics[width=.05\linewidth]{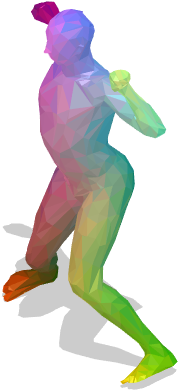}
    \includegraphics[width=.07\linewidth]{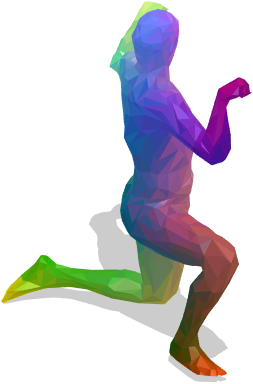}
    \includegraphics[width=.065\linewidth]{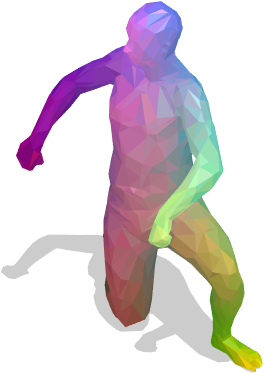}
    \includegraphics[width=.06\linewidth]{figures/qual/zoomout-syncPfromPhi-sbra-scape15-det=0_mesh000_mesh061_target2.png}
    \includegraphics[width=.05\linewidth]{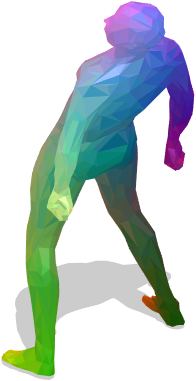}
     \\
    \hline \vspace{-0.35cm} &  \\ 
     & \textsc{HiPPI} (bijective \cmark, cycle-consistent \cmark) \\
    & 
    \includegraphics[width=.06\linewidth]{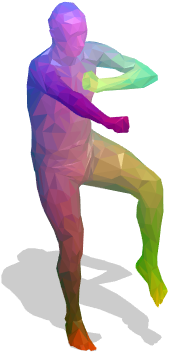}
    \includegraphics[width=.055\linewidth]{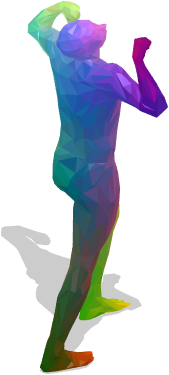}
    \includegraphics[width=.06\linewidth]{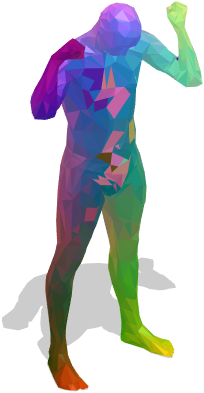}
    \includegraphics[width=.065\linewidth]{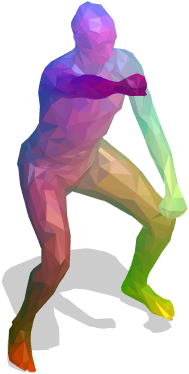}
    \includegraphics[width=.06\linewidth]{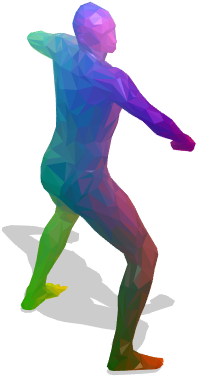}
    \includegraphics[width=.06\linewidth]{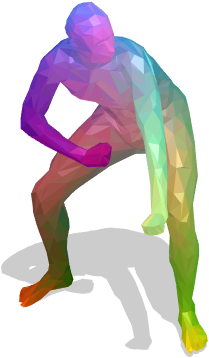}
    \includegraphics[width=.07\linewidth]{figures/qual/zoomout-syncPfromPhi-sbra-hippi-orig-scape15_mesh000_mesh042_target2.png}
    \includegraphics[width=.05\linewidth]{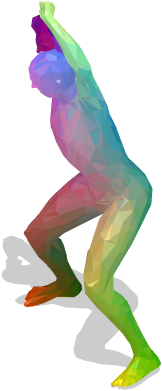}
    \includegraphics[width=.05\linewidth]{figures/qual/zoomout-syncPfromPhi-sbra-hippi-orig-scape15_mesh000_mesh052_target2.png}
    \includegraphics[width=.05\linewidth]{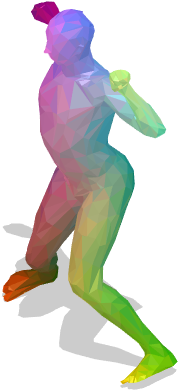}
    \includegraphics[width=.07\linewidth]{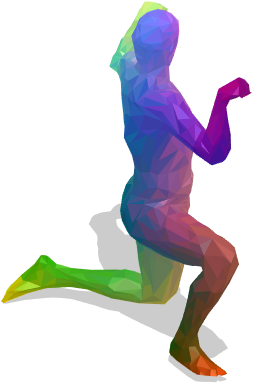}
    \includegraphics[width=.065\linewidth]{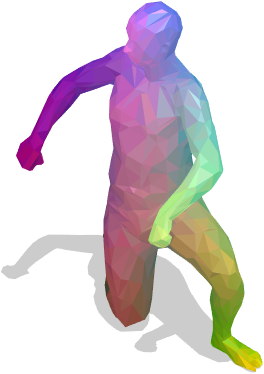}
    \includegraphics[width=.06\linewidth]{figures/qual/zoomout-syncPfromPhi-sbra-hippi-orig-scape15_mesh000_mesh061_target2.png}
    \includegraphics[width=.05\linewidth]{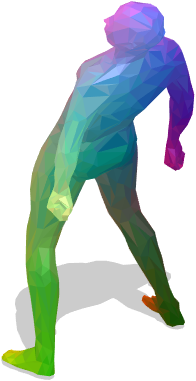}
    \\
    \hline \vspace{-0.35cm} &  \\ 
    & \textsc{ZoomOut+Sync} (bijective \cmark, cycle-consistent \cmark) \\
    & 
    \includegraphics[width=.06\linewidth]{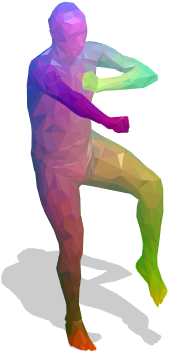}
    \includegraphics[width=.055\linewidth]{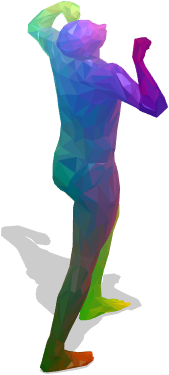}
    \includegraphics[width=.06\linewidth]{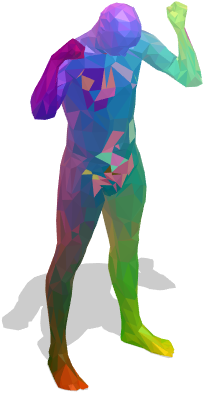}
    \includegraphics[width=.065\linewidth]{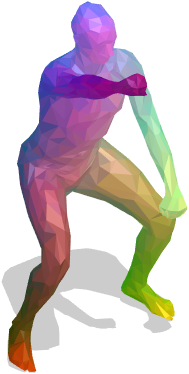}
    \includegraphics[width=.06\linewidth]{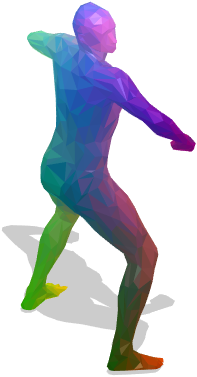}
    \includegraphics[width=.06\linewidth]{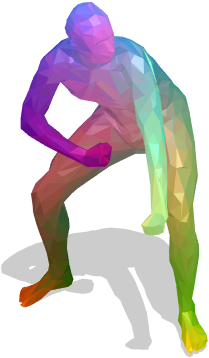}
    \includegraphics[width=.07\linewidth]{figures/qual/zoomout-syncPfromPhi-sbra-ini-scape15-det=0_mesh000_mesh042_target2.png}
    \includegraphics[width=.05\linewidth]{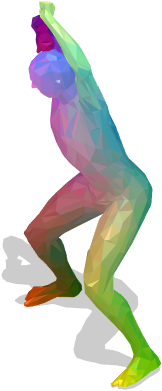}
    \includegraphics[width=.05\linewidth]{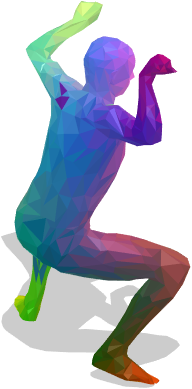}
    \includegraphics[width=.05\linewidth]{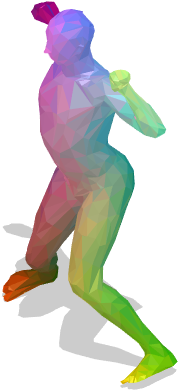}
    \includegraphics[width=.07\linewidth]{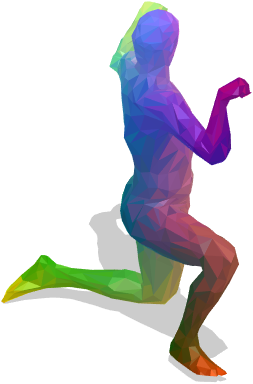}
    \includegraphics[width=.065\linewidth]{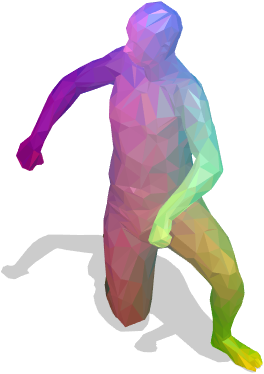}
    \includegraphics[width=.06\linewidth]{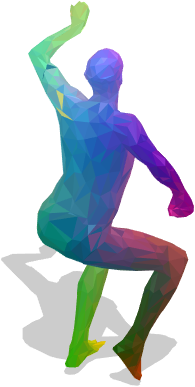}
    \includegraphics[width=.05\linewidth]{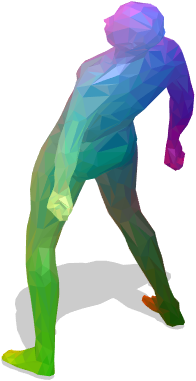}
    \\
    \hline \vspace{-0.35cm} &  \\ 
    & \textsc{ZoomOut} (bijective \xmark, cycle-consistent \xmark) \\
    & 
    \includegraphics[width=.06\linewidth]{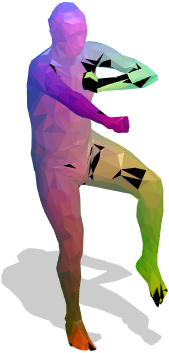}
    \includegraphics[width=.055\linewidth]{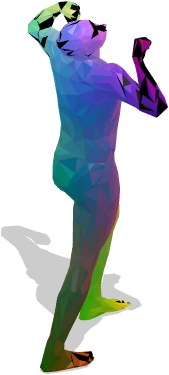}
    \includegraphics[width=.06\linewidth]{figures/qual/zoomout-orig-scape15_mesh000_mesh021_target2.png}
    \includegraphics[width=.065\linewidth]{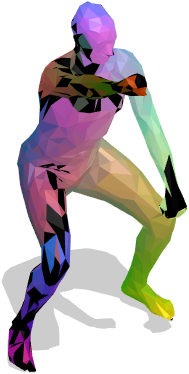}
    \includegraphics[width=.06\linewidth]{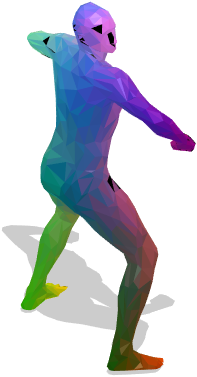}
    \includegraphics[width=.06\linewidth]{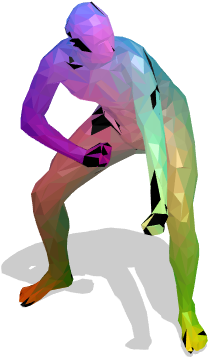}
    \includegraphics[width=.07\linewidth]{figures/qual/zoomout-orig-scape15_mesh000_mesh042_target2.png}
    \includegraphics[width=.05\linewidth]{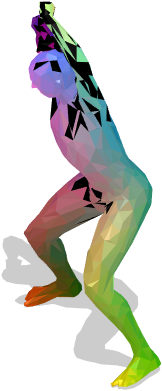}
    \includegraphics[width=.05\linewidth]{figures/qual/zoomout-orig-scape15_mesh000_mesh052_target2.png}
    \includegraphics[width=.05\linewidth]{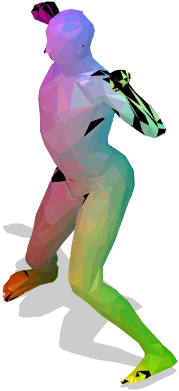}
    \includegraphics[width=.07\linewidth]{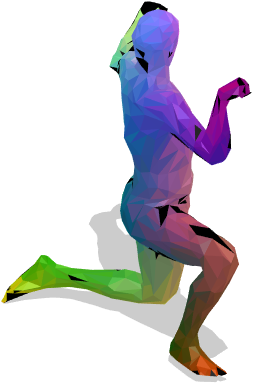}
    \includegraphics[width=.065\linewidth]{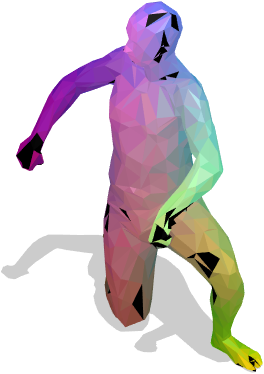}
    \includegraphics[width=.06\linewidth]{figures/qual/zoomout-orig-scape15_mesh000_mesh061_target2.png}
    \includegraphics[width=.05\linewidth]{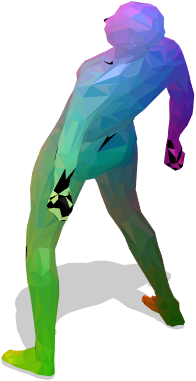}
    \\
    \hline \vspace{-0.35cm} &  \\ 
    & \textsc{ConsistentZoomOut} (bij. \xmark, cycle-cons. \xmark$^\ddagger$) \\
    & 
    \includegraphics[width=.06\linewidth]{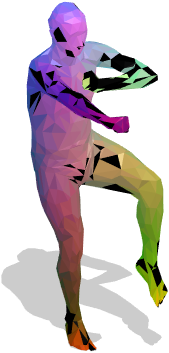}
    \includegraphics[width=.055\linewidth]{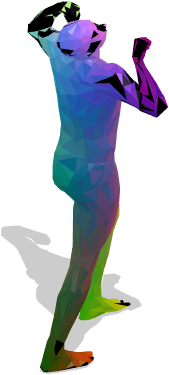}
    \includegraphics[width=.06\linewidth]{figures/qual/consistent_zoomout_scape15_lb100_nf2000_mesh000_mesh021_target2.png}
    \includegraphics[width=.065\linewidth]{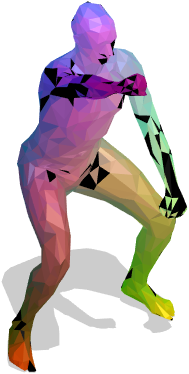}
    \includegraphics[width=.06\linewidth]{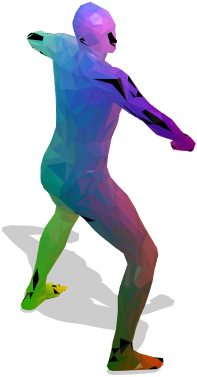}
    \includegraphics[width=.06\linewidth]{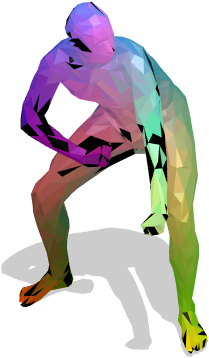}
    \includegraphics[width=.07\linewidth]{figures/qual/consistent_zoomout_scape15_lb100_nf2000_mesh000_mesh042_target2.png}
    \includegraphics[width=.05\linewidth]{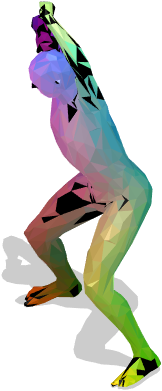}
    \includegraphics[width=.05\linewidth]{figures/qual/consistent_zoomout_scape15_lb100_nf2000_mesh000_mesh052_target2.png}
    \includegraphics[width=.05\linewidth]{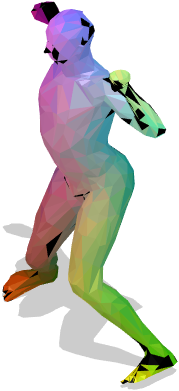}
    \includegraphics[width=.07\linewidth]{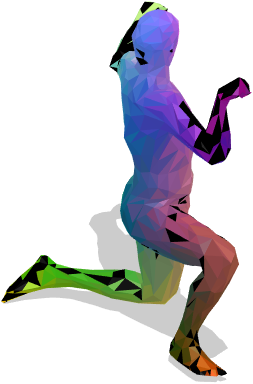}
    \includegraphics[width=.065\linewidth]{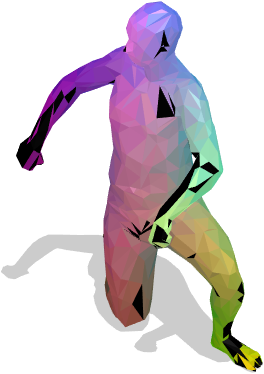}
    \includegraphics[width=.06\linewidth]{figures/qual/consistent_zoomout_scape15_lb100_nf2000_mesh000_mesh061_target2.png}
    \includegraphics[width=.05\linewidth]{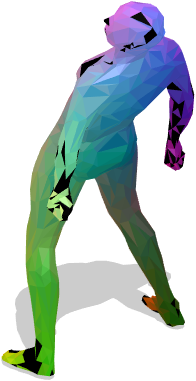}
\end{tabular}
    \caption{Complete qualitative results of correspondences on SCAPE. Black indicates no matching due to non-bijectivity. Our results contain the least noise and are cycle-consistent, although there is one outlier shape where neither \textsc{HiPPI} nor our method could recover from a bad initialisation. $^\ddagger$\textsc{ConsistentZoomOut} obtains cycle-consistent $\fm_{ij}$, but not $P_{ij}$. (Best viewed magnified on screen)}
    \label{fig:scapeAlls} 
\end{figure*}

\begin{figure*}[tbh]
\footnotesize
\centering
\begin{tabular}{c|c}
    Colour legend & Ours (bijective \cmark, cycle-consistent \cmark) \\
    \includegraphics[width=.115\linewidth]{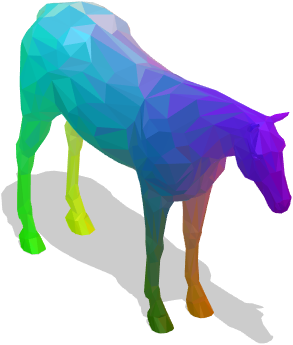} &
    \includegraphics[width=.11\linewidth]{figures/qual/zoomout-syncPfromPhi-sbra_horse0_horse5_target.png}
    \includegraphics[width=.11\linewidth]{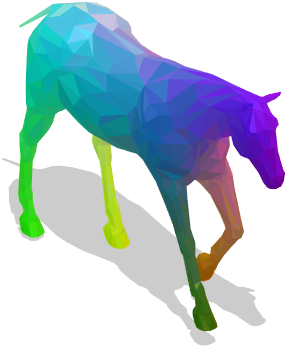}
    \includegraphics[width=.13\linewidth]{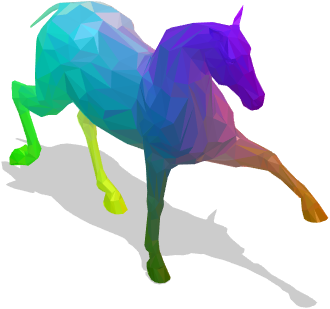}
    \includegraphics[width=.11\linewidth]{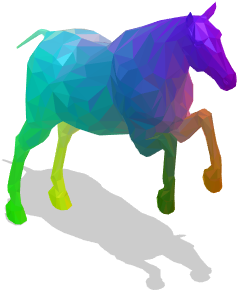}
    \includegraphics[width=.11\linewidth]{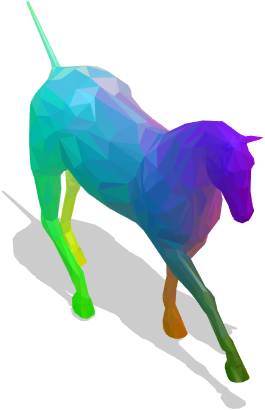}
    \includegraphics[width=.08\linewidth]{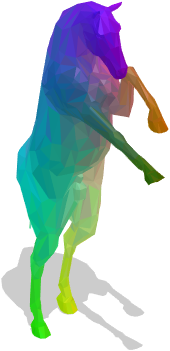}
    \includegraphics[width=.11\linewidth]{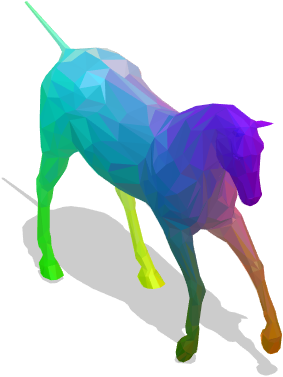}
     \\
    \hline \vspace{-0.35cm} &  \\ 
     & \textsc{HiPPI} (bijective \cmark, cycle-consistent \cmark) \\
    & 
    \includegraphics[width=.11\linewidth]{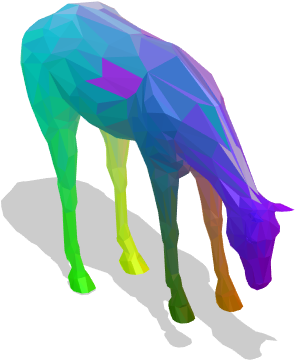}
    \includegraphics[width=.11\linewidth]{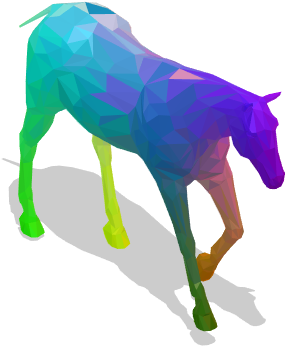}
    \includegraphics[width=.13\linewidth]{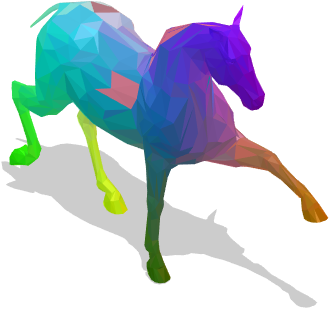}
    \includegraphics[width=.11\linewidth]{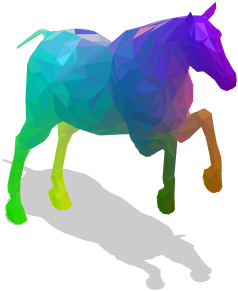}
    \includegraphics[width=.11\linewidth]{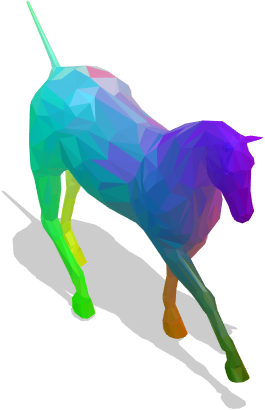}
    \includegraphics[width=.08\linewidth]{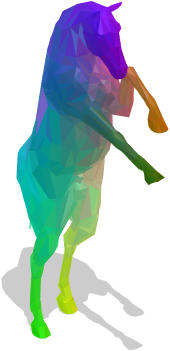}
    \includegraphics[width=.11\linewidth]{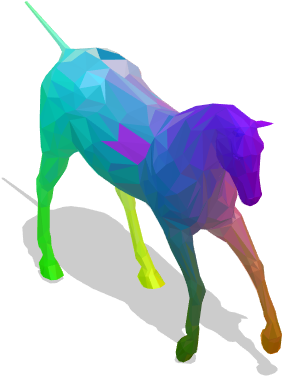}
    \\
    \hline \vspace{-0.35cm} &  \\ 
    & \textsc{ZoomOut+Sync} (bijective \cmark, cycle-consistent \cmark) \\
    & 
    \includegraphics[width=.11\linewidth]{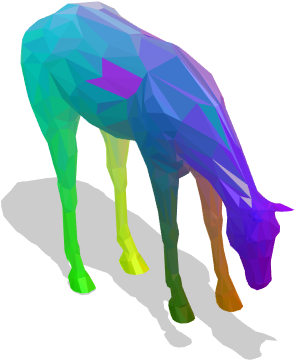}
    \includegraphics[width=.11\linewidth]{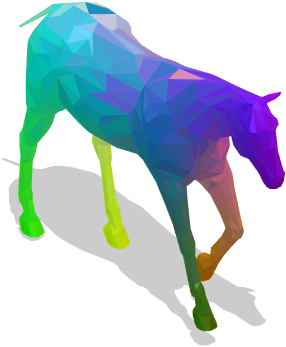}
    \includegraphics[width=.13\linewidth]{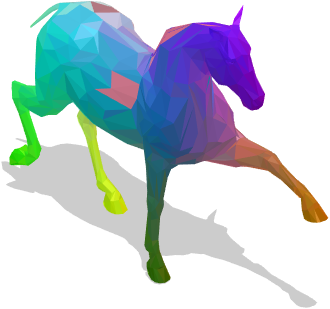}
    \includegraphics[width=.11\linewidth]{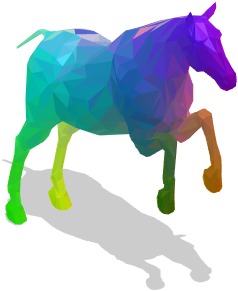}
    \includegraphics[width=.11\linewidth]{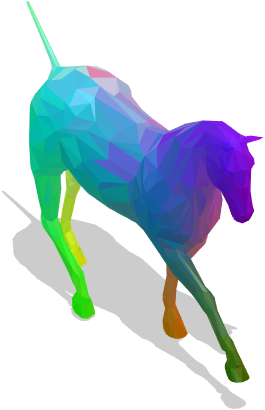}
    \includegraphics[width=.08\linewidth]{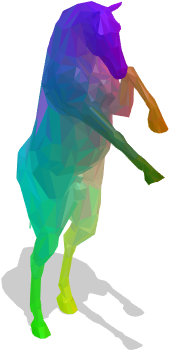}
    \includegraphics[width=.11\linewidth]{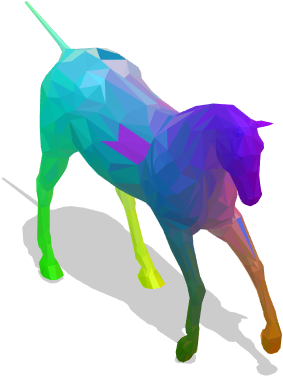}
    \\
    \hline \vspace{-0.35cm} &  \\ 
    & \textsc{ZoomOut} (bijective \xmark, cycle-consistent \xmark) \\
    & 
    \includegraphics[width=.11\linewidth]{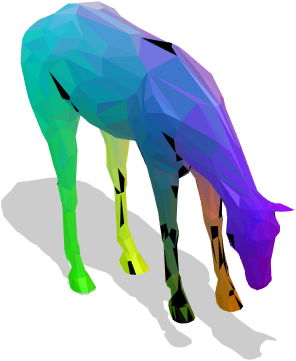}
    \includegraphics[width=.11\linewidth]{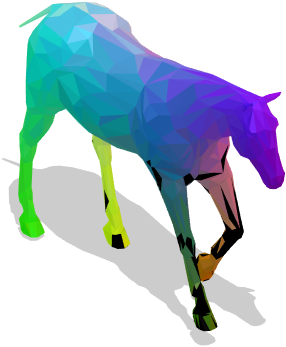}
    \includegraphics[width=.13\linewidth]{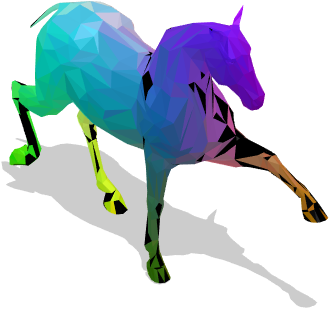}
    \includegraphics[width=.11\linewidth]{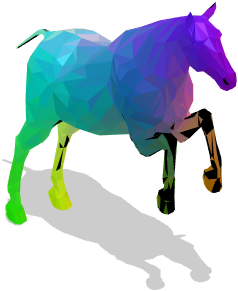}
    \includegraphics[width=.11\linewidth]{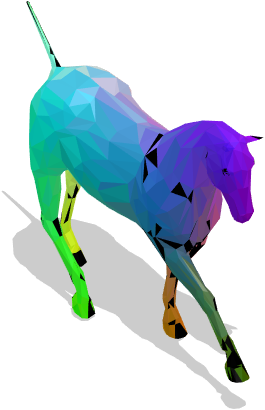}
    \includegraphics[width=.08\linewidth]{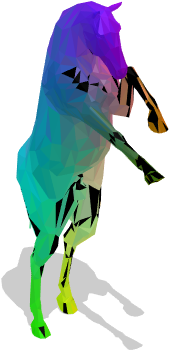}
    \includegraphics[width=.11\linewidth]{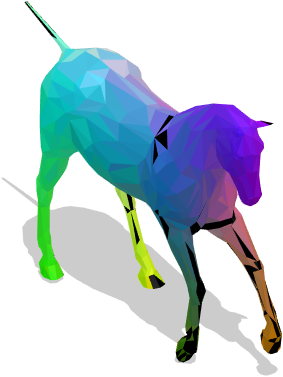}
    \\
    \hline \vspace{-0.35cm} &  \\ 
    & \textsc{ConsistentZoomOut} (bij. \xmark, cycle-cons. \xmark$^\ddagger$) \\
    & 
    \includegraphics[width=.11\linewidth]{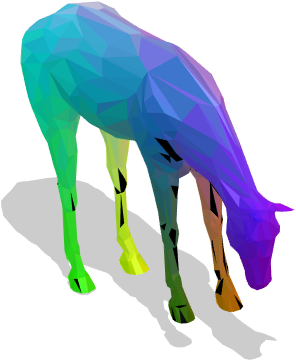}
    \includegraphics[width=.11\linewidth]{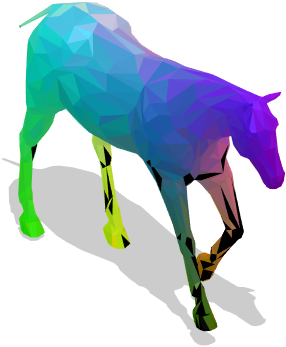}
    \includegraphics[width=.13\linewidth]{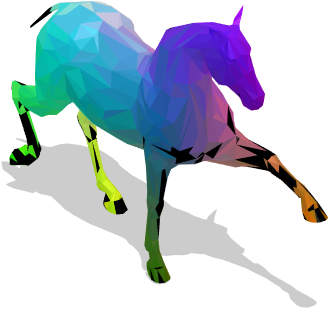}
    \includegraphics[width=.11\linewidth]{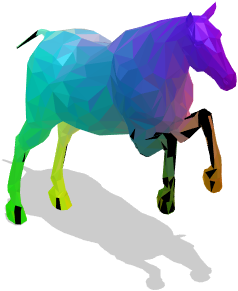}
    \includegraphics[width=.11\linewidth]{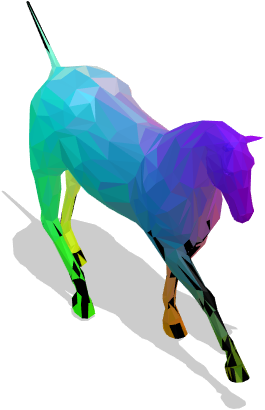}
    \includegraphics[width=.08\linewidth]{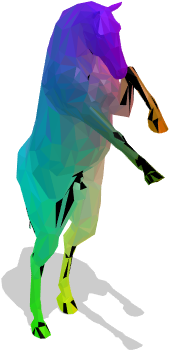}
    \includegraphics[width=.11\linewidth]{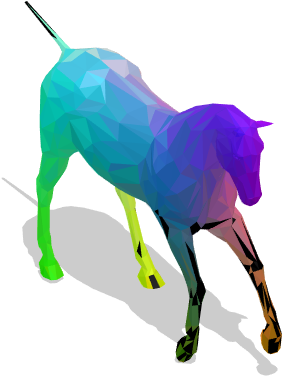}
\end{tabular}
    \caption{Qualitative examples of correspondences on TOSCA horse. Black indicates no matching due to non-bijectivity. Our results contain the best results and are cycle-consistent. $^\ddagger$\textsc{ConsistentZoomOut} obtains cycle-consistent $\fm_{ij}$, but not $P_{ij}$. (Best viewed magnified on screen)}
    \label{fig:toscaHorse} 
\end{figure*}

\begin{figure*}[tbh]
\footnotesize
\centering
\begin{tabular}{c|c}
    Colour legend & Ours (bijective \cmark, cycle-consistent \cmark) \\
    \includegraphics[width=.115\linewidth]{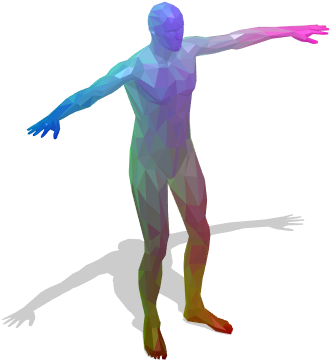} &
    \includegraphics[width=.08\linewidth]{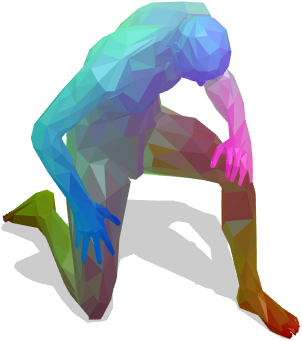}
    \includegraphics[width=.065\linewidth]{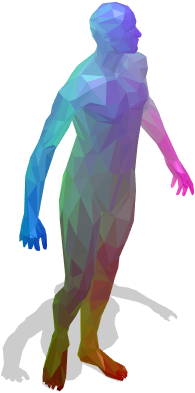}
    \includegraphics[width=.08\linewidth]{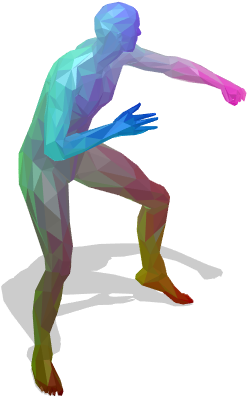}
    \includegraphics[width=.11\linewidth]{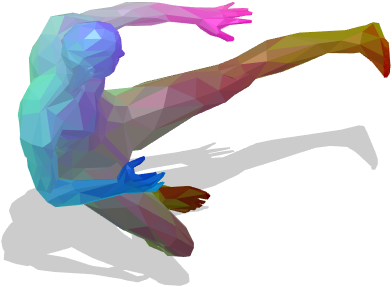}
    \includegraphics[width=.09\linewidth]{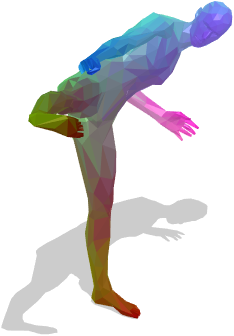}
    \includegraphics[width=.08\linewidth]{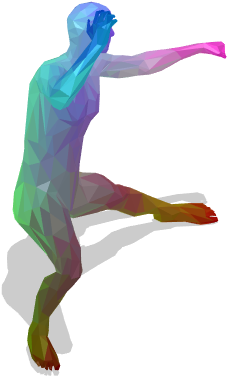}
    \includegraphics[width=.08\linewidth]{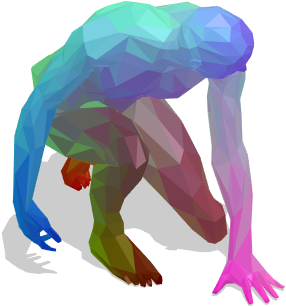}
    \includegraphics[width=.06\linewidth]{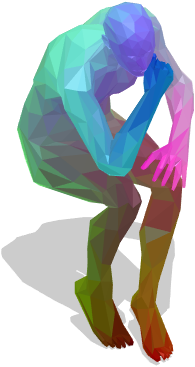}
    \includegraphics[width=.07\linewidth]{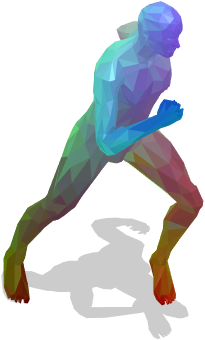}
    \includegraphics[width=.08\linewidth]{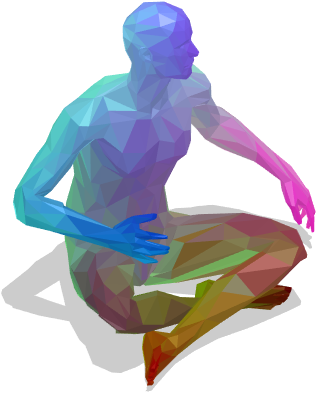}
     \\
    \hline \vspace{-0.35cm} &  \\ 
     & \textsc{HiPPI} (bijective \cmark, cycle-consistent \cmark) \\
    & 
    \includegraphics[width=.08\linewidth]{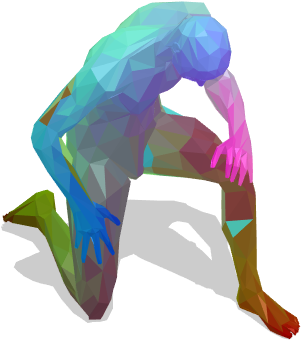}
    \includegraphics[width=.065\linewidth]{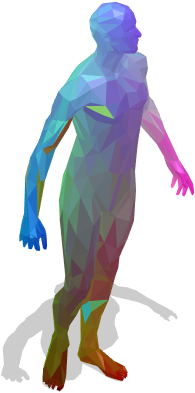}
    \includegraphics[width=.08\linewidth]{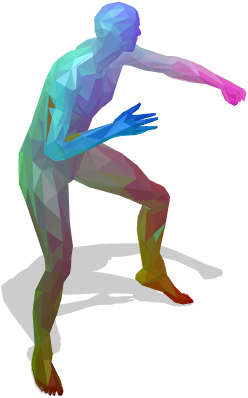}
    \includegraphics[width=.11\linewidth]{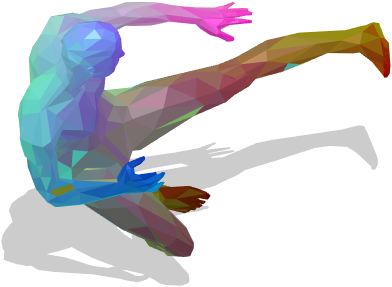}
    \includegraphics[width=.09\linewidth]{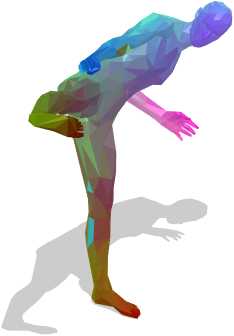}
    \includegraphics[width=.08\linewidth]{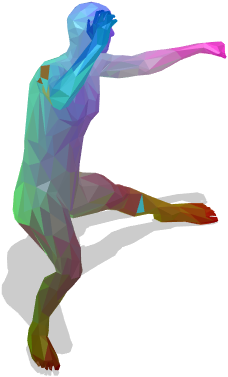}
    \includegraphics[width=.08\linewidth]{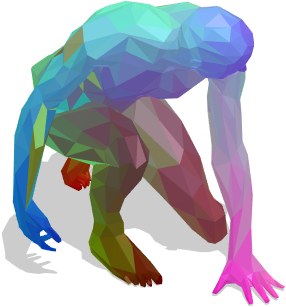}
    \includegraphics[width=.06\linewidth]{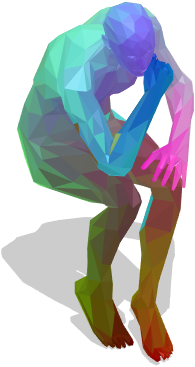}
    \includegraphics[width=.07\linewidth]{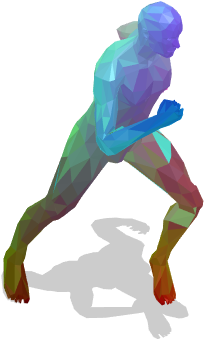}
    \includegraphics[width=.08\linewidth]{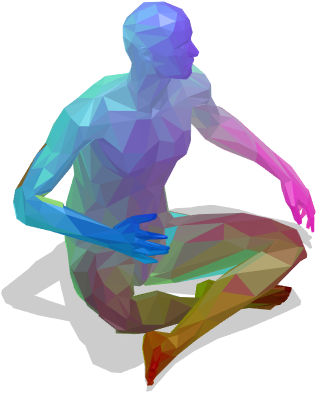}
    \\
    \hline \vspace{-0.35cm} &  \\ 
    & \textsc{ZoomOut+Sync} (bijective \cmark, cycle-consistent \cmark) \\
    & 
    \includegraphics[width=.08\linewidth]{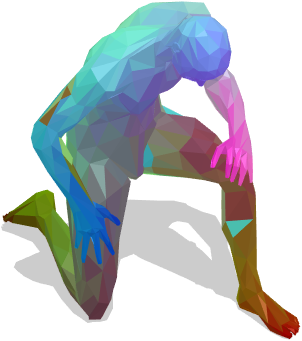}
    \includegraphics[width=.065\linewidth]{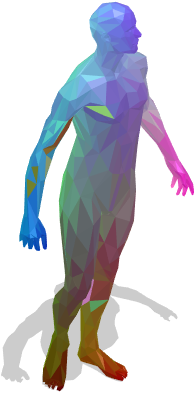}
    \includegraphics[width=.08\linewidth]{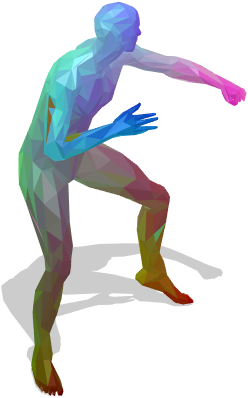}
    \includegraphics[width=.11\linewidth]{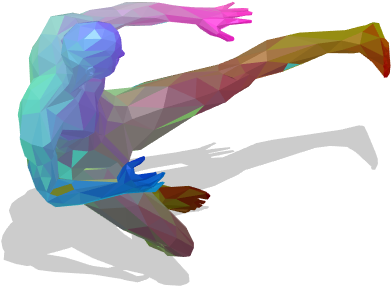}
    \includegraphics[width=.09\linewidth]{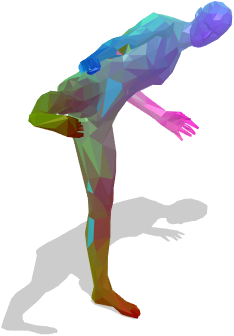}
    \includegraphics[width=.08\linewidth]{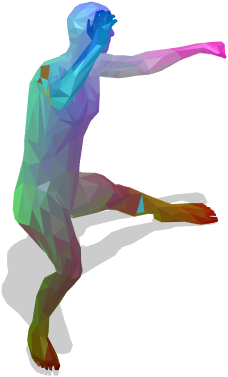}
    \includegraphics[width=.08\linewidth]{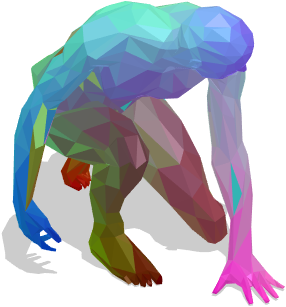}
    \includegraphics[width=.06\linewidth]{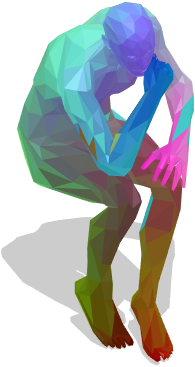}
    \includegraphics[width=.07\linewidth]{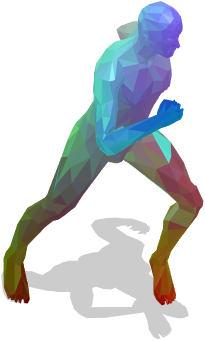}
    \includegraphics[width=.08\linewidth]{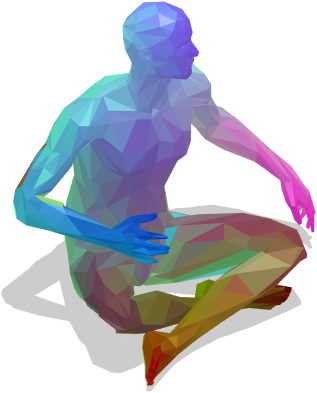}
    \\
    \hline \vspace{-0.35cm} &  \\ 
    & \textsc{ZoomOut} (bijective \xmark, cycle-consistent \xmark) \\
    & 
    \includegraphics[width=.08\linewidth]{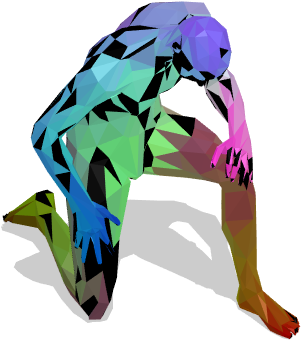}
    \includegraphics[width=.065\linewidth]{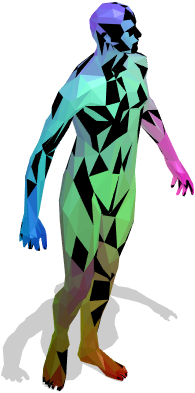}
    \includegraphics[width=.08\linewidth]{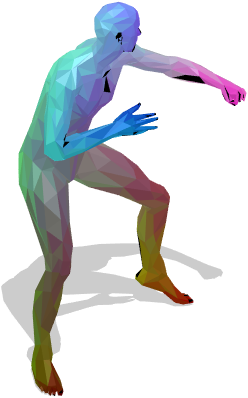}
    \includegraphics[width=.11\linewidth]{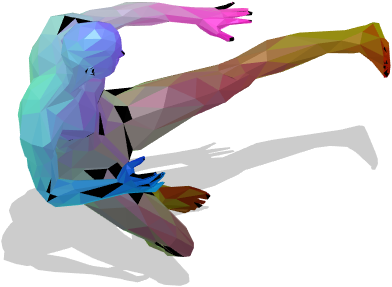}
    \includegraphics[width=.09\linewidth]{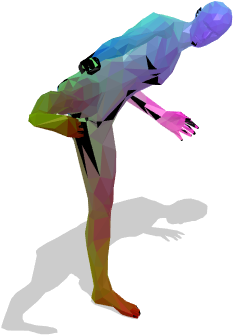}
    \includegraphics[width=.08\linewidth]{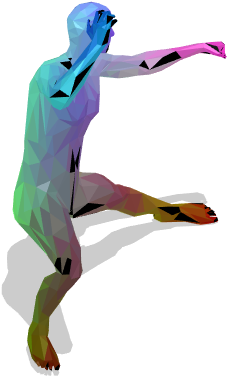}
    \includegraphics[width=.08\linewidth]{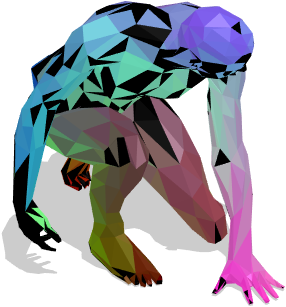}
    \includegraphics[width=.06\linewidth]{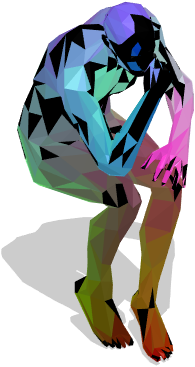}
    \includegraphics[width=.07\linewidth]{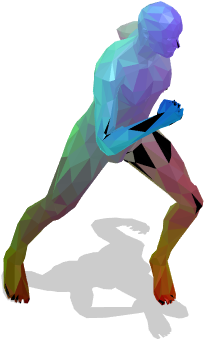}
    \includegraphics[width=.08\linewidth]{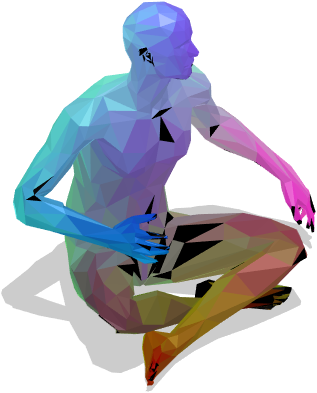}
    \\
    \hline \vspace{-0.35cm} &  \\ 
    & \textsc{ConsistentZoomOut} (bij. \xmark, cycle-cons. \xmark$^\ddagger$) \\
    & 
    \includegraphics[width=.08\linewidth]{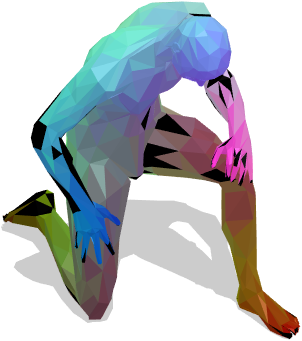}
    \includegraphics[width=.065\linewidth]{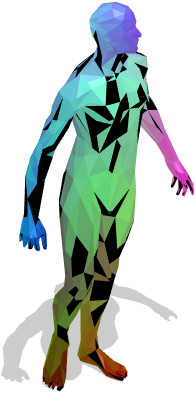}
    \includegraphics[width=.08\linewidth]{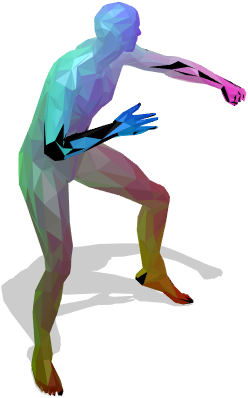}
    \includegraphics[width=.11\linewidth]{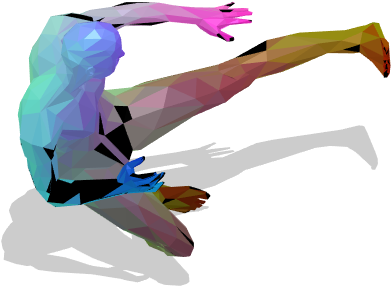}
    \includegraphics[width=.09\linewidth]{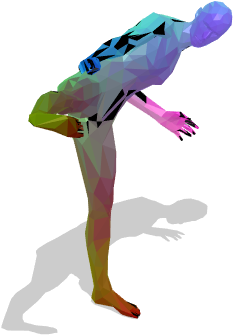}
    \includegraphics[width=.08\linewidth]{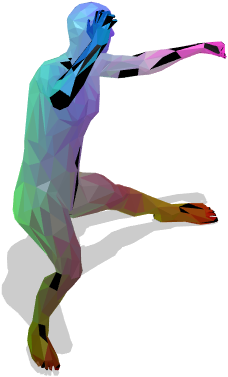}
    \includegraphics[width=.08\linewidth]{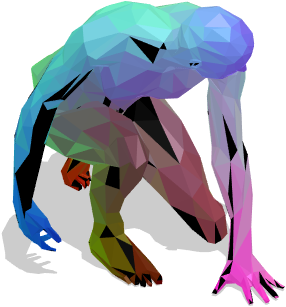}
    \includegraphics[width=.06\linewidth]{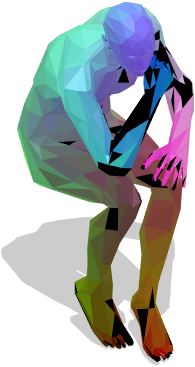}
    \includegraphics[width=.07\linewidth]{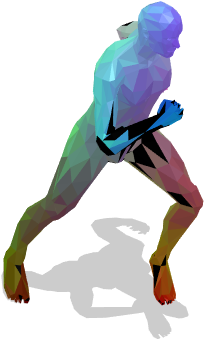}
    \includegraphics[width=.08\linewidth]{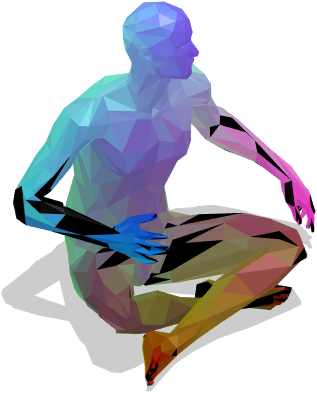}
\end{tabular}
    \caption{Qualitative examples of correspondences on TOSCA michael. Black indicates no matching due to non-bijectivity. $^\ddagger$\textsc{ConsistentZoomOut} obtains cycle-consistent $\fm_{ij}$, but not $P_{ij}$. (Best viewed magnified on screen)}
    \label{fig:toscaMichael} 
\end{figure*}

\end{document}